\documentclass{article} 
\usepackage{arxiv,times}

\usepackage{amsmath,amsfonts,bm}

\def\eqref#1{equation~\ref{#1}}

\def\1{\bm{1}}

\DeclareMathAlphabet{\mathsfit}{\encodingdefault}{\sfdefault}{m}{sl}
\SetMathAlphabet{\mathsfit}{bold}{\encodingdefault}{\sfdefault}{bx}{n}

\newcommand{\E}{\mathbb{E}}

\newcommand{\R}{\mathbb{R}}

\usepackage{placeins}
\usepackage{hyperref}
\usepackage{url}
\usepackage{amsmath, amssymb, amsthm, amsfonts}
\usepackage{algorithm}
\usepackage{algpseudocode}
\usepackage{graphicx}
\usepackage{xcolor}
\usepackage{bm}
\usepackage{hyperref}
\usepackage{subfig}
\usepackage{cite}
\usepackage{mathtools}
\usepackage{cleveref}
\usepackage{commath}
\usepackage{wrapfig}

\algrenewcommand\algorithmicrequire{\textbf{Input:}}
\algrenewcommand\algorithmicensure{\textbf{Output:}}

\newcommand{\var}{\text{var}}
\newcommand{\p}{\mathbb{P}}
\newcommand{\e}{\mathbb{E}}
\newcommand{\cov}{\text{cov}}
\newcommand{\fracc}[1]{\frac{1}{{#1}}}
\newcommand{\z}{\vec{z}}
\newcommand{\tr}{\text{tr}}
\newcommand{\bn}{\vec{B}} 
\newcommand{\bni}[1]{B_{#1}} 
\newcommand{\bnilower}[1]{b_{#1}} 

\theoremstyle{plain}
\newtheorem{theorem}{Theorem}[section]
\newtheorem{lemma}{Lemma}[section]

\theoremstyle{definition}

\newtheorem{notation}[lemma]{Notation}

\crefname{theorem}{Theorem}{Theorems}
\crefname{lemma}{Lemma}{Lemmas}
\crefname{proposition}{Proposition}{Propositions}
\crefname{definition}{Definition}{Definitions}
\crefname{remark}{Remark}{Remarks}

\title{Boundary Detection for High-Dimensional Point Clouds}

\author{Abigail Hickok\\
Department of Mathematics\\
Columbia University\\
New York, NY, USA \\
\texttt{ah3966@columbia.edu} \\
\And
Valerio Galanti \\
Department of Computer Science\\
Columbia University\\
New York, NY, USA \\
\texttt{vg2551@columbia.edu}
\AND
Osama Al Dalahmah \\
Department of Pathology and Cell Biology\\
Columbia University\\
New York, NY, USA \\
\texttt{oa2298@cumc.columbia.edu}
\And
Andrew J. Blumberg\\
Department of Mathematics,\\
Department of Computer Science,\\
Irving Institute for Cancer Dynamics \\
Columbia University\\
New York, NY, USA \\
\texttt{andrew.blumberg@columbia.edu}
}

\iclrfinalcopy 
\begin{document}

\maketitle

\begin{abstract}
We introduce two new methods for boundary detection in high-dimensional point cloud data, intended to work in situations with non-uniform density, high ambient dimension, and substantial noise. Our first method fits a separating hyperplane on local neighborhoods after doing local dimension reduction---this method works best when one can reliably estimate the intrinsic dimension of the data. Our second method measures the directional uniformity of local neighborhoods, and has the best overall performance on synthetic data. In addition to pointwise numerical boundary scores, our algorithms yield an estimated normal vector at each boundary point. We validate our methodology on synthetic data as well as two real examples: we analyze spatial transcriptomics data, where we show that we can detect the boundary of a tumor, and we also study LLM activation geometry.
\end{abstract}

\section{Introduction}

The manifold hypothesis specifies that point cloud data is sampled from a distribution supported on a low-dimensional manifold.  A natural variant of this setup is to assume that the support is instead a manifold with boundary.  In this case, determining the location of the boundary and a normal vector to it become fundamental questions.  Algorithms to determine the boundary have many applications.
For example, in spatial transcriptomics data, each point on a grid has a gene expression vector that lies in $\mathbb{R}^n$ for $n$ approximately 20000.  Detecting boundary regions in genomic space allow us to determine qualitative changes in underlying tissue, for example transitions from cancerous tumor regions to healthy cells. Another application of interest is to study the geometry of LLM activations. Recent mechanistic interpretability work has discovered that residual-stream activations often lie on curved submanifolds rather than linear subspaces~\citep{gurnee2023, engels2024, gurnee2026, berdnikov2026}. Model behavior can be influenced by adding a ``steering'' vector at intermediate activations \citep{turner2023, panickssery2023steering}, but these interventions work best when one is able to stay on the underlying manifold \citep{wurgaft2026manifold}. Knowing the boundary points as well as having estimated boundary normals is useful for steering interventions and other interpretability problems. More broadly, boundary detection can be used for general-purpose methods such as clustering \citep{gravitational, border_peeling_2020}.

\subsection{Contributions}

We introduce two general-purpose methods for identifying boundary points and calculating normal vectors at the boundary. Both methods are fast and practical for large data sets.

Our hyperplane method works the best among all tested methods when the local dimension can be reliably estimated. When the estimated dimension $m$ is larger than the true dimension, it finds the correct boundary of the data as though it were $m$-dimensional. Our directional uniformity method is more robust when noise levels prevent accurate dimension estimates.

The hyperplane method is mathematically grounded by theorems proving that boundary points can be reliably distinguished from interior points in a way that scales well with intrinsic dimension. We empirically validate both methods on synthetic data.

Finally, we show that our methodology is effective on spatial transcriptomics data for identifying a tumor boundary, and we also apply it to find the boundary of LLM activations for the character-counting manifold of \citep{gurnee2026} for open-weight models. 

We have released the source code and scripts for reproducing our figures and experiments on github: \url{https://github.com/aghickok/boundary_detection}.

\subsection{Related Work}
There are several boundary-detection methods that rely on the fact that boundary points have lower density than interior points. BORDER \citep{xia_border_2006} counts the number of ``reverse $k$ nearest neighbors'' (R$k$NN), i.e., the number of points for which the query point is a $k$NN. Boundary points tend to have fewer $k$NNs. \citet{border_peeling_2020} and \citet{border_peeling_2021} also use reverse $k$NNs for boundary detection, as an intermediate step towards clustering. In more detail, \citet{border_peeling_2020} and \citet{border_peeling_2021} calculate $\sum_{x_j \in \text{RkNN}(x_i)} K(x_i, x_j)$, where $K$ is a Gaussian or Cauchy kernel, respectively. Reverse unreachability (RU) \citep{reverse_unreachability_2018} is different; it uses the representation matrix $W$ from \citet{LLE} to approximate each point $x_i$ as a linear combination $\sum_j W_{ij} x_j$ of its neighbors, and then reverse unreachability is defined as $\sum_j \mathbf{1}_{W_{ji} \leq 0}$. All of these methods perform poorly when the point cloud has non-uniform density.

Some methods are only designed to be used for very low-dimensional data. The direction centrality measure \citep{dcm} is defined for data in $\R^2$ and the level-set method \citep{LSM_2015} was only applied to $2$D data. Both methods can be theoretically generalized to higher dimensions, but scale poorly with ambient dimension and are not practical for data that is more than two dimensional.

The last class of methods uses geometric structure to detection boundaries in data of any dimension. Border Edge Pattern Selection (BEPS) \citep{beps}, CO \citep{gravitational}, and LoDD \citep{lodd} all start by projecting the nearest neighbors $\{x_{ij}\}$ to the unit sphere centered at the query point $x_i$. Let $\vec{y}_{ij} := (x_{ij} - x_i)/\norm{x_{ij} - x_i}$ and define $\vec{v_i} := \fracc{k} \sum_{j=1}^k \vec{y}_{ij}$. BEPS is similar to our hyperplane method---the BEPS score $\fracc{k}\sum_j \mathbf{1}_{y_{ij} \cdot \vec{v_i} \geq 0}$ is the percentage of neighbors on the positive side of the hyperplane orthogonal to $\vec{v_i}$. It is far less robust to noise than either of our methods, though, because the direction of the vector $\vec{v_i}$ is sensitive to noisy perturbations off the manifold, especially on interior points where $\e[\vec{v_i}] = 0$.

The CO score \citep{gravitational} uses the same vector $\vec{v_i}$ but scales by density and measures the alignment between neighbors. At query point $x_i$, the score is $\sum_{j=1}^k m_i^{-1} m_j^{-1} \vec{v_i} \cdot \vec{v_j}$, where $m_i^{-1} = \sum_j \norm{x_{ij} - x_i}$ is a ``mass'' term that is high in regions with low density. Like the other density-based methods, CO performs poorly on data with non-uniform density.

The LoDD score is $\omega (\sum_{i=1}^d \lambda_i)^2 + (1 - \omega) \frac{2d}{d-1}\sum_i \sum_{j \neq i} \lambda_i \lambda_j$, where $\{\lambda_i\}$ is the set of eigenvalues of the covariance matrix for the centered, normalized neighbor vectors $\{ \vec{y}_{ij}\}$ and $\omega \in [0, 1]$ is a hyperparameter that is set to $.5$ in \citet{lodd} and in our experiments in this paper. The LoDD score has a maximum of $1$ that is achieved when the eigenvalues are equal and sum to $1$. It is higher for interior points because $\{ \vec{y}_{ij}\}$ is uniformly distributed on the $d$-dimensional sphere. LoDD has the strongest overall performance of all the existing methods we tested in the literature, but it is less robust than ours to noise, especially in high ambient dimensions. Unlike our methods (or BEPS or CO), LoDD does not produce normal vectors at each point---just pointwise boundary scores.

\section{Hyperplane method for boundary detection}

\begin{figure}
    \centering
    \vspace{-8mm}
    \subfloat[]{\includegraphics[width = .2\textwidth]{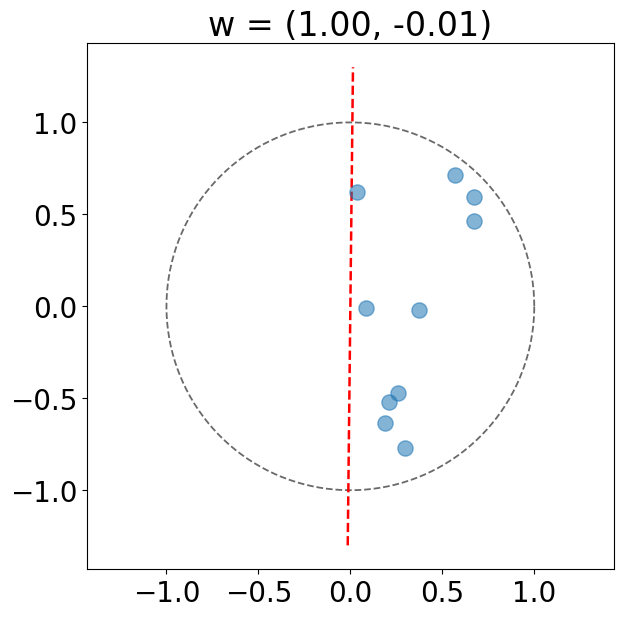}}
    \subfloat[]{\includegraphics[width = .2\textwidth]{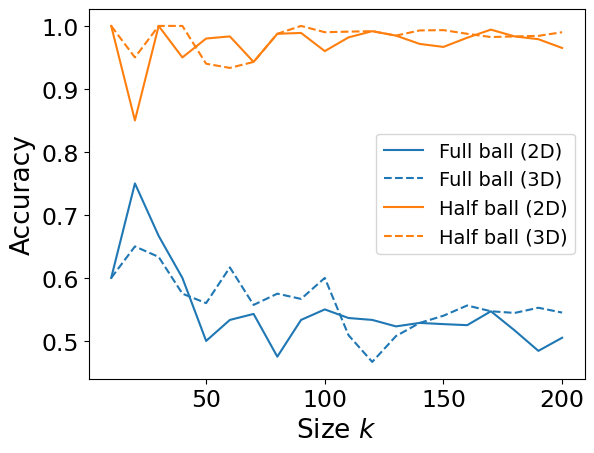}}
    \subfloat[]{\includegraphics[width = .2\textwidth]{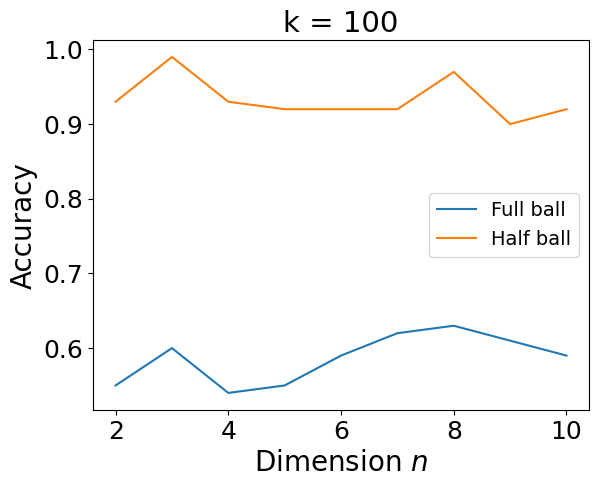}}
    \caption{Hyperplane optimization on points sampled from half and full balls. Accuracy is the percentage of points that lie on the positive side of the hyperplane orthogonal to $\vec{w}$. 
}
    \label{fig:boundary_ball_model}
\end{figure}

The idea of our hyperplane (HP) boundary-detection algorithm, detailed in \Cref{alg:boundary_hyperplane}, is that if $x$ is a boundary point on an $n$-dimensional manifold, then it has a neighborhood that is diffeomorphic to an $n$-dimensional half ball. A half space is defined by a hyperplane in $\mathbb{R}^n$. If $x$ is a boundary point, then it is possible to find a hyperplane such that most its neighbors lie on the same side of the hyperplane. On the other hand, if $x$ is an interior point, then any hyperplane will split the neighborhood in half, with roughly half the points on one side and half on the other side. 

We assign each point $x$ a \emph{boundary score} $s(x) \in [0, 1]$ that is the percentage of its neighborhood lying on the same side of an ``optimal'' hyperplane. Boundary points have scores near $1$, and manifold points have scores near $.5$. Near a transverse intersection of manifolds, the score is usually higher than $.5$ but lower than $1$---for example, near a transverse intersection of two manifolds, the score is near $.75$. (See \Cref{fig:benchmark_boundary_dimension_failure}, for example.)\footnote{For example, suppose a point cloud is sampled from the intersection of the $xy$ plane and the $xz$ plane in $\R^3$. For a point on the $xy$ plane with small but positive $y$ coordinate, the optimal hyperplane $P$ is one that is parallel to the $xz$ plane with normal vector $\vec{w} = (0, -1, 0)$. The boundary score will be close to $.75$ because about half the points on the $xy$ plane and \emph{all} the points on the $xz$ plane will be on the positive side of $P$.} The algorithm steps are described below.

\subsection{The algorithm} 
\begin{enumerate}
    \item {\bf Local PCA:} For each point $x$, we use local PCA to reduce the dimension of its local neighborhood $\{x_i\}_{i=1}^k$, where $k$ is a parameter that controls the size of the neighborhood. If the dimension $n$ of the manifold is known, then we reduce to $n$ dimensions; otherwise, we use a spectral gap threshold condition. Let $\{\vec{z_i}\}_{i=1}^k$ be the neighborhood after dimension reduction and after centering so that $x$ is mapped to $\vec{0}$.

    \item {\bf Hyperplane optimization:} We solve for an ``optimal'' hyperplane that best separates the data points onto a single side of the hyperplane. Every hyperplane is determined by an orthogonal vector $\vec{w}$ with norm $\norm{\vec{w}} = 1$. A point $\vec{z_i}$ lies on the ``positive'' side of the hyperplane if $\vec{z}_i \cdot \vec{w} > 0$. A hyperplane is good if most of the points lie on the positive side of the hyperplane. To formulate this, we solve the optimization problem
    \begin{equation}\label{eq:boundary_optimization}
        \text{argmax}_{\vec{w}} \, f(\vec{w}) = \sum_{i=1}^k \vec{z_i} \cdot \vec{w}\,,  \qquad
        \text{subject to } g(\vec{w}) = \|\vec{w}\|^2 = 1\,.
\end{equation}
    The function $f(\vec{w})$ is proportional to the mean (signed) distance to the hyperplane, and the constraint normalizes the sum so that it cannot be arbitrarily large. By the method of Lagrange multipliers,
    the solution to our optimization problem is $\vec{w} = \sum_i \vec{z_i}/\norm{\sum_i \vec{z_i}}\,.$

    \item {\bf Boundary score:} The boundary score $s(x) \in [0, 1]$ for point $x$ is the percentage of neighboring points that lie on the positive side of the hyperplane. Boundary and interior points have scores close to $1$ and $.5$, respectively.
\end{enumerate}

As an alternative to our hyperplane optimization step, one could instead aim to maximize the number of points such that $\vec{z_i} \cdot \vec{w} \geq 0$. In contrast, maximizing $f(\vec{w})$ encourages but does not force points to lie on the positive hyperplane side. 
The advantage of our formulation is that it has a fast, closed-form solution. \Cref{alg:boundary_hyperplane} runs in about $1$s for $N = 10,000$ points in $\R^{15}$.

\begin{algorithm}
\caption{Boundary Detection}\label{alg:boundary_hyperplane}
\begin{algorithmic}[1]

\Require Point cloud $\mathcal{X}$, neighborhood size $k$, PCA spectral gap threshold $\tau$ (default $\tau = 7$)
\Ensure Boundary score $s(x) \in [0, 1]$ for each $x \in \mathcal{X}$

\ForAll{$x \in \mathcal{X}$}
    \State $\mathcal{N} \leftarrow k\text{-nearest neighbors of } x$ in $\mathcal{X}$
    \State $\{z_i\} \leftarrow \text{PCA}(\mathcal{N},\, \tau)$ \Comment{Use threshold $\tau$ to determine dimension}
    \State $\{z_i\} \leftarrow \{z_i - z\}$ \Comment{Center so projection of $x$ is at origin}
    \State $\vec{w} \leftarrow \dfrac{\sum_i \vec{z}_i}{\left\|\sum_i \vec{z}_i\right\|}$ \Comment{$\vec{w}$ is orthogonal to an optimal hyperplane}
    \State $m \leftarrow \bigl|\{z_i \mid \vec{z}_i \cdot \vec{w} \geq 0\}\bigr|$ \Comment{Count neighbors on positive side of the hyperplane}
    \State $s(x) \leftarrow m \;/\; k$ \Comment{Divide by total size of the neighborhood}
\EndFor
\State \Return $\{s(x)\}_{x \in \mathcal{X}}$
\end{algorithmic}
\end{algorithm}

To illustrate the hyperplane optimization step of the algorithm, we sample $k$ points $\{\vec{z_i}\}$ from half or full $n$-dimensional balls and calculate $\vec{w}$. See \Cref{fig:boundary_ball_model}. For each experiment, we define the ``accuracy'' to be the percentage of points that lie on the positive side of the hyperplane determined by $\vec{w}$---this is analogous to the boundary score defined previously. When the points are sampled from a half ball---which models the boundary-point case---the accuracy stays close to $1$, even for the smallest neighborhood size that we tested ($k = 10$). By contrast, when the points are sampled from a full ball---which models the interior-point case---the accuracy is close to $1/2$. The gap between interior and boundary points persists as the dimension $n$ increases to $n = 10$. This is preliminary evidence that the optimized hyperplane reliably distinguishes between boundary and interior points.

\subsection{Concentration Inequalities for the Boundary Scores}\label{sec:math}

The theorems below can be used to guide a choice of neighborhood size $k$ given a manifold dimension $n$, or to pick a threshold on the boundary scores for identifying boundary points. Our goal is to quantify the distribution of boundary scores $s(x)$ when $x$ is either an interior point or a boundary point. To do so, we consider the random variable
\begin{equation}\label{eq:snk}
    S_{nk} := \sum_{i=1}^k \mathbf{1}_{\vec{w} \cdot \vec{z_i} \geq 0}\,, \qquad \qquad 
\end{equation}
which counts the number of points that lie on the positive side of the hyperplane determined by $\vec{w}$. In our theoretical results below, we assume that the points $\{\vec{z_i}\}_{i=1}^k$ are drawn uniformly at random from either an $n$-dimensional ball (to model the situation where $x$ is an interior point) or an $n$-dimensional half ball (to model the situation where $x$ is a boundary point). The boundary score $s(x)$ behaves similarly to $\frac{1}{k}S_{nk}$, and we are interested in its probability distribution. The challenge is that the terms in the sum $S_{nk}$ are generally not independent because $\vec{w}$ depends on all $\vec{z_i}$, so $S_{nk}$ is not a binomial random variable. Below, we prove that boundary scores for boundary and interior points concentrate near $1$ and $1/2$, respectively.

\begin{theorem}\label{thm:full}
    Let $\{\vec{z_i}\}_{i=1}^k$ be points sampled independently and uniformly at random from an $n$-dimensional ball. Then there are constants $C_1, C_2$ such that for any $\epsilon > 0$, we have
    \begin{equation*}
        \p\Big[ \vert S_{nk}/k - 1/2 \vert > \epsilon \Big] \leq \epsilon^{-2} \Big( C_1 \sqrt{n/k} + C_2 n/k\Big)
    \end{equation*}
   for all dimensions $n$ and all $k \geq 3$.
\end{theorem}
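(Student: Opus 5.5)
Chebyshev's inequality reduces the statement to a variance bound. By the symmetry $\vec z_i \mapsto -\vec z_i$ of the ball, $\E[S_{nk}] = k/2$ exactly: flipping all signs flips $\vec w$, hence the event $\vec w\cdot\vec z_i\ge 0$ becomes $\vec w\cdot \vec z_i\le 0$. Ties have probability zero. So it suffices to show
\[
\var(S_{nk}/k) \le C_1\sqrt{n/k} + C_2 n/k .
\]

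Write $V=\sum_j \vec z_j$ and $V_{-i} = V - \vec z_i$. Then $\vec w\cdot\vec z_i \ge 0$ iff $V_{-i}\cdot \vec z_i + \norm{\vec z_i}^2 \ge 0$. Compare the indicator $X_i=\mathbf 1_{V\cdot \vec z_i\ge 0}$ with the leave-one-out indicator $Y_i = \mathbf 1_{V_{-i}\cdot\vec z_i\ge 0}$. They differ only when
\[
-\norm{\vec z_i}^2 \le V_{-i}\cdot \vec z_i < 0 .
\]
Conditional on $\vec z_i$, the scalar $V_{-i}\cdot \vec z_i$ is a sum of $k-1$ i.i.d.\ symmetric terms with variance $(k-1)\norm{\vec z_i}^2/(n+2)$. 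A Berry--Esseen bound, or a direct anti-concentration estimate, puts it in an interval of length $\norm{\vec z_i}^2$ with probability at most
\[
c\,\norm{\vec z_i}\sqrt{(n+2)/(k-1)} + c'/\sqrt{k}.
\]
This gives $\p[X_i\ne Y_i]\lesssim \sqrt{n/k}$. Hence $\E|S_{nk}-\sum_i Y_i| \lesssim k\sqrt{n/k}$, and since the terms are bounded by $1$, the variance change is at most $2k\,\E|S-\sum Y|$. Normalized by $k^2$, this contributes $O(\sqrt{n/k})$.

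Next bound $\var(\sum_i Y_i) = \sum_i \var Y_i + \sum_{i\ne j}\cov(Y_i,Y_j)$. The diagonal contributes at most $k/4$, i.e.\ $O(1/k)$ after normalization. For $i\ne j$, we have $Y_i=\mathbf 1_{\vec z_i\cdot(\vec z_j + R)\ge 0}$ and $Y_j=\mathbf 1_{\vec z_j\cdot(\vec z_i+R)\ge0}$, where $R=\sum_{l\ne i,j}\vec z_l$. The plan is to compare further with $\tilde Y_i = \mathbf 1_{\vec z_i\cdot R\ge0}$ and $\tilde Y_j=\mathbf 1_{\vec z_j\cdot R\ge0}$. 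Given $R$, these are conditionally independent, and each has conditional mean exactly $1/2$ by the symmetry $\vec z\mapsto-\vec z$. So $\cov(\tilde Y_i,\tilde Y_j)=0$. The replacement cost is $\p[Y_i\ne\tilde Y_i]$, which is an event of the form $|\vec z_i\cdot R|\le |\vec z_i\cdot \vec z_j|$. Conditioning on $\vec z_i$ and using the same anti-concentration for $\vec z_i\cdot R$, whose scale is $\norm{\vec z_i}\sqrt{k/n}$, against $|\vec z_i\cdot\vec z_j|$, which is typically of order $\norm{\vec z_i}/\sqrt n$, gives a bound $O(1/\sqrt{k})$, plus an $O(n/k)$-type correction from the tails and the small-$k$ regime. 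The covariance of the true pair then satisfies $|\cov(Y_i,Y_j)|\le 4\p[Y_i\ne\tilde Y_i]$. Summing over $k^2$ pairs and dividing by $k^2$ gives $O(\sqrt{n/k}+n/k)$.

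The main obstacle is making the anti-concentration uniform in $n$ and $k$, including $k$ small relative to $n$. In that regime I will not try to be sharp: the trivial bound $\var(S/k)\le 1/4\le C_2 n/k$ holds whenever $k\le 4n$, so I may assume $k\ge 4n$. Then I will use the Berry--Esseen bound for the one-dimensional projections $\vec z_l\cdot u$. These are i.i.d., bounded, symmetric, with variance $\norm u^2/(n+2)$ and third absolute moment $\lesssim \norm u^3 n^{-3/2}$, so the Berry--Esseen error is $O(1/\sqrt k)$. The factor of $\norm{\vec z_i}\le 1$ only helps. Collecting terms and applying Chebyshev yields the stated bound.
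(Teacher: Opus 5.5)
Your first step is wrong as stated. The global flip $\vec z_j\mapsto-\vec z_j$ (for all $j$) sends $\vec w\mapsto-\vec w$ \emph{and} $\vec z_i\mapsto-\vec z_i$. So $\vec w\cdot\vec z_i=(V\cdot\vec z_i)/\norm{V}$ is unchanged, not reversed. In fact $Q_{nk}=\p[V_{-1}\cdot\vec z_1\ge-\norm{\vec z_1}^2]>1/2$ strictly (compare \Cref{lem:prob_full}). Hence $\E[S_{nk}]\neq k/2$, and Chebyshev about the mean does not directly bound deviations from $1/2$.

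The repair uses only your own ingredients. Bound the second moment about $1/2$:
\[
\p\big[|S_{nk}/k-1/2|>\epsilon\big]\le\epsilon^{-2}\big(\var(S_{nk}/k)+(Q_{nk}-1/2)^2\big).
\]
Then note that $\E[Y_1]=1/2$ exactly: flip $\vec z_1$ alone with $V_{-1}$ held fixed. This is the symmetry that actually works, and it is the one you correctly invoke for $\tilde Y_i$. Hence $|Q_{nk}-1/2|\le\p[X_1\neq Y_1]=O(\sqrt{n/k})$, and the bias contributes $O(n/k)$, which is exactly the shape of the $C_2 n/k$ term. (The paper's own proof also applies Chebyshev about $1/2$ and needs the same correction, which \Cref{lem:prob_full} supplies.)

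With that fixed, your argument is correct and takes a genuinely different route from the paper.

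\textbf{The paper's route.} It attacks $\cov(X_1,X_2)$ head-on:
\begin{enumerate}
\item It conditions on $(\vec z_1,\vec z_2)$ and applies the two-dimensional Berry--Esseen theorem to $(\hat v\cdot\vec u_1,\hat v\cdot\vec u_2)$. This needs the third-moment bound of \Cref{lem:third_moment_bound} and must exclude the degenerate angles $\theta\in\{0,\pi\}$.
\item It compares the bivariate Gaussian orthant probability with $\Phi(\tau_1)\Phi(\tau_2)$ via the $\theta\leftrightarrow\pi-\theta$ symmetrization of \Cref{lem:f_tau_approximation}.
\item It finishes with $\Phi(\tau_i)\approx Q_{nk}\approx 1/2$.
\end{enumerate}

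\textbf{Your route.} You decouple instead. The leave-two-out indicators $\mathbf 1_{\vec z_i\cdot R\ge 0}$ are conditionally independent given $R$, with conditional mean exactly $1/2$, so their covariance is identically zero. Everything else is a perturbation cost controlled by one-dimensional anti-concentration, i.e.\ the same scalar Berry--Esseen input as in \Cref{lem:prob_full}. This is more elementary and gives the bias bound for free.

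Two refinements are available:
\begin{itemize}
\item Your \emph{typically of order} step becomes rigorous if you also condition on $\vec z_j$ (independent of $R$) and use $\E|\vec u\cdot\vec z_j|\le\norm{\vec u}/\sqrt{n+2}$. This gives $\p[Y_i\neq\tilde Y_i]=O(1/\sqrt k)$ uniformly in $n$, with no tail correction. So the $\sqrt{n/k}$ rate comes only from the self-term $\norm{\vec z_i}^2$.
\item You can merge your two stages. Comparing $X_i$ directly with $\mathbf 1_{\vec z_i\cdot R\ge 0}$ costs $\p[|\vec z_i\cdot R|\le\norm{\vec z_i}^2+|\vec z_i\cdot\vec z_j|]=O(\sqrt{n/k})$. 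This bounds $|\cov(X_1,X_2)|$ and $|Q_{nk}-1/2|$ at once, without the variance-perturbation step.
\end{itemize}

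What the paper's heavier machinery buys is a conditional bivariate-Gaussian description of the pair $(X_1,X_2)$. For the stated inequality that extra precision is not needed.
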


\begin{theorem}\label{thm:half}
    Let $\{\vec{z_i}\}_{i=1}^k$ be points sampled independently and uniformly at random from an $n$-dimensional half ball. For any $\epsilon > 0$, we have
    \begin{equation*}
        \p\Big[ 1 - S_{nk}/k \geq \epsilon \Big] \leq \frac{3}{\sqrt[3]{4}} \, \frac{(n+1)}{\epsilon \sqrt[3]{k}}\,.
    \end{equation*}
\end{theorem}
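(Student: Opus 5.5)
Throughout, let the half ball be $\{z \in B^n : z_1 \geq 0\}$ with inward normal $e_1$. Set $\bar z := \frac1k\sum_i \vec{z_i}$, so that $\vec w = \bar z/\norm{\bar z}$. The strategy is to show that $\vec w$ is close to $e_1$ with high probability. On that event, a point $\vec{z_i}$ can fall on the negative side only if it lies in a thin slab near the flat face, and the expected number of such points is small. The cube-root rate in the statement suggests optimizing a threshold $\delta$ between two error terms: one of order $1/(k\delta^2)$ coming from the fluctuation of $\bar z$, and one of order $\delta$ coming from the slab.

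\textbf{Step 1 (concentration of $\bar z$).} Let $\mu = \e[\vec{z_i}] = c_n e_1$, where $c_n = \e[z_1]$ for the uniform half ball. A direct computation gives $c_n$ of order $1/\sqrt{n}$; more precisely $c_n \geq c/\sqrt{n+1}$ or similar. Each coordinate is bounded and $\e\norm{\vec{z_i}}^2 \leq 1$, so $\e\norm{\bar z - \mu}^2 \leq 1/k$. Chebyshev/Markov then gives $\p[\norm{\bar z - \mu} > \delta] \leq 1/(k\delta^2)$.

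\textbf{Step 2 (geometry of misclassification).} Write $\vec w = (\mu + u)/\norm{\mu+u}$ with $\norm{u}\le \delta$. If $\vec{z_i}\cdot(\mu+u) < 0$, then $c_n z_{i,1} < -u\cdot \vec{z_i} \leq \delta$, so $z_{i,1} < \delta/c_n$. Hence on the good event
\[
k - S_{nk} \leq \#\{i : z_{i,1} < \delta/c_n\}.
\]
This bound no longer involves $\vec w$, so the dependence problem disappears: the right-hand side is a genuine binomial count. The slab $\{0\le z_1 < t\}$ has half-ball measure at most $t \cdot \mathrm{vol}(B^{n-1})/(\tfrac12\mathrm{vol}(B^n))$, which is of order $t\sqrt{n}$. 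With $t = \delta/c_n \sim \delta\sqrt n$, the expected fraction is of order $n\delta$. This is where the factor $(n+1)$ comes from.

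\textbf{Step 3 (combine).} Bound
\[
\p[1 - S_{nk}/k \ge \epsilon] \le \p[\norm{\bar z-\mu}>\delta] + \p[\text{slab fraction} \ge \epsilon],
\]
and apply Markov to the second term. This gives
\[
\p[1 - S_{nk}/k \ge \epsilon] \le \frac{1}{k\delta^2} + \frac{a(n+1)\delta}{\epsilon}.
\]
Minimizing over $\delta$ yields a bound of the form $\text{const}\cdot\big((n+1)/\epsilon\big)^{2/3}k^{-1/3}$. This is at most $\frac{3}{\sqrt[3]4}\frac{n+1}{\epsilon\sqrt[3]k}$ whenever the bound is below $1$, because $(n+1)/\epsilon \geq 1$; when the bound is at least $1$ the statement is trivial. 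The constant $3/\sqrt[3]4$ is exactly what one gets from minimizing $A\delta + B/\delta^2$, namely $\tfrac{3}{\sqrt[3]{4}}A^{2/3}B^{1/3}$, which confirms the shape of the argument.

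\textbf{Main obstacle.} The hardest part is matching the constants. This requires sharp explicit bounds on $c_n = \frac{\Gamma(n/2+1)}{\sqrt\pi\,\Gamma(n/2+3/2)}$ and on the slab-volume ratio $\frac{2\,\mathrm{vol}(B^{n-1})}{\mathrm{vol}(B^n)}$. Both are Gamma-function ratios that I would control with Gautschi's inequality, so that their product is at most $n+1$.
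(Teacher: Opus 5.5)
Your argument is correct once the two small repairs in the second paragraph are made, and it proves a stronger inequality than the statement. The skeleton matches the paper: concentrate $\bar z$ around $\mu$, reduce to a slab near the flat face, then optimize $\delta$. The execution differs in two ways that matter.

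\emph{What the paper does.} It applies Markov to $1-S_{nk}/k$ first, so by exchangeability everything reduces to the one-point probability $\p[\vec{w}\cdot\vec{z_1}<0]$. It then controls $\bar z-\mu$ coordinatewise, using Chebyshev plus a union bound with failure probability $\frac{n}{(n+2)k\delta^2}$. Converting back uses $\norm{\vec{z_1}}_1\le\sqrt n$, so the slab has width $\delta\sqrt n/\mu_1$. The paper optimizes $\sqrt n(n+1)\delta+\frac{n}{(n+2)k\delta^2}$ and lands exactly on the stated bound after using $\frac{n^2(n+1)^2}{n+2}\le(n+1)^3$.

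\emph{What you do differently.} Your $\ell^2$ Chebyshev costs essentially the same probability, but Cauchy--Schwarz removes the $\sqrt n$ from the slab width. You also dominate $k-S_{nk}$ on the good event by a genuine binomial count and apply Markov only to that count, so $1/\epsilon$ multiplies only the slab term. You get $\frac{3}{\sqrt[3]{4}}\bigl(\frac{n+1}{\epsilon}\bigr)^{2/3}k^{-1/3}$, which is better in both $n$ and $\epsilon$. The binomial domination would also let you replace Markov and Chebyshev by Chernoff and a vector Hoeffding bound, giving tails exponential in $k$. No argument that goes through Markov on $\E[1-S_{nk}/k]$ can give that. The paper's route packages the bound on the expected score $\E[S_{nk}/k]$ as a standalone lemma, mirroring $Q_{nk}$ in the interior case, though your argument yields that bound as well.

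\textbf{Repair 1: the ``main obstacle''.} It is not an obstacle, and the route you propose for it would not work. The slab ratio divided by $c_n$ is exactly $\frac{2\Gamma(\frac n2+\frac32)}{\Gamma(\frac n2+\frac12)}=n+1$. Equivalently, since $f(t)=f(0)(1-t^2)^{(n-1)/2}$, you have $c_n=\int_0^1 t f(t)\,dt=f(0)/(n+1)$, which is what the paper uses. Bounding the two Gamma ratios separately with Gautschi gives only $\sqrt{(n+2)(n+3)}$. For $n=1$ and $\epsilon$ near $1$ that is too weak to recover the stated constant, because $12^{1/3}>2$.

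\textbf{Repair 2: the final comparison.} It should split on $\epsilon$, not on the size of the bound. Since $1-S_{nk}/k\le 1$, the event is empty for $\epsilon>1$. For $\epsilon\le 1$, $x:=(n+1)/\epsilon\ge 2$, so $x^{2/3}\le x$ applies. As written, your claim ``$(n+1)/\epsilon\ge 1$'' is unjustified for $\epsilon>n+1$. In that range your bound can be below $1$ and still exceed the target, so your ``trivial when the bound is at least $1$'' clause does not cover it.
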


\section{Directional Uniformity Method for Boundary Detection}\label{sec:neighborhood_uniformity}

\begin{figure}
    \centering
    \vspace{-5mm}
    \includegraphics[width = .75\textwidth]{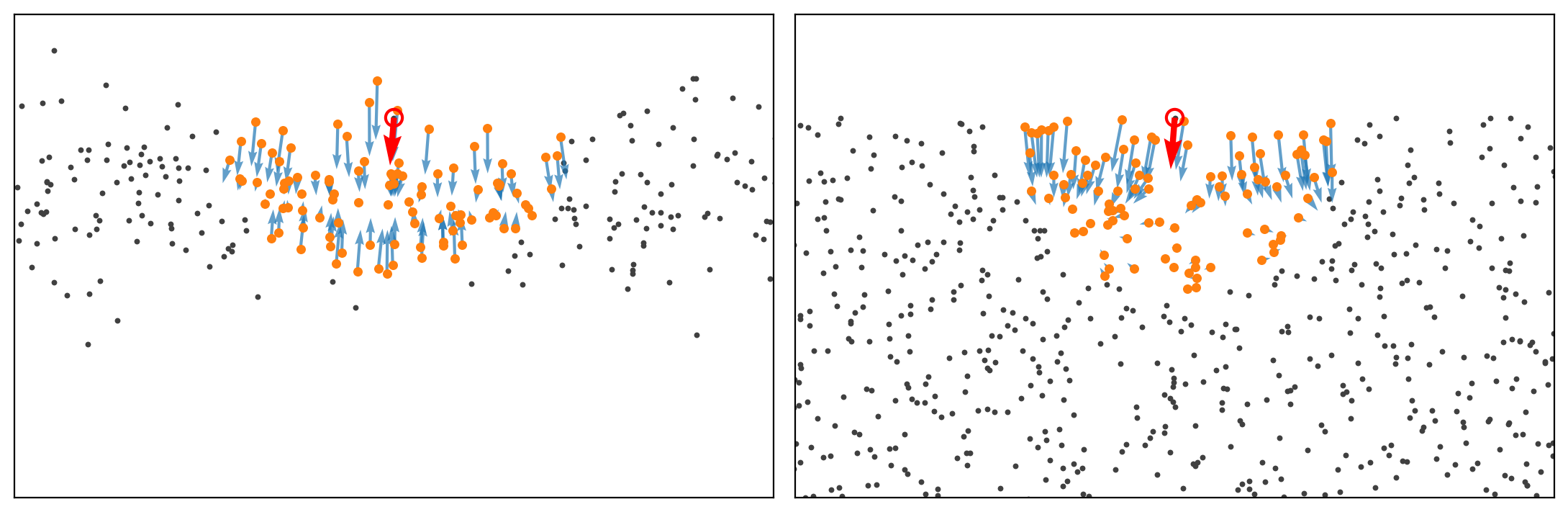}
    \caption{The vectors $\vec{v_j}$ for points $\{x_{ij}\}$ (orange) in a local neighborhood  of a point $x_i$ (red) within a larger point cloud that is sampled from (left) a thin strip in $\mathbb{R}^2$ with noise and (right) a half plane.}
    \label{fig:thin_boundary}
\end{figure}
Our directional uniformity (DU) method is related to the Rayleigh uniformity test for deciding whether a spherical sample is uniform or not \citep{mardia_directional_1999}. Suppose that for each point $x$, we consider a local neighborhood $\{x_{ij}\}$ of size $k$. We project each neighbor onto a sphere centered at $x$ and calculate the vector $\vec{v_i} := \fracc{k}\sum_j (x_j - x_i)/\norm{x_j - x_i}$. If $x$ is a manifold point, then the neighbors are uniformly distributed on the sphere, so $\E[\vec{v_i}] =0$. If $x$ is a boundary point, then $\E[\vec{v_i}]$ is nonzero and parallel to the boundary normal. We define the boundary score
\begin{equation}\label{eq:uniformity_method}
    s(x) := \fracc{k^3}\sum_j \vec{v_i} \cdot \vec{v_j}\,.
\end{equation}
Note that expected boundary point scores are strictly less than $1$ and depend on dimension $n$.

\begin{algorithm}
\caption{Directional-Uniformity (DU) Boundary Detection}\label{alg:boundary_neighborhood_uniformity}
\begin{algorithmic}[1]

\Require Point cloud $\mathcal{X}$, neighborhood size $k$
\Ensure Boundary score $s(x) \in [0, 1]$ for each $x \in \mathcal{X}$

\ForAll{$x \in \mathcal{X}$}
    \State $\mathcal{N} \leftarrow k\text{-nearest neighbors of } x$ in $\mathcal{X}$
    \State $\vec{v_i} \leftarrow \sum_{j \in \mathcal{N}} \frac{x_j - x_i}{\norm{x_j - x_i}}$
    \State $s(x) \leftarrow \fracc{k^3}\sum_j \vec{v_i} \cdot \vec{v_j}$ 
\EndFor
\State \Return $\{s(x)\}_{x \in \mathcal{X}}$
\end{algorithmic}
\end{algorithm}

Robustness-to-noise is the main advantage of \Cref{eq:uniformity_method} instead of simply $\norm{\vec{v_i}}$. The former equation measures the alignment of $\vec{v_i}$ with its neighbors $\vec{v_j}$. Although there is always correlation between $\vec{v_i}$ and $\vec{v_j}$ due to the overlap in neighborhoods, there is stronger alignment when $x$ is a boundary point because $\E[\vec{v_i}]$ is nonzero.

More importantly, ambient noise can thicken an $n$-dimensional manifold into a higher-dimensional object, and it is challenging for boundary detection methods to distinguish between the boundary of the ground-truth $n$-dimensional manifold and the boundary of the higher-dimensional object;  \Cref{eq:uniformity_method} helps to mitigate that. For example, in \Cref{fig:thin_boundary}, a 1D horizontal line segment in $\R^2$ has noise added so that it is a thin $2$D subset. A point $x_i$ at the top (or bottom) of the subset has a neighborhood that is genuinely a half ball, so $\norm{\vec{v_i}}$ is as large as it is for a ``true'' boundary point. However, because it is a thin 2D strip, the points $x_j$ at the bottom of the neighborhood are boundary points on the \emph{opposite} side. For points $x_j$ at the top and bottom, the vectors $\vec{v_j}$ point down and up, respectively. Therefore, there are cancellations in \Cref{eq:uniformity_method} so that it is small, even though most of the vectors $\vec{v_j}$ are large. This correctly signifies that the point $x_i$ is not on the true boundary (the left and right endpoints of the line segment). For comparison, the right panel of \Cref{fig:thin_boundary} shows $\{\vec{v_j}\}$ for points $x_j$ in a neighborhood of a point $x_i$ on the boundary of a half plane. The norm $\vec{v_i}$ is close to that of the point on the line segment, but $\sum_j \vec{v_i} \cdot \vec{v_j}$ is larger. At a fixed neighborhood size and high noise level, \Cref{eq:uniformity_method} is less likely than $\norm{v^i}$ to flag interior points as boundary points.

\section{Boundary Detection on Synthetic Data}\label{sec:boundary_experiments}

\paragraph{Non-uniform density.}\label{sec:density}

Many methods in the literature erroneously detect low-density interior points as boundary points. To test this, we sample points non-uniformly from a 2D disk where there is a low-density region in the interior. For each method, we calculate AUC values (shown in \Cref{tab:density_well_auc}) for boundary points vs. deep interior points (the low-density region).\footnote{The ground-truth disk boundary is the set of points $x$ where $\norm{x} = 1$. This boundary has measure zero, so zero of the sampled points are on the boundary. Instead, we conservatively define the empirical boundary to be the set of points with $\norm{x} > .98$. In later sections, we lower this threshold to $.95$ with justification given in \Cref{sec:auc}, where we also show the results for other threshold choices.} See \Cref{fig:density} for the pointwise scores. RU \citep{reverse_unreachability_2018}, BORDER \citep{xia_border_2006}, and BP \citep{border_peeling_2020, border_peeling_2021} explicitly rely on the idea that boundary points have lower density, so it is unsurprising that they have AUC values below $.5$. The CO method \citep{gravitational} performs similarly poorly (AUC $.447$) because of a ``mass'' term that increases the score at low-density points. The only methods that perform well on the non-uniform-density data set are LoDD, BEPS, our HP method, and our DU method, which all have AUC $= 1$.

\begin{table}[htbp]
  \centering
  \resizebox{\textwidth}{!}{%
  \begin{tabular}{ccccccccc}
    \hline
    DU (us) & Hyperplane (us) & LoDD & BEPS & RU & CO & BORDER & BP (Gauss.) & BP (Cauchy) \\
    \hline
    1.000 & 1.000 & 1.000 & 1.000 & 0.474 & 0.447 & 0.474 & 0.425 & 0.436 \\
    \hline
  \end{tabular}%
  }
  \caption{Boundary ($\norm{x} > .98$) vs deep-interior ($\norm{x} < .23$) AUC for $N = 10,000$ points sampled non-uniformly from a 2D disk. In the interior, there is a ``well'' ($r < .23$) where the density is $1/9$ that of the density in the outer region ($r > .67$). The density increases linearly in between.}
  \label{tab:density_well_auc}
\end{table}

\paragraph{Surfaces with non-trivial geometry.}\label{sec:surfaces}
\begin{figure}
    \centering
    \vspace{-5mm}
    \includegraphics[width = .6\textwidth]{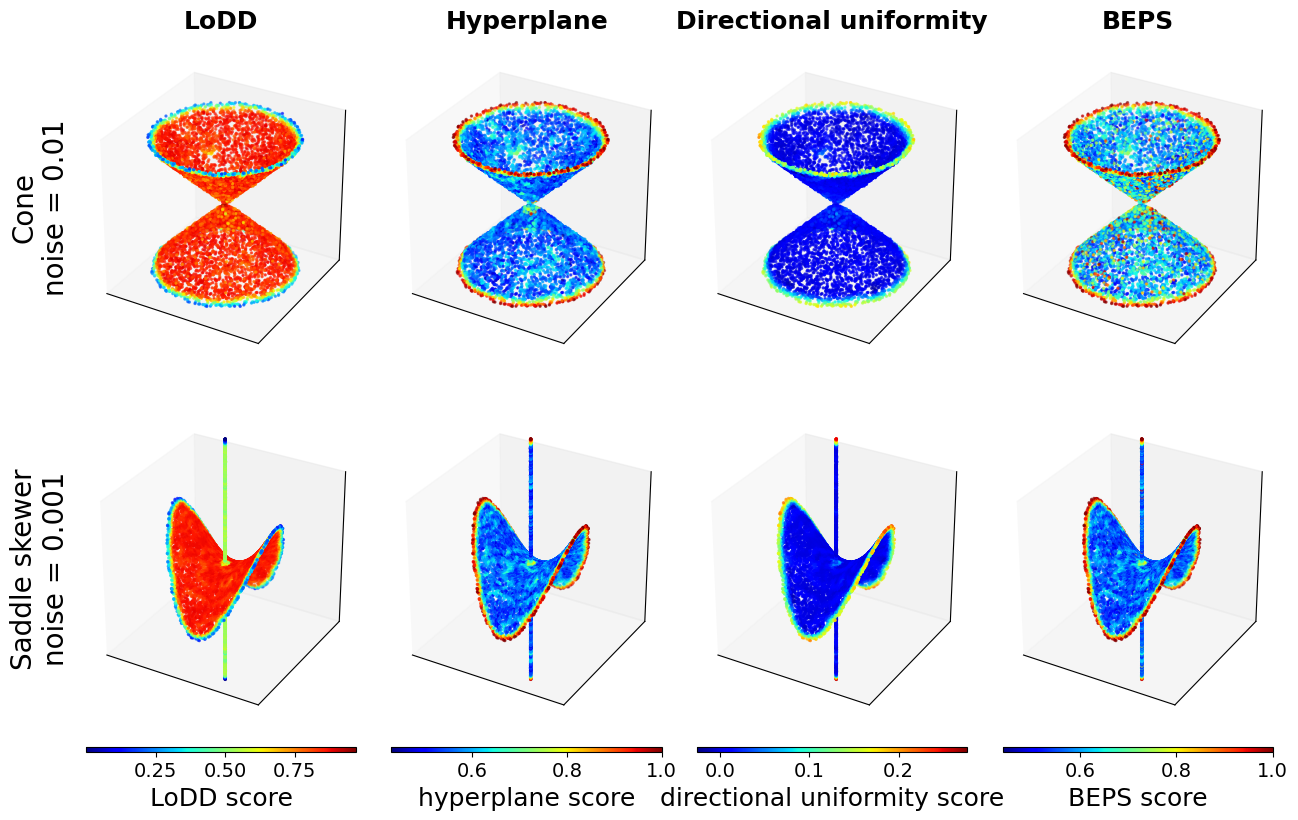}
    \caption{Boundary scores ($k = 100$) for two synthetic data sets ($N = 10^4$) with low noise levels.}
\label{fig:benchmark_boundary_scores}
\end{figure}

For the remainder of \Cref{sec:boundary_experiments}, our primary focus is comparing LoDD \citep{lodd} and BEPS \citep{beps} with our HP and DU methods. We test the four methods on six synthetic data sets with low levels of Gaussian noise; see \Cref{fig:benchmark_boundary_scores} for the cone and saddle-skewer, and \Cref{fig:benchmarks_appendix} for four more. BEPS performs reasonably well, but it has a high number of false positives (interior points that score as high as boundary points) on three of the surfaces (the cone, Saturn, and the three disks). LoDD struggles with data points that have intrinsic dimension one, such as the skewer in the saddle-skewer. The interior 1D points have LoDD scores that are as low as points near the surface boundaries. (LoDD scores are close to $0$ on boundary points and higher on the interior, unlike the other three methods.) HP scores boundary points close to $1$, while manifold points have low boundary scores close to $.5$. Points near (but not on) a transverse intersection of two surfaces have moderate hyperplane boundary scores near $.75$. Similarly, DU scores are higher at boundary points and lower (close to $0$) at interior points. This data set shows that all four methods are reasonably accurate on surfaces with complex topologies and curvature, except that LoDD confuses interior 1D points with boundary points and BEPS is prone to false positives, sometimes confusing isolated interior points for boundary points.

\paragraph{High ambient dimension.}\label{sec:high_ambient}
\begin{figure}
    \centering
    \includegraphics[width = \textwidth]{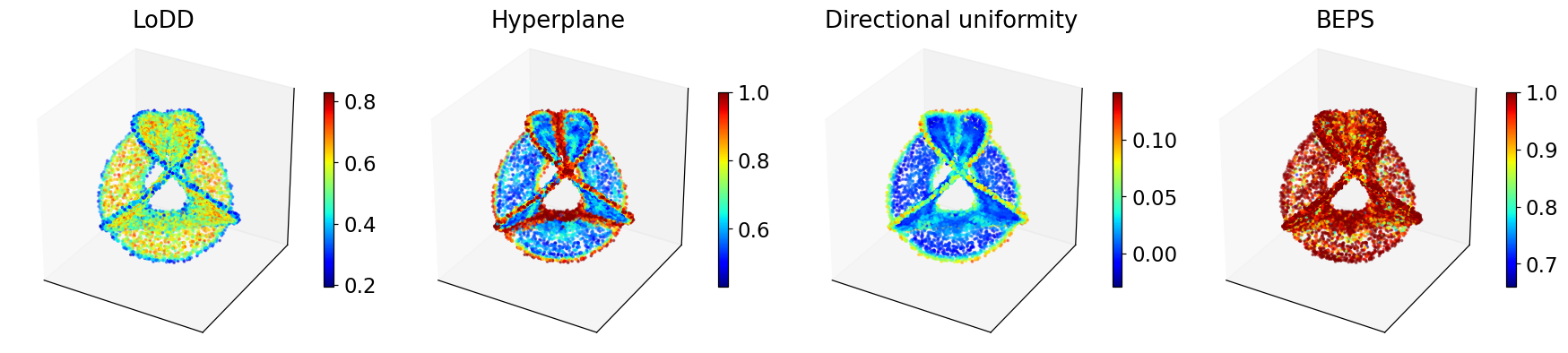}
    \caption{Boundary scores ($N = 10,000$, $k = 100$) on the Henneberg minimal surface embedded in $\mathbb{R}^{15}$ with isotropic Gaussian noise ($\sigma = .02$), plotted in $\mathbb{R}^3$.}
    \label{fig:benchmarks_high_ambient}
\end{figure}
\vspace{-5pt}
In high ambient dimension with low noise, BEPS cannot distinguish between boundary points and interior points, and LoDD scores have high variance on interior points. For example, see  \Cref{fig:benchmarks_high_ambient}. BEPS scores are roughly constant across all points, and many interior points have LoDD scores that are as low as boundary-point LoDD scores. In contrast, our HP and DU methods both highlight a clear boundary, scoring interior points much lower, although HP has a small amount of interior noise. A limitation of HP in high ambient dimensions is that it scores self-intersections as high as boundary points. (In lower ambient dimensions, HP assigns score $\approx .75 < 1$ to points near intersections.) For the remainder of this subsection, we focus on HP, DU, and LoDD.

Next, we study $2$D disks embedded in $\R^D$ ($3 \leq D \leq 15$) with isotropic Gaussian noise ($\sigma = .01$). In \Cref{fig:disks_high_ambient}(a), we calculate AUC values for interior vs. boundary points\footnote{We define the empirical boundary to be the set of points with $\norm{x} > .95$, and the empirical interior to be the set of points with $\norm{x} < .8$. We ignore the points with norms in between because their classification is ambiguous. See \Cref{sec:auc} for more details on how we derived our definitions of interior and boundary points and \Cref{fig:disks_high_ambient_AUC_sweep} for AUC values for alternative choices.}. Both the hyperplane and DU methods have AUC values near $1$ at all $D$, while LoDD's performance degrades as $D$ increases.  That is, HP and DU perform better than LoDD on point clouds with high ambient dimension.

\begin{figure}
    \centering
    \includegraphics[width = .5\textwidth]{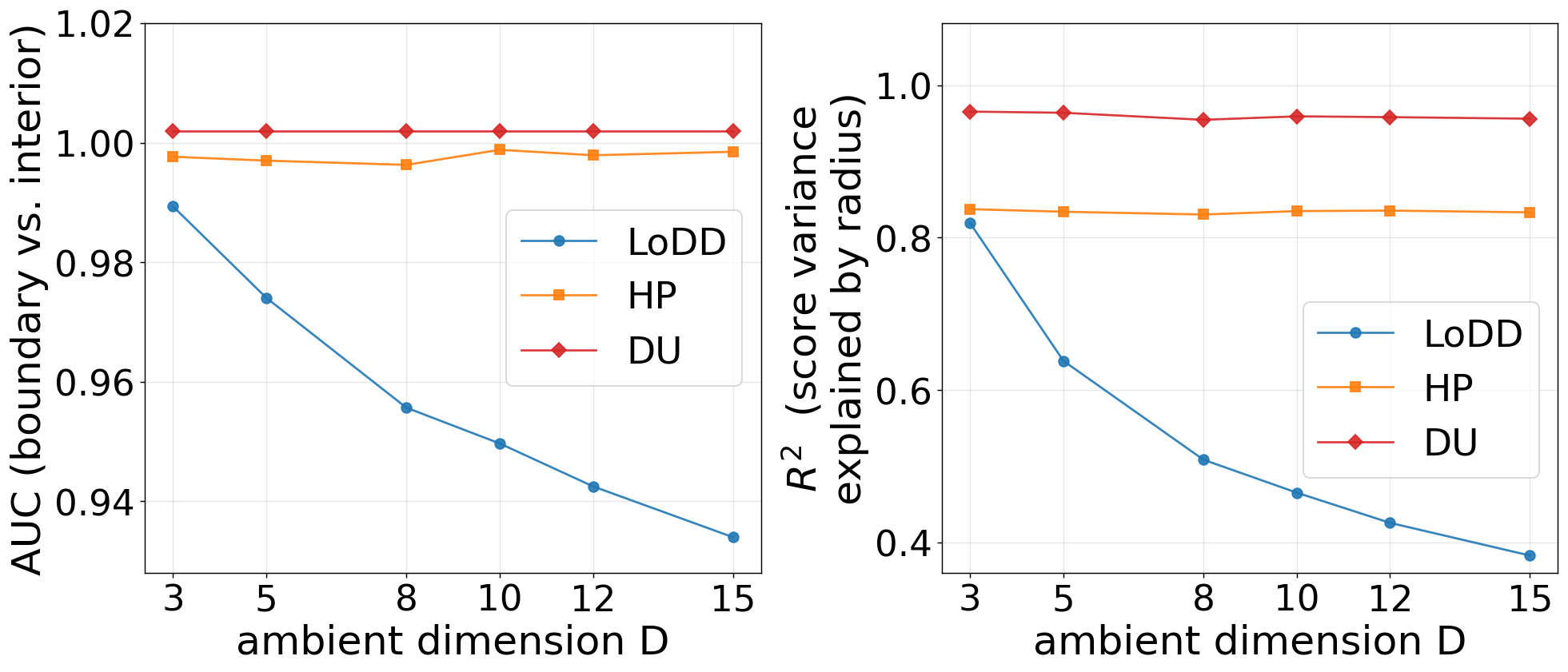}
    \caption{Analysis of boundary scores on disks embedded in $\mathbb{R}^{D}$ with Gaussian noise $\sigma = .01$ (size $N = 10,000$ points, with $k = 100$). 
    }
    \label{fig:disks_high_ambient}
\end{figure}

For each method (excluding BEPS), we measure the strength of the relationship between a point's norm and its boundary score. A good method should score all interior points at an approximately constant value and then monotonically change as one approaches the boundary. For each method we fit a monotonic function $f(r)$ to the scores as a function of radius $r$ and calculate the coefficient of determination $R^2$. \Cref{fig:disks_high_ambient}(b) shows that the DU method has the highest $R^2$ value (near $1$) at each $D$, while LoDD has the lowest values, reflecting its high variance within the interior of the disk.

\paragraph{High noise levels.}\label{sec:high_noise}
\begin{figure}
    \centering
    \includegraphics[width = .7\textwidth]{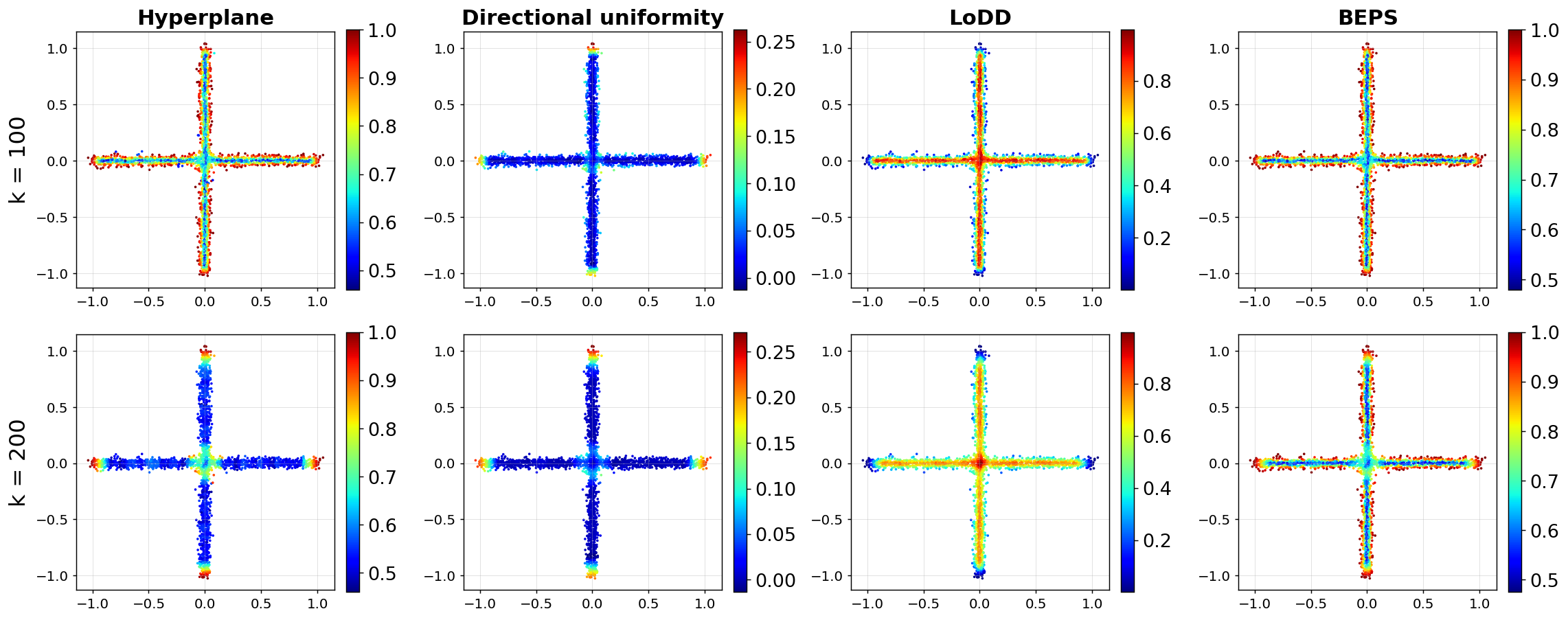}
    \caption{Boundary scores on noisy intersecting line segments.}
\label{fig:benchmark_boundary_dimension_failure}
\end{figure}

At higher levels of noise, local PCA may overestimate the local dimension. This affects HP. If the estimated dimension $m$ is larger than the true manifold dimension $n$, then HP will find the boundary of it \emph{as an $m$-dimensional object} rather than an $n$-dimensional object. In \Cref{fig:benchmark_boundary_dimension_failure}, we show an example where a point cloud ($N = 2,500$) is sampled from two intersecting orthogonal line segments with Gaussian noise $\sigma = .025$. At this noise level, with default neighborhood size $k = 100$, local PCA over-estimates the intrinsic dimension---$99.1\%$ of points are assigned local dimension $2$. HP thus detects the edges of the thickened arms as boundary points, because this is the boundary when the object is treated as $2$-dimensional. Increasing the neighborhood size to $k = 200$ resolves the issue because at the larger neighborhood size, local PCA correctly estimates dimension $1$ at most points except near the intersection. Interestingly, LoDD and BEPS suffer from the same limitation, but increasing the neighborhood size does not help them. DU is able to correctly identify true boundary points at both neighborhood sizes.

\begin{figure}
    \centering
    \vspace{-3mm}
    \includegraphics[width = .59\textwidth]{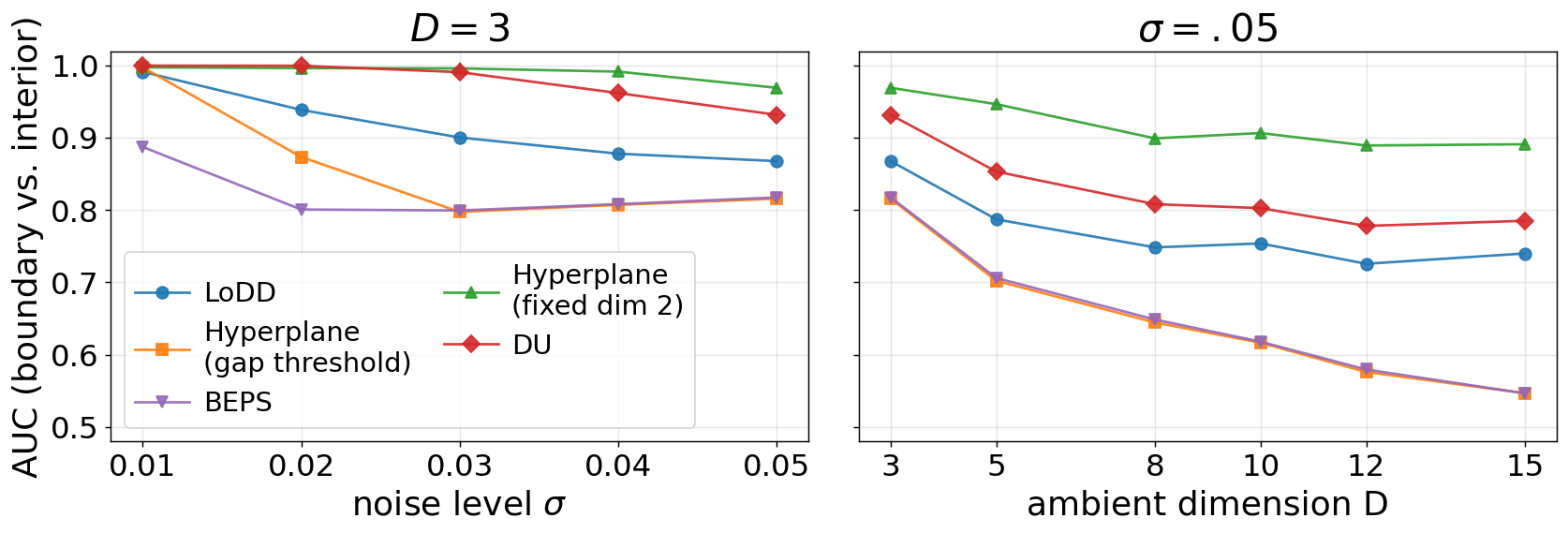}
    \caption{Boundary scores at high noise, for 2D disks ($N = 10,000$) embedded in $\R^D$ with Gaussian noise $\sigma$. We use neighborhood size $k = 100$. Boundary and interior points are defined as points with norm $> .95$ and $< .8$, respectively, with norm computed in the $xy$ plane.}
    \label{fig:disks_increasing_sigma}
\end{figure}

In \Cref{fig:disks_increasing_sigma}, we study each method's boundary-detection performance as we increase the noise level $\sigma$. The point clouds are sampled from 2D disks embedded in $\mathbb{R}^3$ with isotropic Gaussian noise. Again, we see that at higher noise levels, HP often misclassifies interior points as boundary points because local PCA overestimates the dimension as $3$ instead of $2$. An interior point with high $|z|$ coordinate is seen as the ``boundary'' of the thickened disk. However, if we fix dimension $2$ for the local PCA dimension, then HP is able to correctly recover the disk boundary. We plot the AUC values as a function of $\sigma$ in \Cref{fig:disks_increasing_sigma}(a). At the highest noise level ($\sigma = .05$), HP with fixed dimension (AUC $.969$) performs best, followed by DU (AUC $.932$), then LoDD (AUC $.868$), and then finally BEPS (AUC $.818$) and HP with the spectral-gap threshold condition (AUC $.816$). In the high-noise experiments, BEPS and HP (spectral gap threshold) are almost identical because both methods optimize for a hyperplane (in different ways) that maximizes the number of points on a single side of the hyperplane. When the local dimension is overestimated, HP suffers from the same limitations as BEPS. We see the same behavior when the 2D disk is embedded in higher-dimensional $\R^D$ with noise level $\sigma = .05$, in \Cref{fig:disks_increasing_sigma}(b). See \Cref{fig:disks_D3_high_noise} for scatter plots of the boundary scores at $D = 3$ and $\sigma = .05$. This suggests that HP should be used if the intrinsic dimension can be reliably estimated, but otherwise, it is best to use DU.

\paragraph{High intrinsic dimension.}

All methods (hyperplane, DU, LoDD, and BEPS) are able to detect the boundary of an $n$-dimensional object. We sample $N = 10,000$ points from $n$-balls embedded in $\R^n$ for $n = 3, 5, 7, 10$. For each method, the AUC value is $1$ for each $n$, where boundary and interior points are points with norm $> .95$ and $< .8$, respectively. See \Cref{fig:high_n_balls} for boundary scores.

\paragraph{Boundary normals.}

\begin{figure}
    \centering
    \includegraphics[width = .5\textwidth]{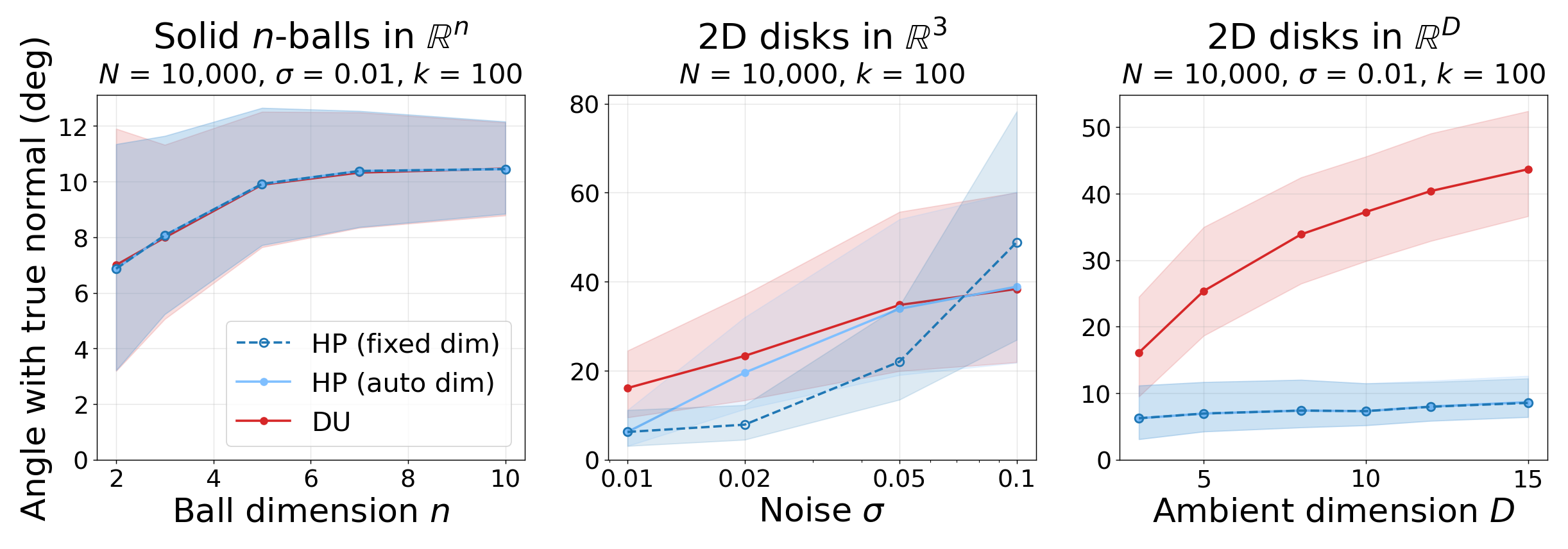}
    \caption{For points $x$ near the boundary of a 2D disk or $n$-ball ($\norm{x} > .95$), we compare the estimated boundary normal with the ground-truth normal $-x/\norm{x}$. We plot median error and IQR.}
    \label{fig:normal_accuracy}
\end{figure}

Both of our boundary detection methods yield normal directions at each point, in addition to numerical scores. For HP, we take the vector $\vec{w}$ (line 5 of \Cref{alg:boundary_hyperplane}) and lift it from its PCA coordinates back into the ambient space to get a normal $\vec{w}_{\text{amb}}$. For DU, the vector $\vec{v_i} := \fracc{k}\sum_j(x_{ij} - x_i)/\norm{x_{ij} - x_i}$ is the same as the vector used by BEPS \citep{beps} and CO \citep{gravitational}. In \Cref{fig:normal_accuracy}, we test the accuracy of these vectors on 2D disks and $n$-balls. For 2D disks in $\R^3$, HP generally yields a more accurate normal direction estimate than DU, with the performance gap narrowing as the noise $\sigma$ increases. For HP, fixing the dimension $2$ works better than using a spectral gap threshold condition at lower noise levels ($\sigma \leq .05$), but at the highest noise level ($\sigma = .1$), the top two local principal components do not align as well with the tangent space. For 2D disks in $\mathbb{R}^D$, the hyperplane normal $\vec{w}_{\text{amb}}$ is significantly more accurate than $\vec{v_i}$ as $D$ increases. For $n$-balls, both methods (and variants) are equally accurate; we suspect that this is usually true for manifolds with codimension zero. Overall, HP generally yields a much more accurate boundary normal direction compared to DU.

\section{Boundary Detection in Spatial Transcriptomics Data}\label{sec:st}

\begin{figure}
    \centering
    \includegraphics[width = .5\textwidth]{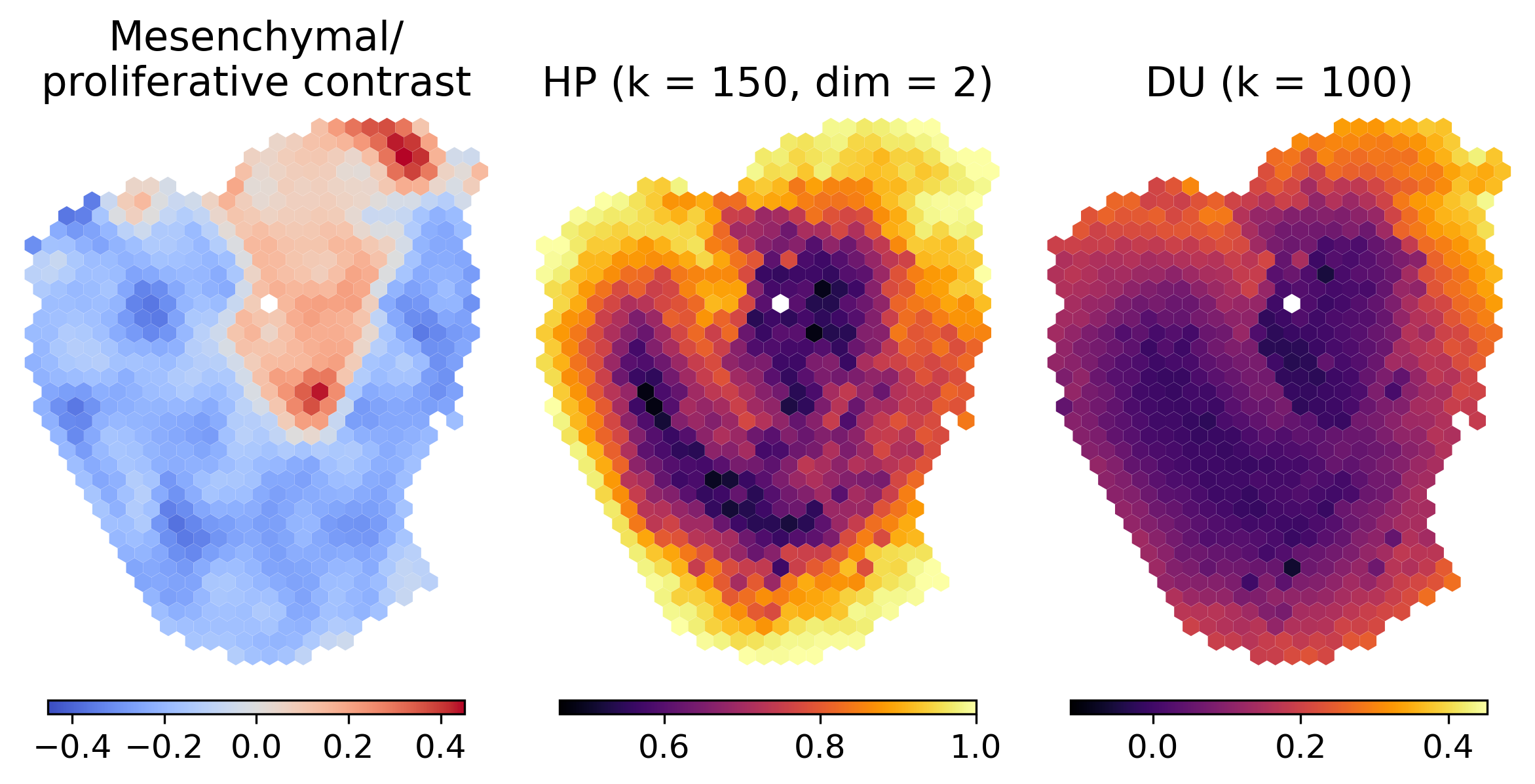}
    \caption{Glioblastoma slice ST1A1. 
    }
    \label{fig:st1a1}
\end{figure} 

As a first application, we apply our methods to spatial transcriptomics data, which measures gene expression across a tissue section, to recover cell-state boundaries in glioblastoma. Glioblastoma is an aggressive brain tumor that diffusely infiltrates the brain, making it difficult to distinguish regions where tumor cells are abundant or sparse. Malignant cells also occupy distinct transcriptional states \citep{neftel2019states} that may be amenable to different therapies, further motivating the identification of boundaries within the tumor. We analyze tissue slice ST1A1 from \citet{aldalahmah2023tissue}.

Each capture area measures the combined expression of multiple cells. To interpret our boundary scores, we use published cell-state abundance estimates obtained by Robust Cell Type Decomposition (RCTD) \citep{cable2022rctd}, which deconvolves this expression into contributions from different cell states. Following \citet{aldalahmah2023tissue}, we combine the malignant-cell subpopulations gl\_Mes1 and gl\_Mes2 into an astrocyte-like/mesenchymal group (Mes), and gl\_Pro1 and gl\_Pro2 into a proliferative group (Pro). These represent distinct malignant states that have prognostic and therapeutic implications. \Cref{fig:st1a1} shows their spatial distributions and the mesenchymal/proliferative contrast, calculated as Mes minus Pro.

In \Cref{fig:st1a1}, both HP and DU assign higher boundary scores along the tissue edge and near parts of the transition between Mes- and Pro-enriched regions. 
After pre-processing (see \Cref{sec:ST_details}), we reduce gene expression to $30$ principal components
and concatenate these with the two spatial coordinates, yielding a point cloud of size $N = 741$ (the number of spatial spots) in $\mathbb{R}^{32}$. We view this as the graph of an expression function over spatial coordinates, motivating fixed dimension two for HP. For both methods, we construct each spot's neighborhood by
taking the intersection of nearest neighbors with respect to both spatial coordinates and gene expression coordinates.
Our results suggest that our methods can detect internal transitions between tumor states without using cell-state labels or predefined tumor-marker genes.

\section{LLM Activation Geometry}\label{sec:llm}

As as second application, we reproduce the character-counting manifold of \citet{gurnee2026} on two open-weight models (Pythia-2.8B and Llama-3.1-8B) and apply our boundary-detection methods. Using text from the C4 (English) data set \citep{c4}, we strip newlines and then re-insert them every $k$ characters, rounded to the nearest word boundary, for $k \in \{15, 20, \ldots, 150\}$. This produces $28$ versions of each document, each with fixed line width $w$. Every non-newline token has a character position $i$ within its line of width $w$. We extract the layer-$4$ residual-stream activations and average the activations for each $(i, w)$, yielding point clouds of size $N \approx 2,300$ for each model. \Cref{fig:pythia} shows the Pythia-2.8B point cloud; see \Cref{fig:llama} for Llama-3.1-8B. See \Cref{sec:llm_details} for details of our procedure.

In \Cref{fig:pythia}(b), both of our boundary detectors correctly assign higher scores to the ground-truth boundary of the averaged manifold, which is $\{(i, w) \mid i = 0 \text{ or } i = w \text{ or } w = 15 \text{ or } w = 150\}$. For HP, we use dimension $n = 2$ because the surface is intrinsically parameterized by character position $i$ and line width $w$. We also show the vector fields of estimated boundary normals in \Cref{fig:pythia}(c) for the top 10\% scoring points; the vectors tend to point inwards on the surface, as they should. 

Lastly, we test HP and DU on the un-averaged, per-token point cloud (size $N \approx 250$k per model). See \Cref{fig:pythia}(d). This is much more challenging because the intrinsic dimension of the space is much higher (and unknown), and there are other, unknown boundaries in the token point cloud. After calculating boundary scores at each point, we average the boundary scores for the tokens at each $(i, w)$. The averaged boundary scores are consistently higher at the ground-truth boundary.

\begin{figure}
    \centering
    \subfloat[]{\includegraphics[width = .25\textwidth]{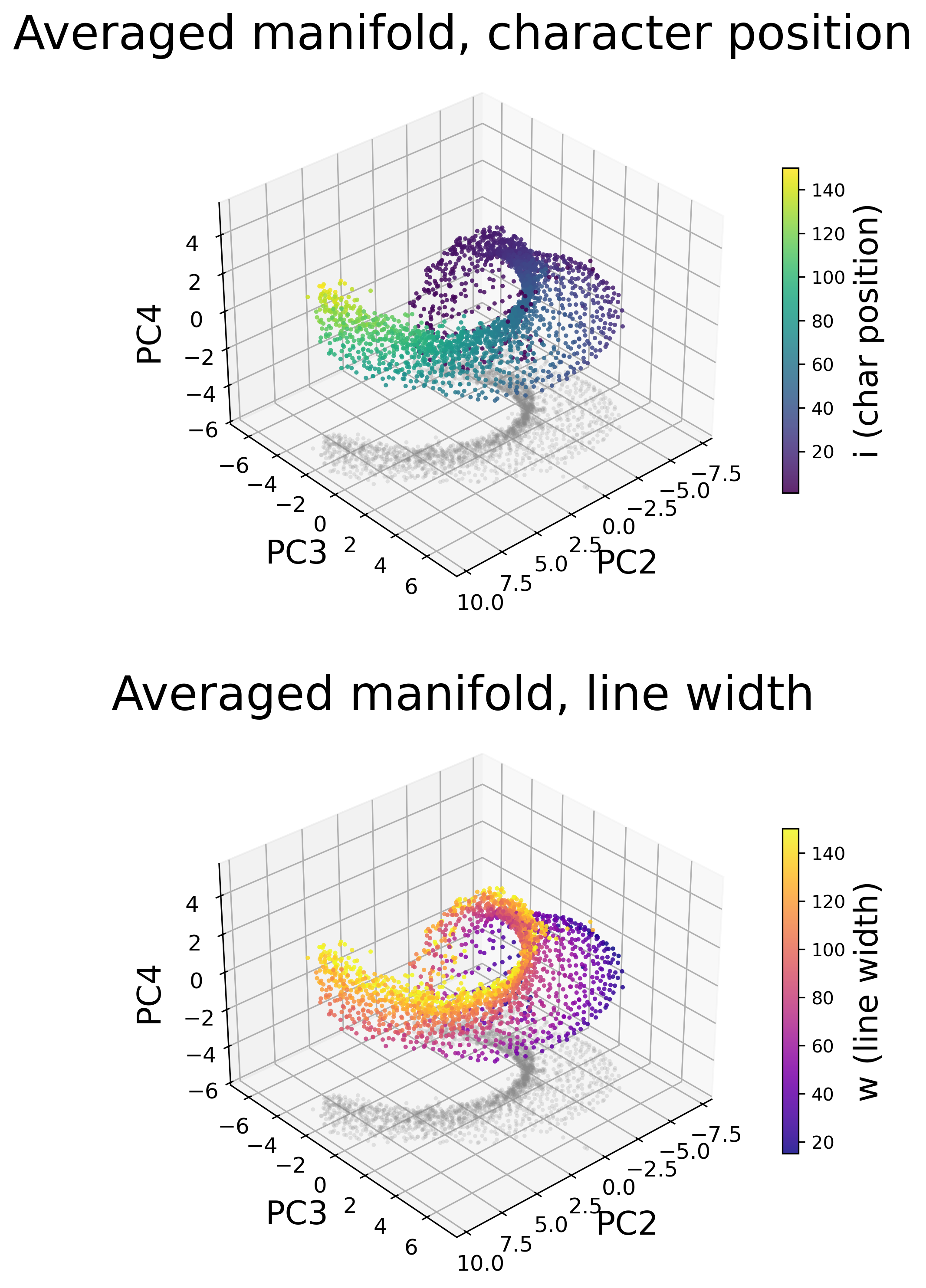}}
    \subfloat[]{\includegraphics[width = .23\textwidth]{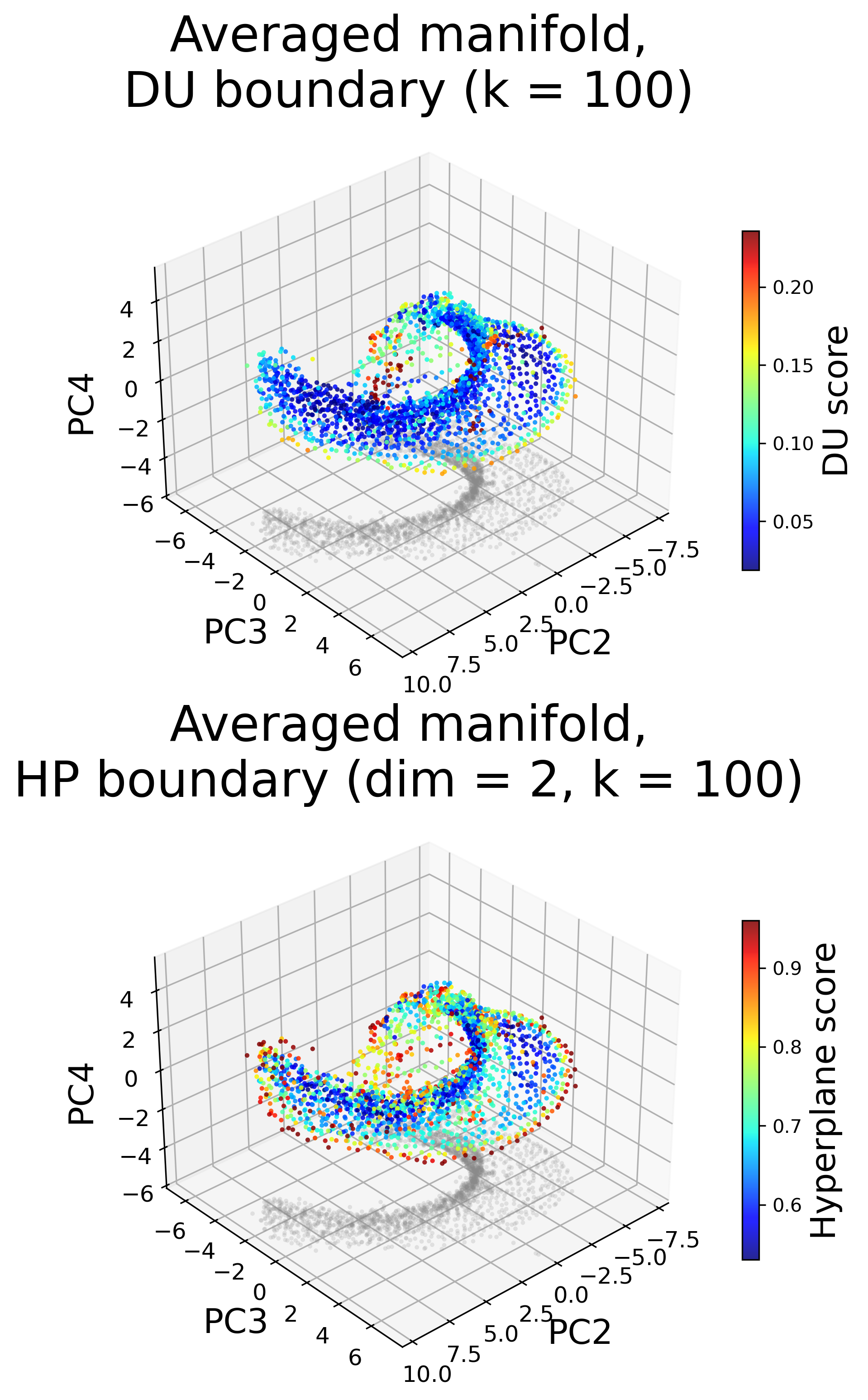}}
    \subfloat[]{\includegraphics[width = .27\textwidth]{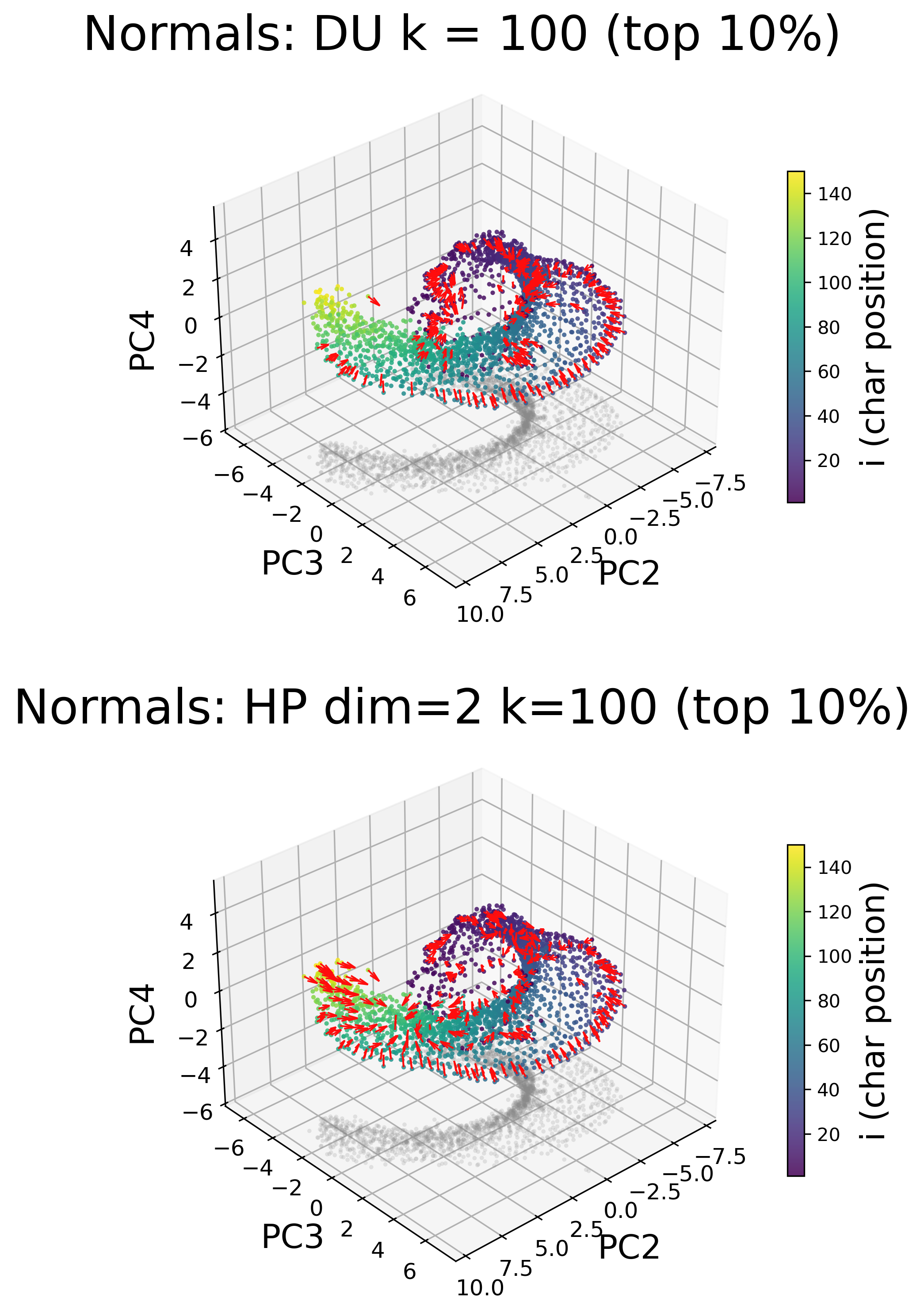}}
    \subfloat[]{\includegraphics[width = .22\textwidth]{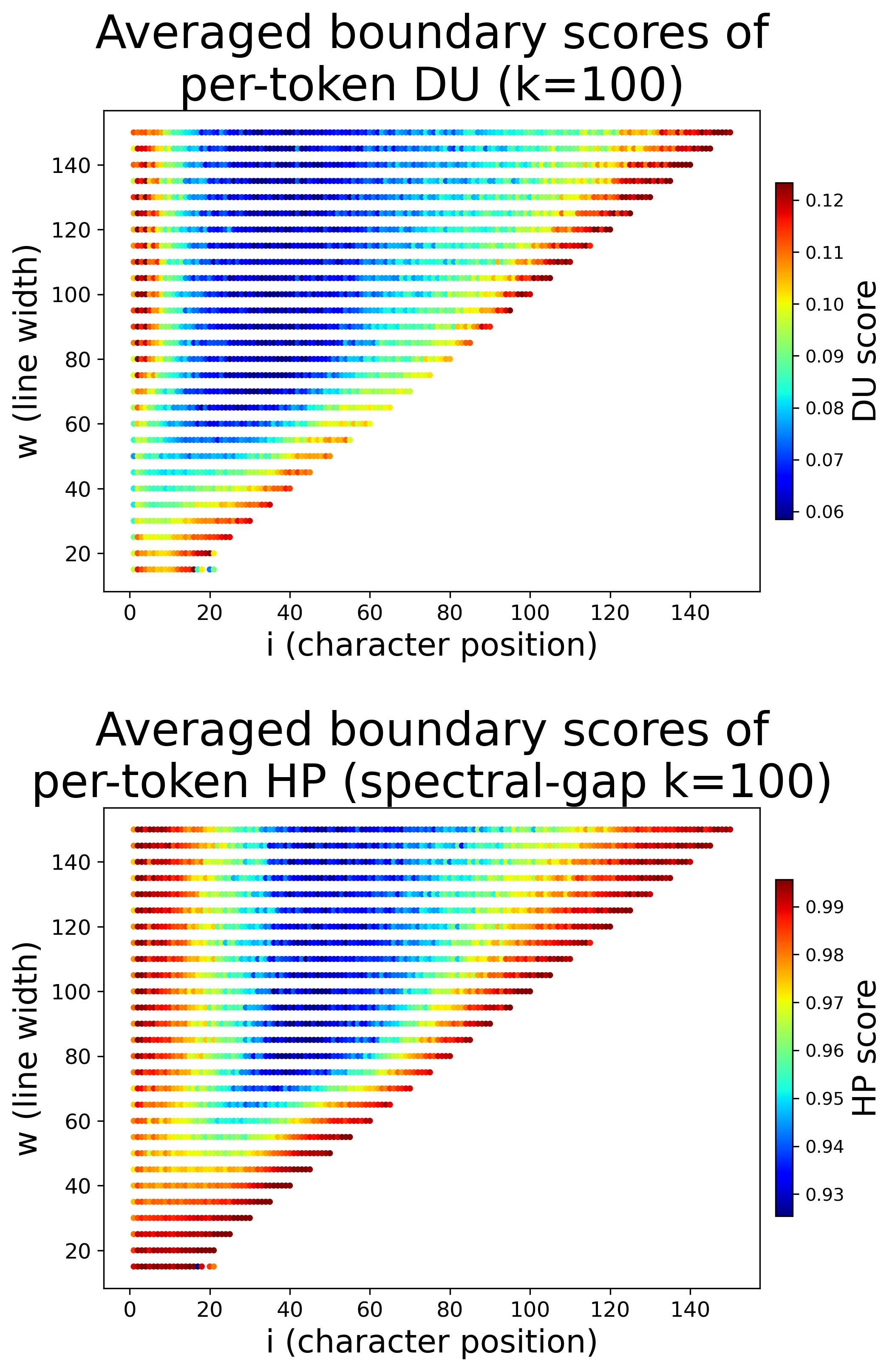}}
    \caption{Boundary detection on the character-counting manifold for Pythia-2.8B.}
    \label{fig:pythia}
\end{figure}

\section{Conclusions}

We presented two new methods for identifying boundary points and calculating boundary normal vectors in high-dimensional, noisy data. We justified the hyperplane method mathematically, and we rigorously tested both methods empirically against others in the literature on synthetic data. Finally, we showed that our boundary detection methods are effective on real-world applications by using them to identify a tumor boundary in a spatial transcriptomics data set and to characterize the boundary of a manifold in LLM activations.

\FloatBarrier
\subsection*{AI use statement}

We used generative AI to write almost all of the code and to assist with isolated steps within the proofs of \Cref{lem:prob_full}, \Cref{lem:third_moment_bound}, \Cref{lem:f_tau_approximation}, and \Cref{lem:cov_full}.

We have reviewed all AI-assisted work. In particular, we have reviewed the code and checked all mathematical arguments for correctness. We take responsibility for the final content of this work, including text, claims or artifacts produced with the aid of generative AI.

\subsubsection*{Acknowledgments}
Hickok was supported by NSF grant DMS-2303402. Al Dalahmah was partially supported by NIH grant 1R21NS136647 NINDS.  Blumberg was partially supported by NSF grants DMS-2311338 and IIS-2552120.

\bibliography{references}
\bibliographystyle{iclr2027_conference}

\appendix

\section{Additional Experimental Figures}

\subsection{Hyperplane optimization in half and full balls}

To illustrate the intuition for HP, we calculate the vector $\vec{w}$, the solution to \Cref{eq:boundary_optimization}, for points sampled from 2D and 3D balls and half balls. In \Cref{fig:boundary_ball_model}(a), we showed $\vec{w}$ and its orthogonal hyperplane (line) for $k = 10$ points sampled from a 2D half ball. In \Cref{fig:more_balls_hyperplane_opt}, we show $\vec{w}$ and its orthogonal hyperplane for $k = 50$ points in 2D half and full balls and for $k = 100$ points in 3D half and full balls. In all cases, the ``accuracy'' (the percentage of points $\z$ such that $\vec{w} \cdot \z \geq 0$) is close to $1$ for the half balls and close to $.5$ for the full balls.

\begin{figure}
\subfloat[]{\includegraphics[width = .25\textwidth]{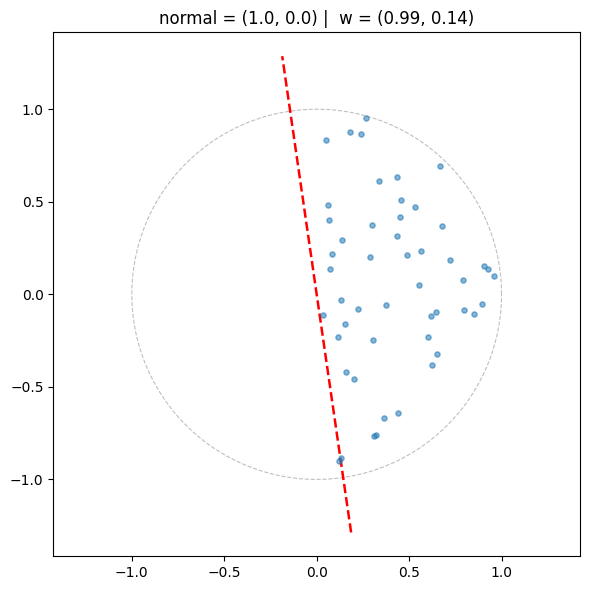}}
    \subfloat[]{\includegraphics[width = .25\textwidth]{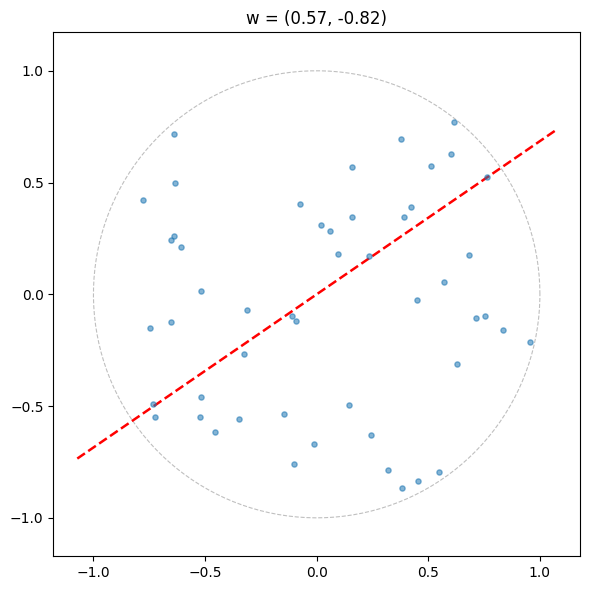}}
    \subfloat[]{\includegraphics[width = .25\textwidth]{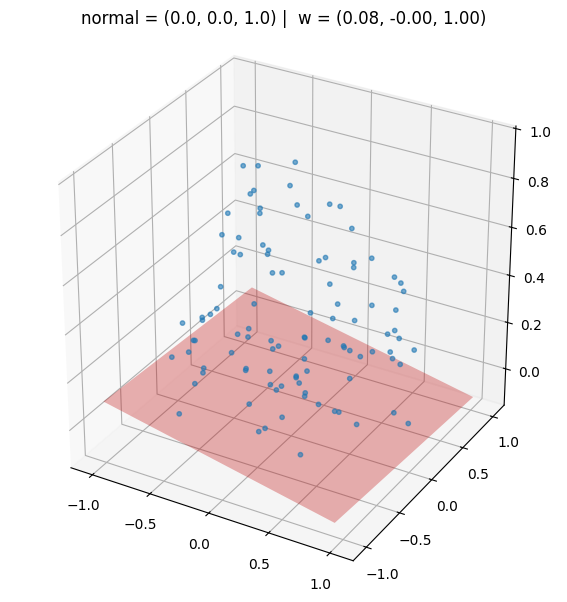}}
    \subfloat[]{\includegraphics[width = .25\textwidth]{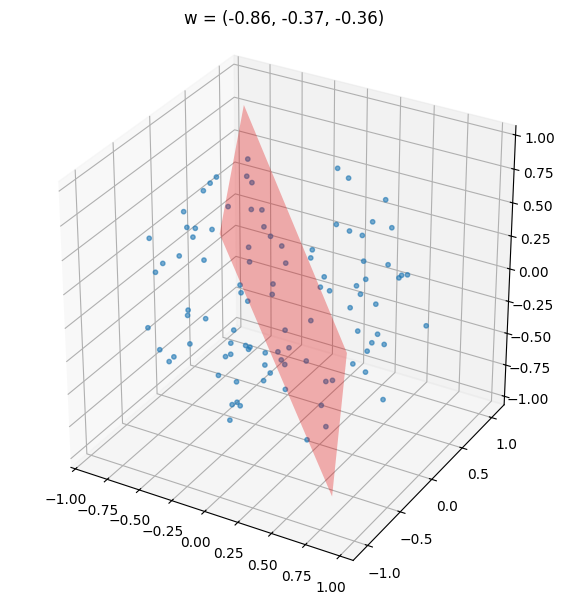}}
    \caption{We calculate the vector $\vec{w}$ that is the solution to \Cref{eq:boundary_optimization} in our hyperplane method, where the accuracy of $\vec{w}$ is the percentage of points $\{z_i\}$ such that $\vec{w} \cdot \z_i \geq 0$. (A) 2D half ball with normal $(1,0)$ and $k=50$ points, with accuracy $.98$,
(B) 2D full ball with $k=50$ and accuracy $.5$,
(C) 3D half ball with normal $(0,0,1)$ and $k=100$, with accuracy $.99$, and
(D) 3D full ball with $k=100$ and accuracy $.6$.}
    \label{fig:more_balls_hyperplane_opt}
\end{figure}

\subsection{Non-uniform density}

In \Cref{sec:density}, we tested boundary detection methods on a 2D disk with non-uniform density and reported AUC values in \Cref{tab:density_well_auc}. In \Cref{fig:density}, we show scatter plots of the pointwise boundary scores for each method.
\begin{figure}
    \centering
    \includegraphics[width = \textwidth]{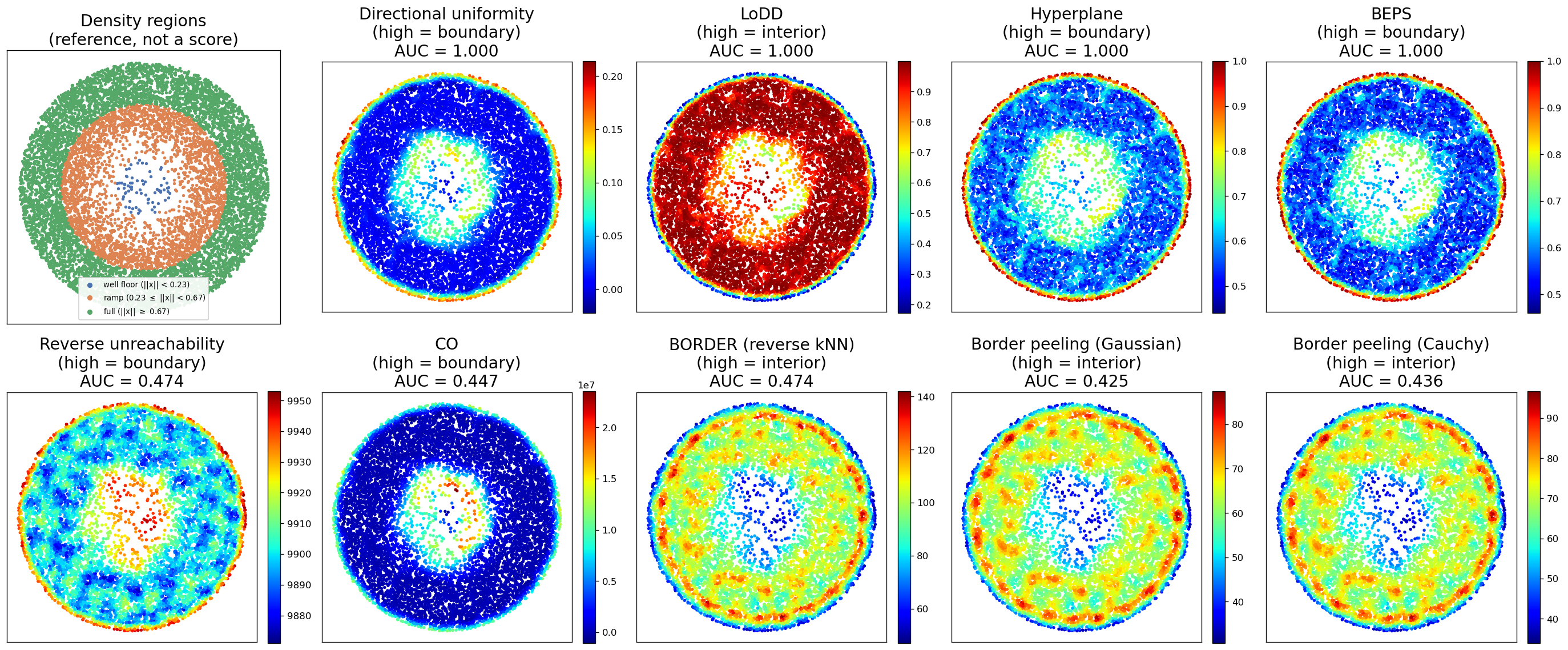}
    \caption{We sample $N = 10,000$ points non-uniformly from a 2D disk. In the interior, there is a ``well'' ($r < .23$) where the density is $1/9$ that of the density in the outer region ($r > .67$). The density increases linearly in between. For each method, we calculate the AUC value for boundary points ($\norm{x} > .98$) vs. interior well points ($\norm{x} < .23$), using neighborhood size $k = 100$. In the top row, we show the methods that are able to handle non-uniform density. In the bottom row, we show the methods that confuse boundary points and low-density interior points.} 
    \label{fig:density}
\end{figure}

\subsection{Surfaces with non-trivial geometry}

In \Cref{sec:surfaces}, we calculated boundary scores for the hyperplane method, DU, LoDD, and BEPS on a cone and a saddle-skewer data set with low noise levels. In \Cref{fig:benchmarks_appendix} below, we show the boundary scores for four more synthetic data sets.

\begin{figure}
    \includegraphics[width = \textwidth]{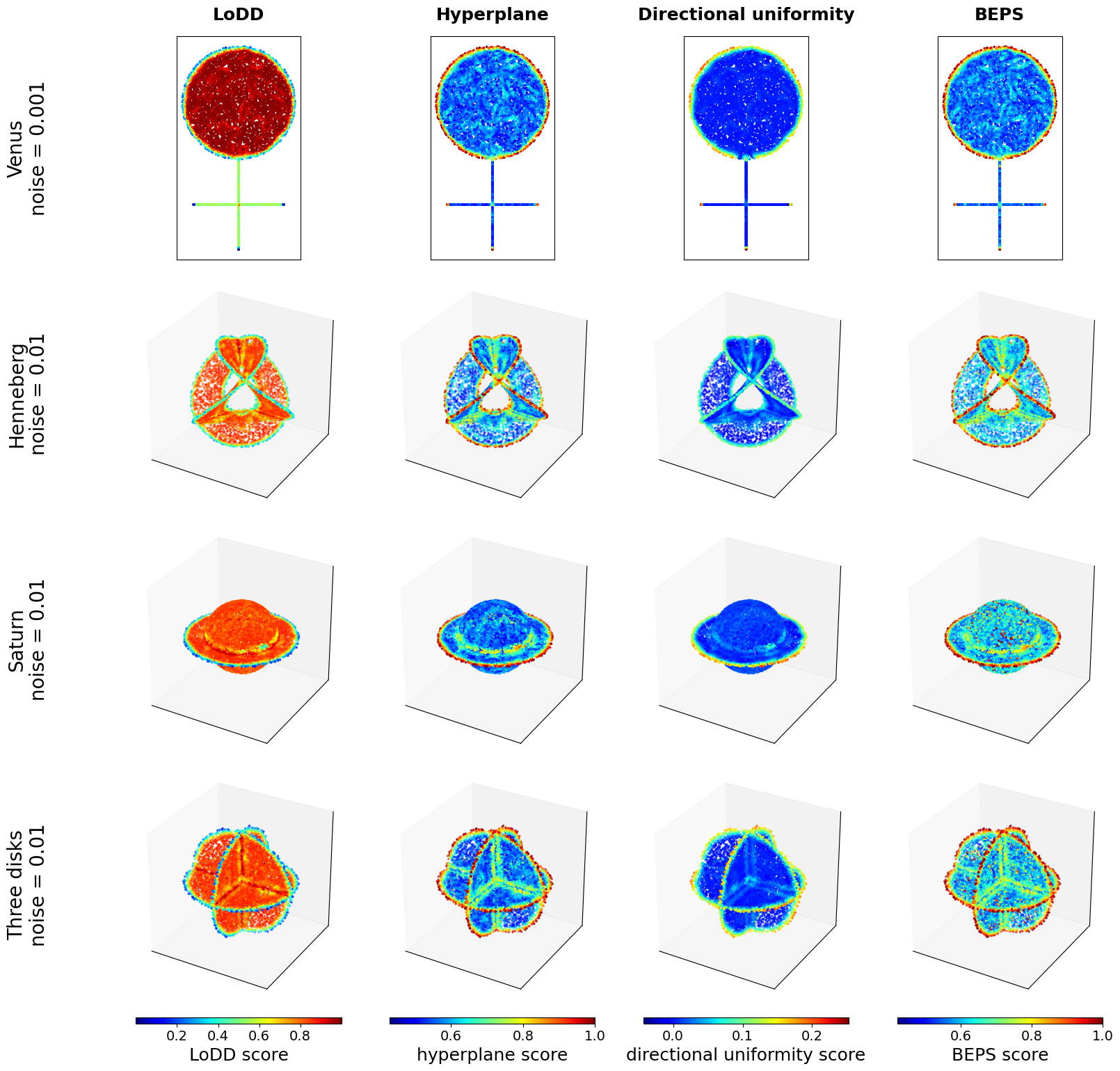}
    \caption{Boundary scores for four synthetic data sets with low noise levels. Each point cloud has $N = 10,000$ points, and we use neighborhood size $k = 100$.}
    \label{fig:benchmarks_appendix}
\end{figure}

\subsection{High ambient dimension}

In \Cref{sec:high_ambient}, we calculated boundary scores for the hyperplane method, DU, LoDD, and BEPS for surfaces in high ambient dimension with Gaussian noise. \Cref{fig:benchmarks_high_ambient} showed the Henneberg minimal surface in $\R^{15}$ and \Cref{fig:disks_high_ambient} analyzed the boundary scores for disks in $\R^D$. In \Cref{fig:benchmarks_high_ambient_appendix}, we show the boundary scores for three more surfaces in $\R^{15}$, and in \Cref{fig:disks_highdim_scatter}, we show the boundary scores for $2$D disks in $\R^{15}$ with noise.

\begin{figure}
    \includegraphics[width = \textwidth]{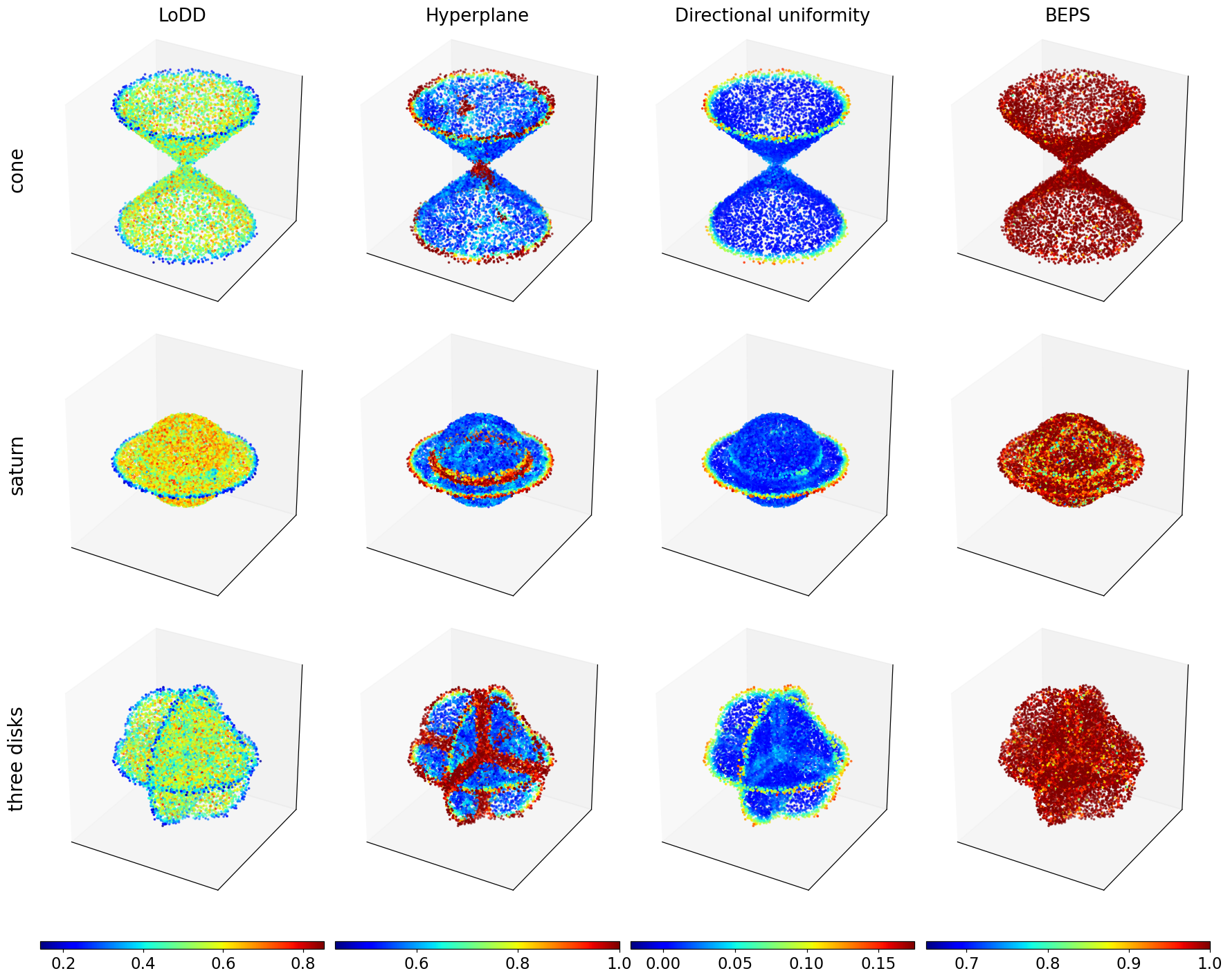}
    \caption{Boundary scores for three synthetic data sets embedded in $\R^{15}$ with noise. Scores here are plotted in $\R^3$.}
    \label{fig:benchmarks_high_ambient_appendix}
\end{figure}

\begin{figure}
\includegraphics[width =\textwidth]{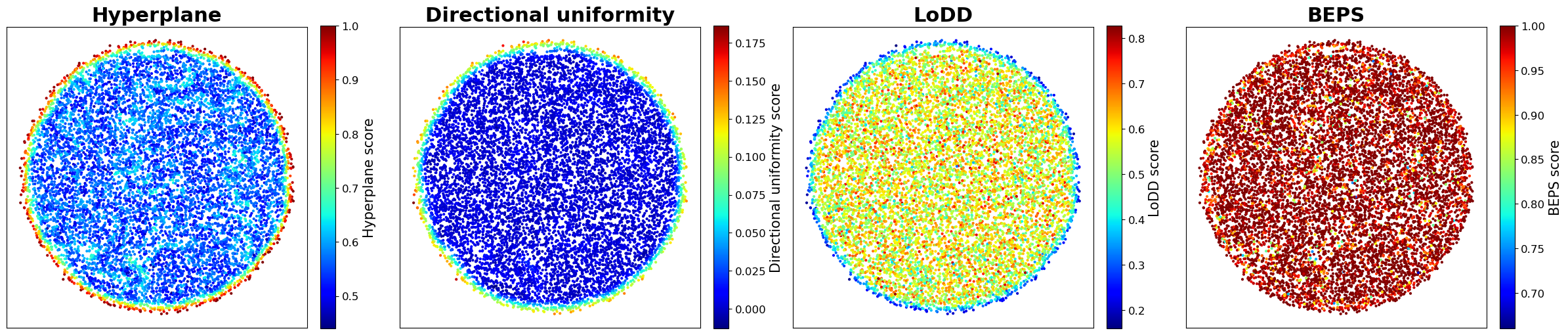}
\caption{Boundary scores on disks embedded in $\mathbb{R}^{15}$ with Gaussian noise $\sigma = .01$, with $N = 10,000$ points. We use neighborhood size $k = 100$. We plot only the first two coordinates.}
\label{fig:disks_highdim_scatter}
\end{figure}
\subsection{High noise levels}

In \Cref{sec:high_noise}, we tested our hyperplane and DU methods against LoDD and BEPS on 2D disks embedded in $\R^D$ with increasing noise level $\sigma$. In \Cref{fig:disks_D3_high_noise}, we show the boundary scores for disks in $R^3$ at the highest noise level $\sigma = .05$.

\begin{figure}[H]
    \centering
   \includegraphics[width = \textwidth]{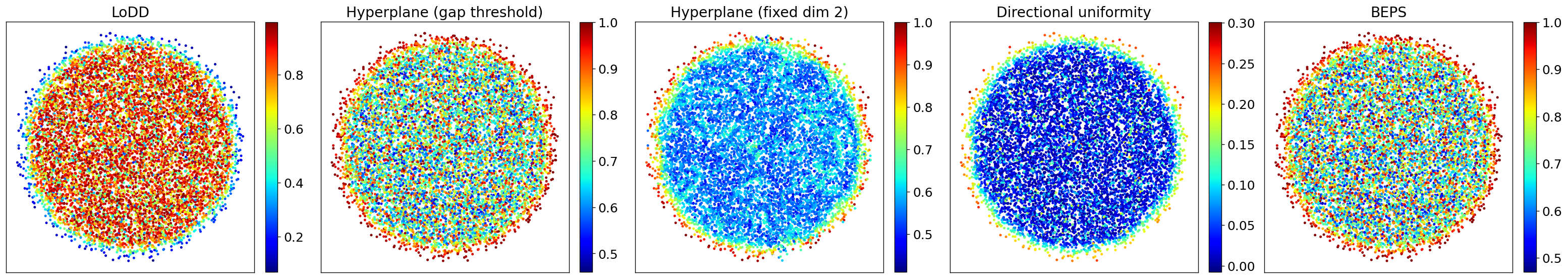}
    \caption{Boundary scores for 2D disks embedded in $\R^3$ with isotropic Gaussian noise $\sigma = .05$, projected here to $\R^2$.}
    \label{fig:disks_D3_high_noise}
\end{figure}

\subsection{High intrinsic dimension}

In \Cref{fig:high_n_balls}, we show the boundary scores for points sampled from a $10$-dimensional ball in $\R^{10}$. All methods achieve AUC $1$ except for reverse unreachability (AUC $.01$). For the AUC calculation, we define boundary and interior points as points with norm $> .95$ and $.8$, respectively.
\begin{figure}[H]
    \centering
    \includegraphics[width = \textwidth]{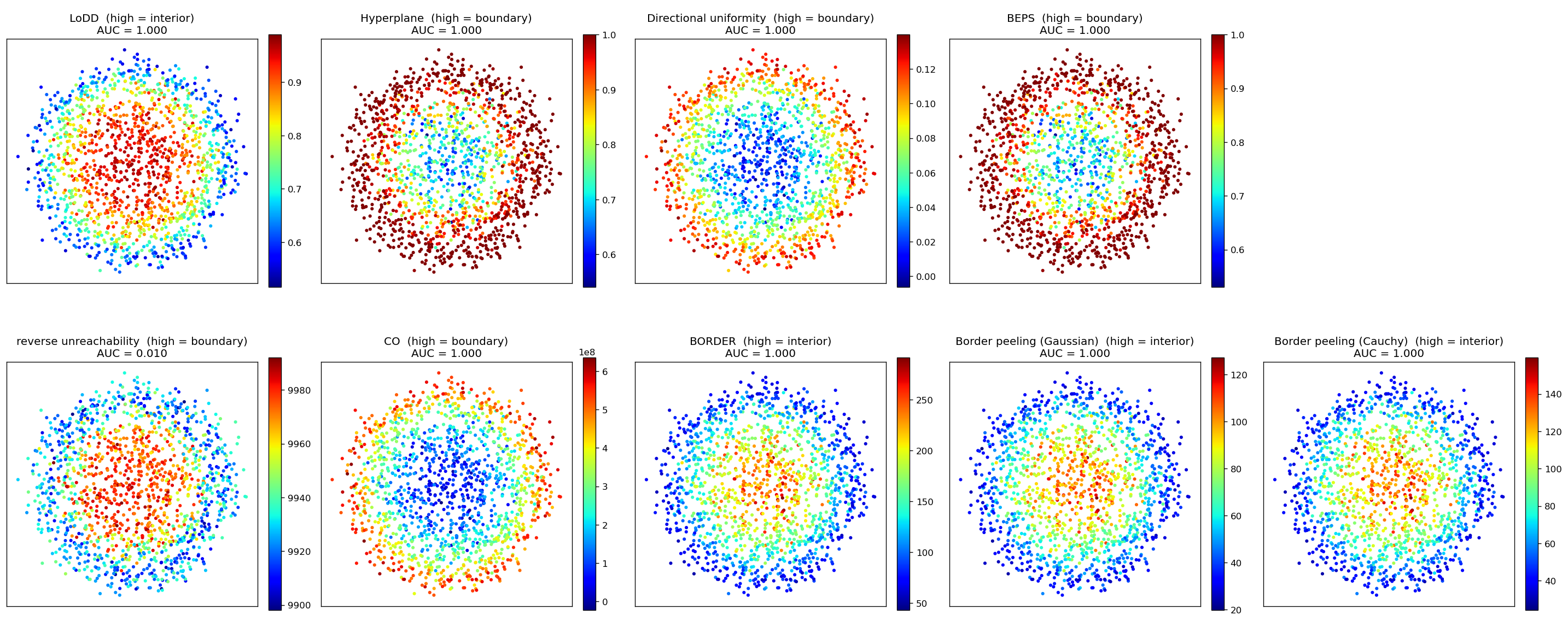}
    \caption{Boundary scores on a point cloud with $N = 10,000$ points sampled from a $10$-dimensional ball in $\R^{10}$, with isotropic Gaussian noise ($\sigma = .01$). The ground-truth boundary is difficult to display in $2$D because it is $9$-dimensional, so instead we take a thin 2D slice (a slightly thickened disk) the ball, plotting the $1500$ points closest to the $xy$ plane. Because of the curvature of the boundary (a sphere), the boundary appears thicker than the 2D balls in earlier figures, but this is an artifact of the way we are taking a slice and projecting to $\R^2$.}
    \label{fig:high_n_balls}
\end{figure}

\subsection{Spatial transcriptomics}

In \Cref{sec:st}, we analyzed glioblastoma slice ST1A1 from \citet{aldalahmah2023tissue}. In \Cref{fig:st1a1}, we showed HP and DU scores at neighborhood sizes $k = 150$ and $k = 100$, respectively. In \Cref{fig:st1a1_appendix} below, we show the boundary scores at neighborhood sizes $k = 75, 100, 150$ for each method.

\begin{figure}
    \centering
    \includegraphics[width = \textwidth]{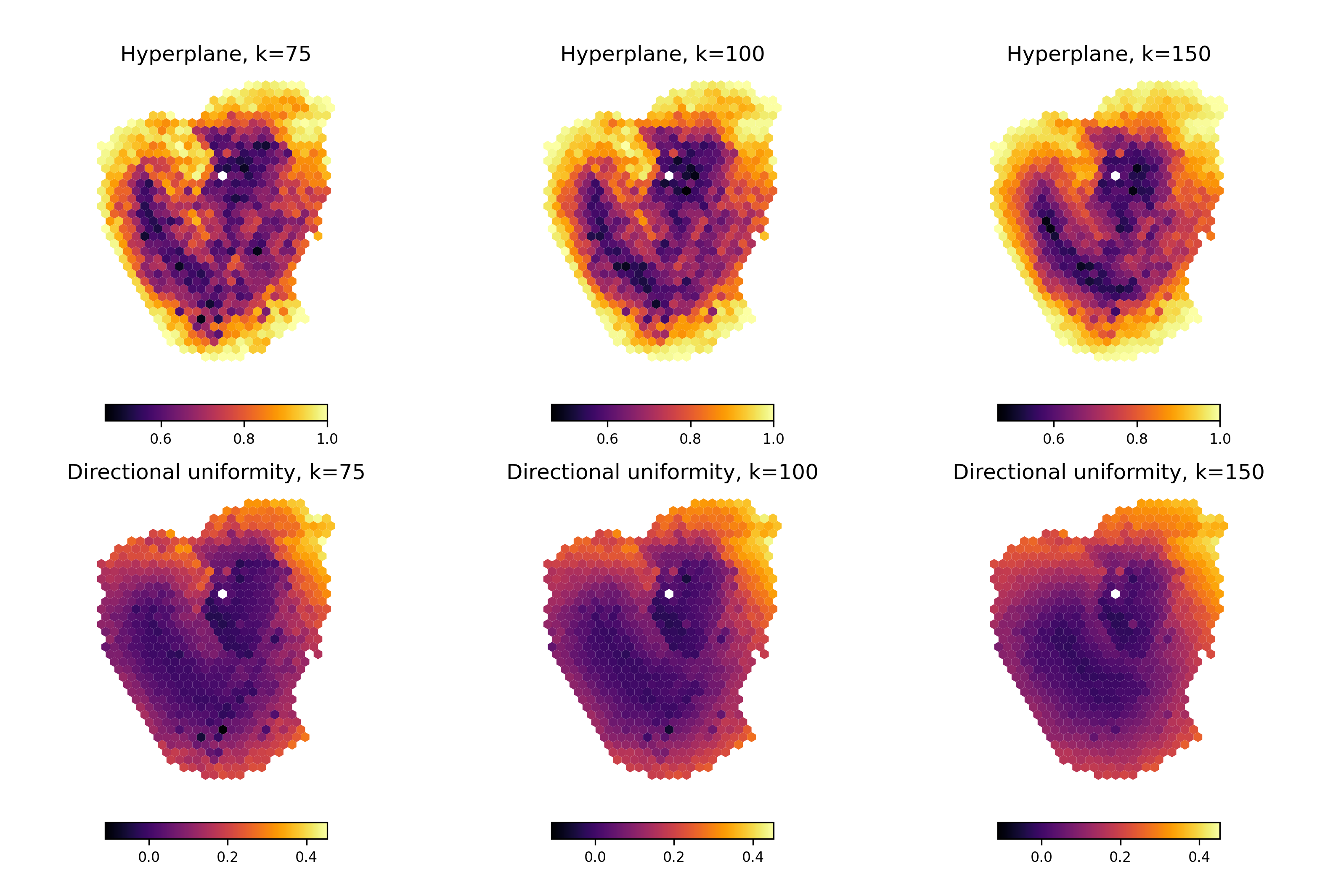}
    \caption{HP and DU boundary scores on the ST1A1 glioblastoma slice at neighborhood sizes $k = 75, 100, 150$.}
    \label{fig:st1a1_appendix}
\end{figure}
\subsection{Llama-3.1-8B activation geometry}

We reproduce the character-counting manifold of \citep{gurnee2026} for Llama-3.1-8B using the methodology in \Cref{sec:llm_details}. In \Cref{fig:llama}, we show the averaged manifold and our detected boundaries. We also show the average boundary scores at each character position $i$ and line width $w$ when we compute boundary scores for the un-averaged, per-token point cloud. See \Cref{fig:pythia} for Pythia-2.8B.

\begin{figure}[H]
    \centering
    \subfloat[]{\includegraphics[width = .25\textwidth]{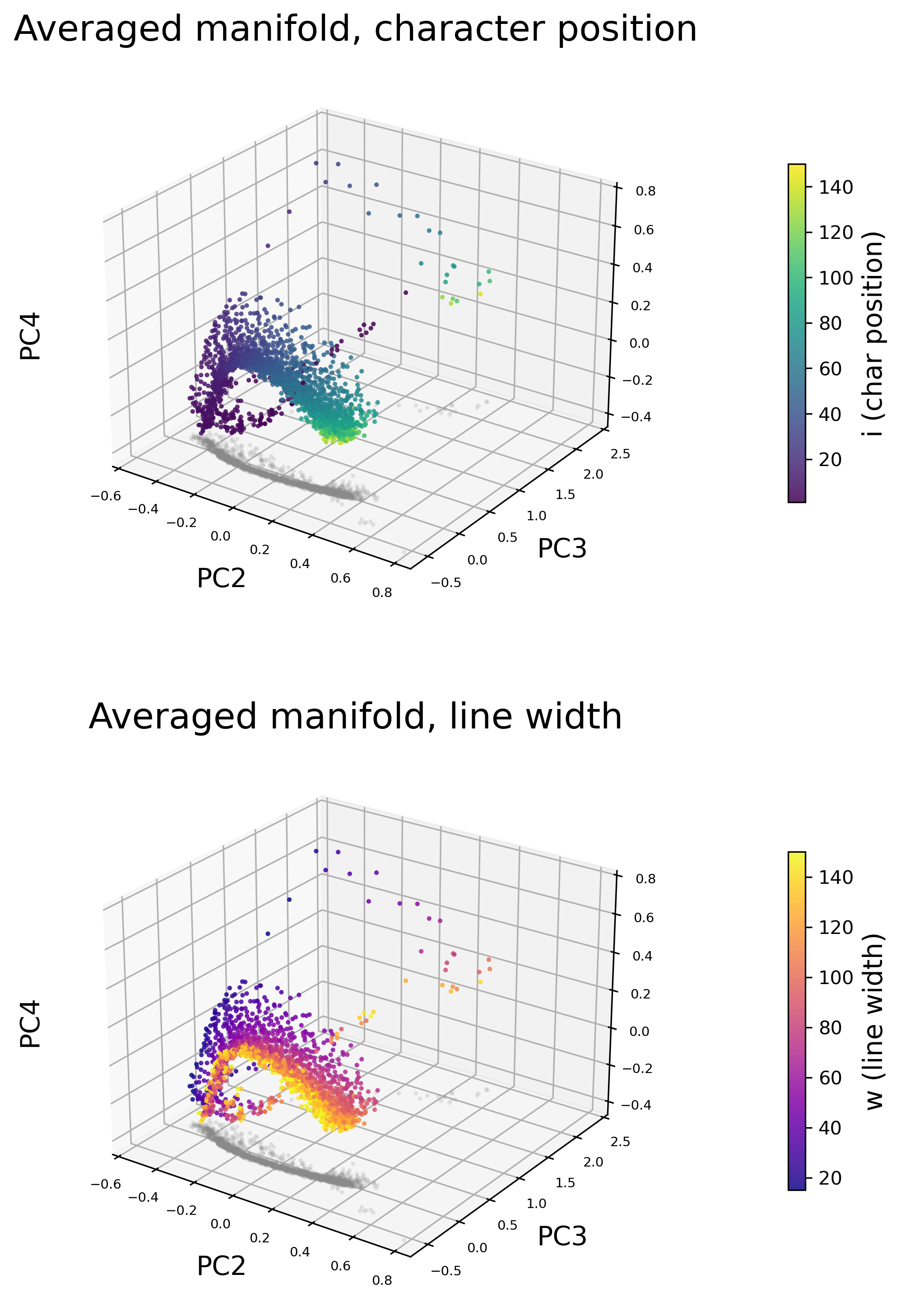}}
    \subfloat[]{\includegraphics[width = .25\textwidth]{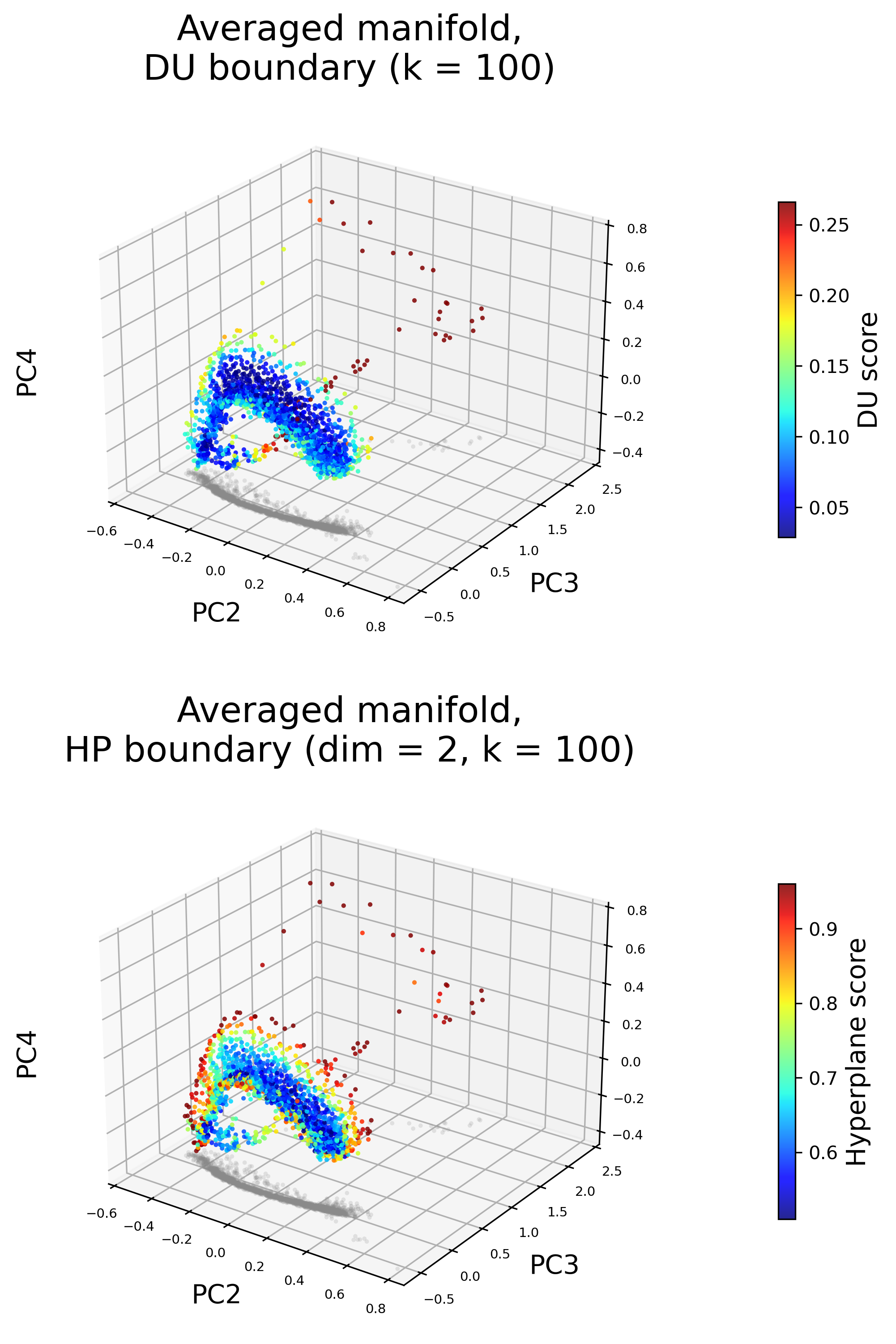}}
    \subfloat[]{\includegraphics[width = .25\textwidth]{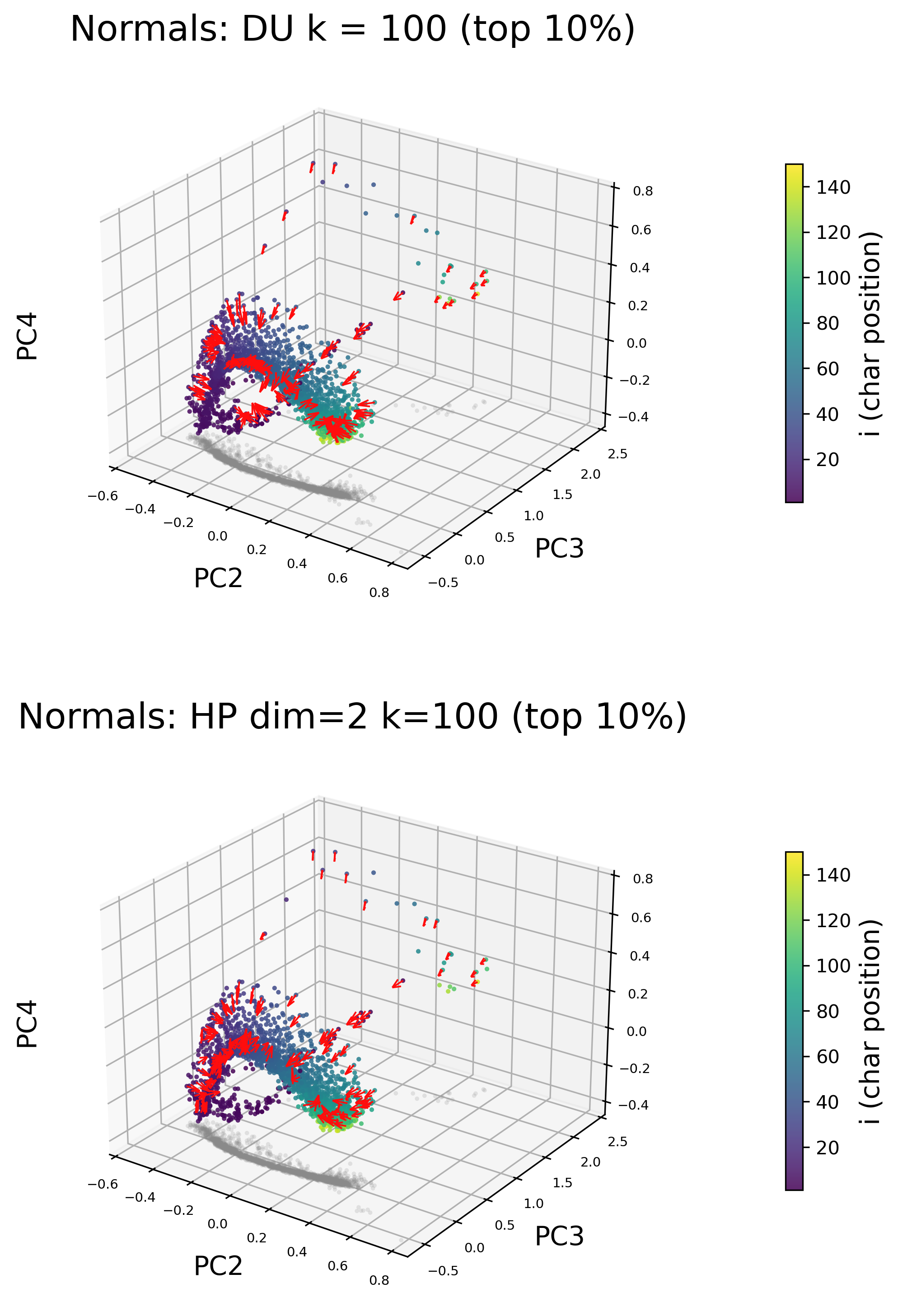}}
    \subfloat[]{\includegraphics[width = .25\textwidth]{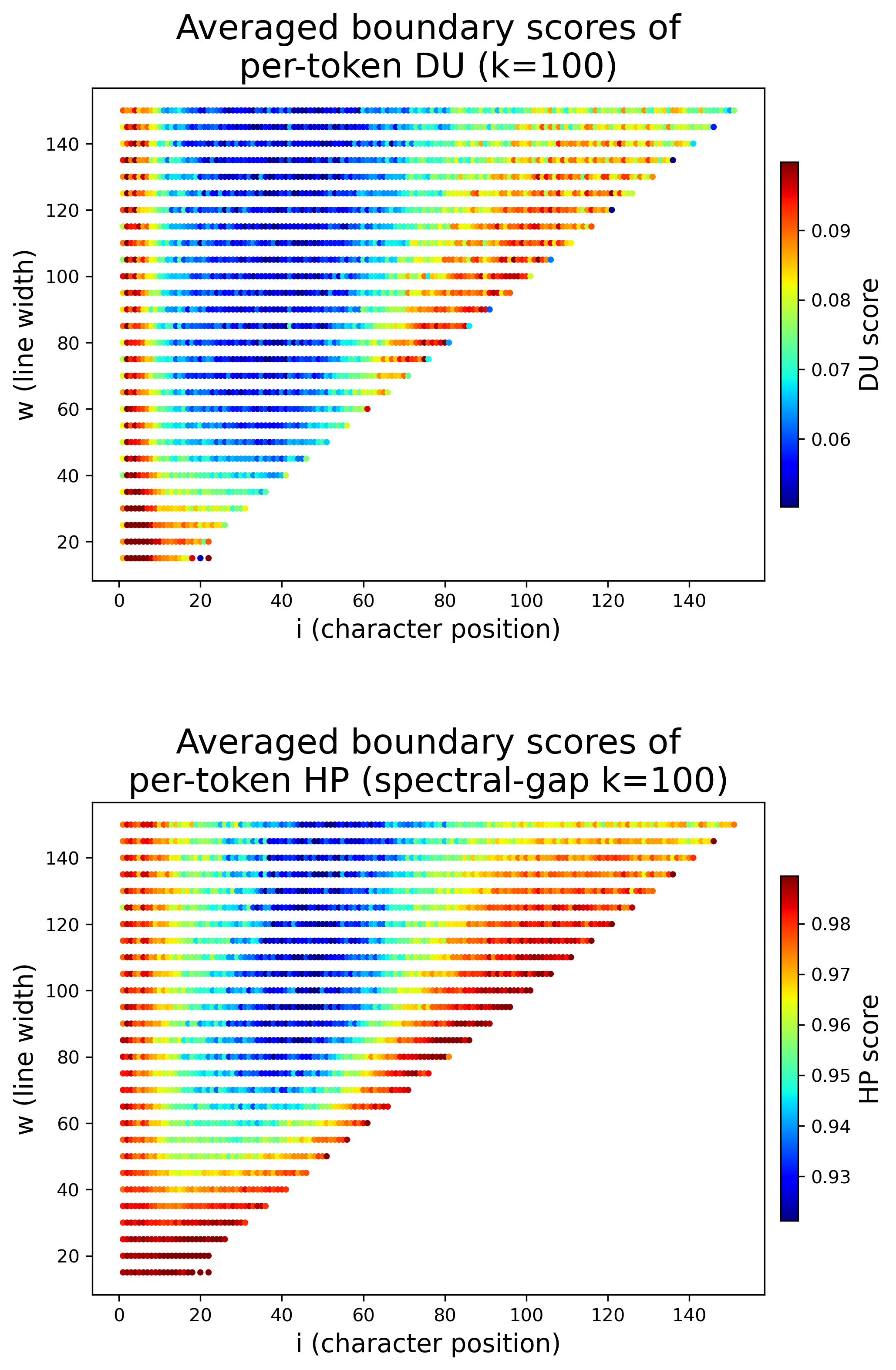}}
    \caption{Boundary detection on the character-counting manifold for Llama-3.1-8B. In (c), we calculate normal vectors only for the top 10\% scoring boundary points.}
    \label{fig:llama}
\end{figure}

\FloatBarrier 

\section{Mathematical proofs}

\subsection{Proof of \Cref{thm:full} for interior manifold points}\label{sec:math_full}

Suppose that $\{\vec{z_i}\}_{i=1}^k$ are sampled independently and uniformly at random from an $n$-dimensional ball. Let $\vec{w} = \frac{\sum_{i} \vec{z_i}}{\norm{\sum_i \vec{z_i}}}$ and let $S_{nk}$ be defined as in \Cref{eq:snk}. Define
\begin{align*}
    X_i &:= \mathbf{1}_{\vec{w} \cdot \vec{z_i} \geq 0}\,, \\
    Q_{nk} &:= \p[ \vec{w} \cdot \vec{z_1} \geq 0]\,.
\end{align*}
\begin{proof}[Proof of \Cref{thm:full}]
     By Chebyschev's inequality,
    \begin{equation*}
        \p\Big[ \big\vert S_{nk}/k - \fracc{2} \big\vert > \epsilon \Big] \leq \frac{\var(S_{nk})}{k^2\epsilon^2} = \frac{Q_{nk}(1 - Q_{nk})}{k\epsilon^2} + \frac{\cov(X_1, X_2)}{\epsilon^2}\,.
    \end{equation*}
    The first summand is bounded above by $k^{-1}\epsilon^{-2}$ because $0 \leq Q_{nk} \leq 1$. By \Cref{lem:cov_full}, there are constants $C_1, C_2$ such that $|\cov(X_1, X_2)| \leq C_1 \sqrt{\frac{n}{k}} + C_2 \frac{n}{k}$ for all $k \geq 3$. Therefore,
    \begin{equation*}
        \p\Big[ \big\vert S_{nk}/k - \fracc{2} \big\vert > \epsilon \Big] \leq \epsilon^{-2}\Big( C_1 \sqrt{\frac{n}{k}} + (1 + C_2) \frac{n}{k}\Big)\,.
    \end{equation*}
\end{proof}

Below, we prove \Cref{lem:cov_full} and other supporting lemmas. First, we establish basic statistics about a point $\vec{z_1}$ drawn uniformly at random from an $n$-dimensional ball. The mean is $\vec{0}$ by symmetry of the ball.
\begin{lemma}\label{lem:variance_full}
    The covariance matrix for $\vec{z_1}$ is $\fracc{n+2}I_n$.
\end{lemma}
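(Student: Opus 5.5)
Since $\vec{z_1}$ is uniform on the unit ball $B^n$, its mean is $\vec{0}$ and its covariance matrix is $\E[\vec{z_1}\vec{z_1}^T]$. We plan two steps: use symmetry to reduce this matrix to a scalar multiple of the identity, and then compute that scalar from the radial distribution.

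For the symmetry step, note that the uniform distribution on $B^n$ is invariant under every orthogonal map $O$, so $\E[\vec{z_1}\vec{z_1}^T] = O\,\E[\vec{z_1}\vec{z_1}^T]\,O^T$ for all $O \in O(n)$. Using the reflection that negates a single coordinate $j$ shows that the off-diagonal entries $\E[z_{1j} z_{1l}]$ vanish for $j \neq l$. Using coordinate permutations shows that all diagonal entries are equal. Hence the covariance is $c\,I_n$, and taking traces gives $c = \E[\norm{\vec{z_1}}^2]/n$.

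For the scalar, let $r = \norm{\vec{z_1}}$. The volume of the ball of radius $t$ scales like $t^n$, so $\p[r \leq t] = t^n$ for $t \in [0,1]$, and $r$ has density $n t^{n-1}$ on $[0,1]$. Then
\begin{equation*}
    \E[r^2] = \int_0^1 n t^{n+1}\,dt = \frac{n}{n+2}\,,
\end{equation*}
which gives $c = \fracc{n+2}$. We do not expect any step to be an obstacle. The only point to state explicitly is the normalization: the ball is the \emph{unit} ball, which the claimed value $\fracc{n+2}$ implicitly assumes.
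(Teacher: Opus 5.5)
Your proposal is correct and takes essentially the same route as the paper: reflection symmetry kills the off-diagonal entries, and all diagonal entries are equal, each being $\frac{1}{n}\E[\norm{\vec{z_1}}^2] = \frac{1}{n}\int_0^1 r^2 \cdot n r^{n-1}\,dr = \frac{1}{n+2}$. The only cosmetic difference is how the radial density $n r^{n-1}$ is obtained: you derive it from the CDF $\p[r \leq t] = t^n$, while the paper uses that the density is proportional to the surface area of the sphere of radius $r$.
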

\begin{proof}
    First, we note that $\E[z_{1i}^2]$ is independent of $i$. Let $r = \norm{\vec{z_1}}$ be the radial coordinate of $\vec{z_1}$. For any $i$, we have
    \begin{equation*}
        n \E[z_{1i}^2] = \sum_{j=1}^n \E[z_{1j}^2] = \E[r^2] = \int_0^1 r^2\phi(r)dr
    \end{equation*}
    where $\phi(r)$ is the pdf for $r$ in an $n$-dimensional ball. We have that $\phi(r)$ is proportional to the volume of an $(n-1)$-dimensional sphere (not ball) of radius $r$, so $\phi(r) = Ar^{n-1}$ for a constant $A$. Because $\int_0^1 \phi(r)dr = 1$, we must have $\phi(r) = n r^{n-1}$. Therefore,
    \begin{equation*}
        \var(z_{1i}) = \E[z_{1i}^2] = \fracc{n} \int_0^1 r^2\phi(r)dr = \int_0^1  r^{n+1}dr = \fracc{n+2}\,.
    \end{equation*}
    By symmetry of the ball,
    \begin{equation*}
        \E[z_{1i}z_{1j}] = \E[-z_{1i} z_{1j}] = -\E[z_{1i}z_{1j}]
    \end{equation*}
    for $i \neq j$, so $\cov(z_{1i}, z_{1j}) = \E[z_{1i}z_{1j}] = 0$.
\end{proof}

We will use the following lemma in an application of the Berry-Esseen theorem to bound the probability $Q_{nk}$.
\begin{lemma}\label{lem:third_moment_full}
    The third moment of $z_{1i}$ is
    \begin{equation*}
        \E[|z_{1i}|^3] = \frac{4v_{n-1}}{v_n(n+3)(n+1)}\,,
    \end{equation*}
    where $v_n$ is the volume of a unit $n$-dimensional ball.
\end{lemma}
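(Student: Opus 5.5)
We compute the marginal density of a single coordinate and then integrate. By symmetry of the ball, $\E[|z_{1i}|^3]$ does not depend on $i$, so we may take $i=1$ and write $t = z_{11}$.

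\textbf{Step 1: the marginal density.} For $t\in[-1,1]$, the slice of the unit $n$-ball at first coordinate $t$ is an $(n-1)$-dimensional ball of radius $\sqrt{1-t^2}$. Its volume is $v_{n-1}(1-t^2)^{(n-1)/2}$. Since $\vec{z_1}$ is uniform on a set of volume $v_n$, the density of $t$ is
\begin{equation*}
p(t) = \frac{v_{n-1}}{v_n}(1-t^2)^{(n-1)/2}.
\end{equation*}

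\textbf{Step 2: reduce to a one-variable integral.} The integrand $|t|^3p(t)$ is even, so
\begin{equation*}
\E[|t|^3] = \frac{2v_{n-1}}{v_n}\int_0^1 t^3(1-t^2)^{(n-1)/2}\,dt.
\end{equation*}

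\textbf{Step 3: evaluate the integral.} Substitute $u = t^2$, so $t^3\,dt = \tfrac12 u\,du$. This gives
\begin{equation*}
\tfrac12\int_0^1 u(1-u)^{(n-1)/2}\,du = \tfrac12 B\!\left(2,\tfrac{n+1}{2}\right).
\end{equation*}
Rather than use Gamma functions, we evaluate it directly by writing $u = 1-(1-u)$. With $a=(n-1)/2$,
\begin{equation*}
\int_0^1 u(1-u)^a\,du = \frac{1}{a+1}-\frac{1}{a+2} = \frac{1}{(a+1)(a+2)} = \frac{4}{(n+1)(n+3)}.
\end{equation*}

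\textbf{Step 4: combine.} Putting the pieces together,
\begin{equation*}
\E[|z_{1i}|^3] = \frac{2v_{n-1}}{v_n}\cdot\frac12\cdot\frac{4}{(n+1)(n+3)} = \frac{4v_{n-1}}{v_n(n+1)(n+3)},
\end{equation*}
which is the claimed formula.

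The only step needing care is Step 1, namely normalizing the slice volume correctly by $v_n$. The rest is elementary calculus.
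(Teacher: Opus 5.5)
Your proposal is correct and follows the paper's proof essentially step for step: the slice density normalized by $v_n$, the reduction to $\frac{2v_{n-1}}{v_n}\int_0^1 t^3(1-t^2)^{(n-1)/2}\,dt$ by evenness, and the substitution $u=t^2$. The only difference is cosmetic. You evaluate $\int_0^1 u(1-u)^{(n-1)/2}\,du$ elementarily by writing $u = 1-(1-u)$, whereas the paper uses the Gamma-function form of $B\bigl(2,\tfrac{n+1}{2}\bigr)$.
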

\begin{proof}
    The slice of the ball where $z_{1i} = t \in [-1, 1]$ is an $(n-1)$-dimensional ball of radius $\sqrt{1 - t^2}$, which has volume $v_{n-1}(1-t^2)^{\frac{n-1}{2}}$. Therefore, the pdf is
    \begin{equation*}
        f(t) = A^{-1}v_{n-1}(1-t^2)^{\frac{n-1}{2}}\,,
    \end{equation*}
    where $A = \int_{-1}^1 v_{n-1}(1-t^2)^{\frac{n-1}{2}} = v_n$. The third moment is
    \begin{equation*}
        \E[|z_{1i}|^3] = 2 \int_0^1 z_{1i}^3 f(t)dt = \frac{2v_{n-1}}{v_n} \int_0^1 t^3 (1-t^2)^{\frac{n-1}{2}}dt\,.
    \end{equation*}
    Substituting $s = t^2$,
    \begin{equation*}
        \E[|z_{1i}|^3] = \frac{v_{n-1}}{v_n} B\Big(2, \frac{n+1}{2}\Big) = \frac{v_{n-1}}{v_n} \frac{\Gamma(2)\Gamma(\frac{n+1}{2})}{\Gamma(2 + \frac{n+1}{2})} 
        = \frac{v_{n-1}}{v_n} \frac{\Gamma(\frac{n+1}{2})}{\frac{n+3}{2}\frac{n+1}{2}\Gamma(\frac{n+1}{2})}
        = \frac{4v_{n-1}}{v_n(n+3)(n+1)}\,.
    \end{equation*}
\end{proof}

\begin{lemma}\label{lem:prob_full}
For all dimensions $n$ and neighborhood sizes $k$,
\begin{equation*}
    \fracc{2} \leq Q_{nk} \leq \fracc{2} + \fracc{\sqrt{2\pi}} \sqrt{\frac{n+2}{k-1}} + \frac{8C_{BE}}{\sqrt{2\pi(k-1)}}\,.
\end{equation*}
where $C_{BE}$ is a universal constant from the Berry-Esseen theorem.
\end{lemma}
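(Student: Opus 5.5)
The plan is to rewrite the event $\vec{w}\cdot\vec{z_1}\geq 0$ so that it depends on $\vec{z_1}$ and on a sum independent of it. Since $\vec{w}$ is a positive multiple of $\sum_i \vec{z_i}$, the event equals $\{\norm{\vec{z_1}}^2 + \vec{z_1}\cdot \vec{T} \geq 0\}$, where $\vec{T} := \sum_{i=2}^k \vec{z_i}$ is independent of $\vec{z_1}$.

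\emph{Lower bound.} Condition on $\vec{z_1}$. By rotational symmetry of the ball, $\vec{z_1}\cdot\vec{T}$ is symmetric about $0$. Hence $\p[\vec{z_1}\cdot \vec{T} \geq -\norm{\vec{z_1}}^2 \mid \vec{z_1}] \geq \p[\vec{z_1}\cdot\vec{T}\geq 0\mid \vec{z_1}]\geq 1/2$. Averaging over $\vec{z_1}$ gives $Q_{nk}\geq 1/2$.

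\emph{Upper bound.} Again condition on $\vec{z_1}$, write $r=\norm{\vec{z_1}}$, and set $u=\vec{z_1}/r$. By rotational invariance, $u\cdot\vec{T}$ has the law of $T_1=\sum_{i=2}^k z_{i1}$, which is a sum of $k-1$ i.i.d.\ one-dimensional marginals. By \Cref{lem:variance_full} each has variance $\sigma^2=1/(n+2)$, and by \Cref{lem:third_moment_full} each has third absolute moment $\rho$. The conditional probability is $\p[T_1 \geq -r]$. By symmetry this equals $\tfrac12 + \p[-r\leq T_1<0]$, which is at most $\tfrac12 + \p[-1\le T_1<0]$ since $r \leq 1$. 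Normalize by $\sigma\sqrt{k-1}$ and apply Berry--Esseen:
\begin{equation*}
\p\Big[-\tfrac{1}{\sigma\sqrt{k-1}} \leq \tfrac{T_1}{\sigma\sqrt{k-1}} < 0\Big] \leq \Phi(0)-\Phi\Big(-\tfrac{1}{\sigma\sqrt{k-1}}\Big) + \frac{2C_{BE}\rho}{\sigma^3\sqrt{k-1}}.
\end{equation*}
The Gaussian interval term is at most $\frac{1}{\sqrt{2\pi}}\cdot\frac{1}{\sigma\sqrt{k-1}} = \frac{1}{\sqrt{2\pi}}\sqrt{\frac{n+2}{k-1}}$, because the Gaussian density is bounded by $1/\sqrt{2\pi}$. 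This is exactly the middle term of the lemma. (Alternatively one could keep $r$ and use $\E[r] = n/(n+1)$, but the crude bound $r\le1$ already suffices.)

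\emph{Main obstacle.} The remaining step is to show that $\rho/\sigma^3 \leq 4/\sqrt{2\pi}$ uniformly in $n$. This makes the error term at most $\frac{8C_{BE}}{\sqrt{2\pi(k-1)}}$, where the factor $2$ comes from the two CDF evaluations. Using \Cref{lem:third_moment_full},
\begin{equation*}
\frac{\rho}{\sigma^3} = \frac{4 v_{n-1}(n+2)^{3/2}}{v_n (n+1)(n+3)},\qquad \frac{v_{n-1}}{v_n} = \frac{\Gamma(\frac n2+1)}{\sqrt{\pi}\,\Gamma(\frac{n+1}{2})}.
\end{equation*}
I would bound the Gamma ratio with Gautschi/Wendel's inequality, $\Gamma(x+1)/\Gamma(x+\tfrac12)\leq \sqrt{x+\tfrac12}$ at $x=n/2$, which gives $\frac{v_{n-1}}{v_n}\leq \sqrt{(n+1)/(2\pi)}$. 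Then
\begin{equation*}
\frac{\rho}{\sigma^3}\leq \frac{4}{\sqrt{2\pi}}\cdot\frac{(n+2)^{3/2}}{\sqrt{n+1}\,(n+3)},
\end{equation*}
and $(n+2)^{3}\leq (n+1)(n+3)^2$ holds since $(n+1)(n+3)^2-(n+2)^3 = n^2+3n+1 > 0$. This elementary Gamma-function estimate is the delicate part; if the exact constant fails, the structure of the argument still yields a bound of the same form with a different absolute constant. Finally, integrating the conditional bound over $\vec{z_1}$ gives the stated upper bound on $Q_{nk}$.
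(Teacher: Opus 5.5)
Your proposal is correct and follows essentially the same route as the paper: condition on $\vec{z_1}$, get the lower bound from symmetry, and get the upper bound from a one-dimensional Berry--Esseen estimate (using the earlier variance and third-moment lemmas), the Gaussian density bound, and a Gamma-ratio inequality. The only difference is bookkeeping. The paper compares with the single value $\Phi\big(\sqrt{(n+2)/(k-1)}\,\norm{\vec{u}}\big)$, so it incurs one Berry--Esseen error and can use the cruder Gautschi bound $v_{n-1}/v_n\le\sqrt{(n+2)/(2\pi)}$. Your interval formulation doubles the error, and you correctly compensate with Wendel's sharper bound $\sqrt{(n+1)/(2\pi)}$ together with $(n+2)^3\le(n+1)(n+3)^2$; you could also avoid the factor $2$ entirely by using $\p[T_1<0]=1/2$ exactly.
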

\begin{proof}

    Note that
    \begin{equation*}
        Q_{nk} = \p\Bigg[ \sum_{j > 1} \vec{z_j} \cdot \vec{z_1} \geq - \norm{\vec{z_1}^2}\Bigg]\,.
    \end{equation*}
     Therefore,
    \begin{equation*}
        Q_{nk} \geq \p\Bigg[ \sum_{j > 1} \vec{z_j} \cdot \vec{z_1} \geq 0 \Bigg] = \fracc{2}\,.
    \end{equation*}
    because $\norm{\vec{z_1}}^2 \geq 0$. The last equality holds because $\sum_{j > 1} \vec{z_j}$ and $\vec{z_1}$ are independent vectors in the $n$-ball that are equally likely to have angle $\theta \in [0, \pi/2]$ and angle $\pi - \theta \in [\pi/2, \pi]$.
    
    Next, we prove the upper bound on $Q_{nk}$. Conditioning on $\vec{z_1} = \vec{u}$, we have
    \begin{equation}\label{eq:Qnk_prob_1}
        Q_{nk} = \E_{\vec{z_1}}\Bigg[ \p\Big[ \sum_{j > 1} \vec{z_j} \cdot \vec{u} \geq -\norm{\vec{u}}^2  \mid \vec{z_1} = \vec{u} \Big]\Bigg]
    \end{equation}
    The sum $\sum_{j > 1} \vec{z_j} \cdot \vec{u}$ is a sum of $k-1$ iid random variables with mean $0$ because $\E[\vec{z_j}] = 0$. We will apply the Berry-Esseen theorem to approximate it by a normal random variable; as preliminaries, we need its variance and third moment of each term in the sum. By \Cref{lem:variance_full}, we have $\var(\sum_{j > 1} \vec{z_j} \cdot \vec{u}) = \frac{k-1}{n+2} \norm{\vec{u}}^2$. To calculate the third moment $\E[|\vec{z_j} \cdot \vec{u}|^3]$, note that
    \begin{equation*}
        \E\Big[|\vec{z_j} \cdot \vec{u}|^3\Big] = \E\Big[|\vec{z_j} \cdot \hat{u}|^3\Big]  \, \norm{\vec{u}}^3\,,
    \end{equation*}
    where $\hat{u} = \vec{u}/\norm{\vec{u}}$. By rotational symmetry we may assume that $\hat{u} = (1, 0, \ldots, 0)$. Therefore, $\vec{z_j} \cdot \hat{u} = z_{j1}$ and so $\E[|\vec{z_j} \cdot \vec{u}|^3] = \E[|z_{j1}|^3]\norm{\vec{u}}^3$. By \Cref{lem:third_moment_full},
    \begin{equation*}
        \E\Big[|\vec{z_j} \cdot \vec{u}|^3\Big] = \frac{4v_{n-1}}{v_n(n+1)(n+3)} \norm{\vec{u}}^3 \leq \frac{4v_{n-1}}{v_n(n+1)(n+3)}\,.
    \end{equation*}
    
    By the Berry-Esseen theorem,
    \begin{equation*}
        \Bigg\vert \p\Big[ \sum_{j > 1} \vec{z_j} \cdot \vec{u} \geq -\norm{\vec{u}}^2\Big] - \Phi\Big(\sqrt{\frac{n+2}{k-1}}\norm{\vec{u}}\Big) \Bigg\vert \leq \frac{4C_{\text{BE}}v_{n-1}}{v_n} \frac{(n+2)^{3/2}}{(n+1)(n+3)\sqrt{k-1}}
    \end{equation*}
    where $C_{BE}$ is a universal constant. Finally, we upper bound $\Phi\Big(\sqrt{\frac{n+2}{k-1}}\norm{\vec{u}}\Big)$. The cdf of a normal variable decreases monotonically after $0$, so
    \begin{equation*}
        \Phi\Big(\sqrt{\frac{n+2}{k-1}}\norm{\vec{u}}\Big)  \leq \Phi\Big(\sqrt{\frac{n+2}{k-1}}\Big)= \fracc{2} + \int_0^{\sqrt{\frac{n+2}{k-1}}} \fracc{\sqrt{2\pi}} e^{-t^2/2} dt \leq \fracc{2} + \fracc{\sqrt{2\pi}} \sqrt{\frac{n+2}{k-1}}\,.
    \end{equation*}
    All together,
    \begin{equation*}
        \p\Big[ \sum_{j > 1} \vec{z_j} \cdot \vec{u} \geq -\norm{\vec{u}}^2\Big] \leq \fracc{2} + \fracc{\sqrt{2\pi}} \sqrt{\frac{n+2}{k-1}} + \frac{4C_{\text{BE}}v_{n-1}}{v_n} \frac{(n+2)^{3/2}}{(n+1)(n+3)\sqrt{k-1}}
    \end{equation*}
    for all $\vec{u}$. Consequently, the right-hand side is an upper bound for $Q_{nk}$ by \Cref{eq:Qnk_prob_1}. 

    Finally, we upper bound $v_{n-1}/v_n$ to finish the proof. The volume of a unit $n$-ball is $v_n = \pi^{n/2}/\Gamma(n/2 + 1)$. By Gautschi's inequality \citep{gautschi_elementary_1959},
    \begin{equation*}
        \frac{v_{n-1}}{v_n} = \frac{\Gamma(\frac{n}{2} + 1)}{\Gamma(\frac{n - 1}{2} + 1) \sqrt{\pi}} \leq \sqrt{\frac{n + 2}{2\pi}}\,.
    \end{equation*}
    Therefore,
    \begin{equation*}
        \frac{4C_{\text{BE}}v_{n-1}}{v_n} \frac{(n+2)^{3/2}}{(n+1)(n+3)\sqrt{k-1}} \leq \frac{8C_{BE}}{\sqrt{2\pi(k-1)}}\,.
    \end{equation*}
\end{proof}

We will use \Cref{lem:third_moment_bound} and \Cref{lem:f_tau_approximation} below to bound $\cov(X_1, X_2)$ using an application of the multi-dimensional Berry-Esseen theorem.
\begin{lemma}\label{lem:third_moment_bound}
Suppose that $\hat{u_1}, \hat{u_2}$ are unit vectors in $\R^n$ with angle $\theta \neq 0, \pi$. Let $U$ be the matrix whose columns are $\hat{u_1}, \hat{u_2}$ and let $R = \begin{bmatrix}1 & \cos(\theta) \\ \cos(\theta) & 1 \end{bmatrix}$. If $\vec{z}$ is sampled uniformly at random from the $n$-ball, then
\begin{equation*}
    \E\Big[\norm{R^{-1/2} U^T \vec{z}}^3\Big] = \E\Big[\norm{P_U \vec{z}}^3\Big] \leq \frac{2}{n+2}\,,
\end{equation*}
where $P_U$ is the orthogonal projection matrix onto $\text{span}(\hat{u_1}, \hat{u_2})$.
\end{lemma}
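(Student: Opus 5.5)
Two facts do the work: the equality is pure linear algebra, and the bound comes from the radial distribution of the $2$-dimensional projection of a uniform point in the $n$-ball.

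\textbf{Step 1: the equality.} Since $\theta \neq 0,\pi$, the columns of $U$ are linearly independent, so $R = U^TU$ is positive definite and $R^{-1/2}$ exists.

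Set $M := R^{-1/2}U^T$, a $2\times n$ matrix. Then
\begin{equation*}
MM^T = R^{-1/2}U^TUR^{-1/2} = R^{-1/2}RR^{-1/2} = I_2,
\end{equation*}
so $M^T$ has orthonormal columns. Those columns span $\text{span}(\hat{u_1},\hat{u_2})$. Hence $M^TM$ is exactly the orthogonal projection $P_U$, and
\begin{equation*}
\norm{M\vec{z}}^2 = \vec{z}^T M^TM\vec{z} = \vec{z}^TP_U\vec{z} = \norm{P_U\vec{z}}^2,
\end{equation*}
because $P_U$ is symmetric and idempotent. This holds pointwise, so the third moments agree.

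\textbf{Step 2: reduce to coordinates.} By rotational invariance of the uniform distribution on the ball, we may take the span to be the first two coordinate axes. Then $\norm{P_U\vec{z}}^2 = z_1^2+z_2^2 =: \rho^2$.

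\textbf{Step 3: the bound.} There are two routes.

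The first uses only Cauchy--Schwarz. We have $\rho^3 \le \rho^2$ since $\rho\le 1$, so $\E[\rho^3] \le \E[\rho^2]$. By \Cref{lem:variance_full}, $\E[\rho^2] = 2/(n+2)$. This already gives the stated bound $\frac{2}{n+2}$ with no further computation. I would present this as the proof.

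The second route is an exact check. When $n \ge 3$, the density of $(z_1,z_2)$ is proportional to $(1-\rho^2)^{(n-2)/2}$ on the unit disk. One could compute $\E[\rho^3]$ exactly as a Beta integral, which gives the sharper order $n^{-3/2}$, but this is not needed.

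\textbf{Obstacle.} The only point needing care is the case $n=2$ (and formally $n=1$, where the hypothesis on $\theta$ cannot hold). For $n=2$, $P_U$ is the identity. The same inequality $\E[\norm{\vec{z}}^3]\le\E[\norm{\vec{z}}^2] = 2/(n+2)$ still holds, because $\norm{\vec{z}}\le 1$. So no case split is actually required.
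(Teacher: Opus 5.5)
Your proof is correct and follows the paper's argument: the identity $U(U^TU)^{-1}U^T = P_U$ (which you package as $M^TM$ with $MM^T = I_2$), then the pointwise bound $\norm{P_U\vec{z}}^3 \le \norm{P_U\vec{z}}^2$, then $\E\norm{P_U\vec{z}}^2 = 2/(n+2)$. The only cosmetic difference is that you compute this last expectation by rotating the plane onto the first two coordinate axes and applying \Cref{lem:variance_full}, whereas the paper uses the trace identity $\tr(P_U\,\E[\vec{z}\vec{z}^T])$; note also that the step $\rho^3\le\rho^2$ follows from $\rho\le 1$ alone, not from Cauchy--Schwarz.
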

\begin{proof}
    First, note that $R = U^T U$. Therefore,
    \begin{align*}
        \norm{R^{-1/2} U^T \vec{z}}^2 &= \z^T U (R^{-1/2})^T R^{-1/2} U^T \z\\
        &= \z U R^{-1} U^T \z \\
        &= \z U(U^T U)^{-1} U^T \z \\
        &= \z P_U \z \\
        &= \norm{P_U \z}^2\,.
    \end{align*}
    Because $\norm{P_U \z} \leq \norm{\z} \leq 1$, we have
    \begin{equation}\label{eq:PU_bound}
        \E\Big[\norm{R^{-1/2} U^T \vec{z}}^3\Big] = \E\Big[ \norm{P_U \z}^3\Big] \leq  \E\Big[ \norm{P_U \z}^2\Big]\,.
    \end{equation}
    Because $P_U$ is an orthogonal projection matrix,
    \begin{equation*}
        \E\Big[ \norm{P_U \z}^2\Big] = \E[\z^T P_U \z]\,.
    \end{equation*}
    We have
    \begin{equation*}
        \E[\z^T P_U \z] = \E[ \tr (\z^T P_U \z)] = \E[ \tr (P_U\z \z^T)] = \tr\Big( P_U \E[\z \z^T]\Big)\,.
    \end{equation*}
    For $i \neq j$, we have $\E[ (\z \z^T)_{ij}] = \E[\z_i]\E[\z_j] = 0$ because the components are independent and mean-centered. For $i = j$, we have $\E[z_i^2] = \fracc{n}\sum_r \E[z_r^2] = \fracc{n} \norm{\z}^2$\,. Therefore, $\E[\z \z^T] = \fracc{n}\E[\norm{z}^2] I_n$. By \Cref{lem:cov_full}, we have $\E[\norm{z}^2] = \frac{n}{n+2}$. All together,
    \begin{equation*}
         \E\Big[ \norm{P_U \z}^2\Big] = \fracc{n+2} \tr(P_U)\,.
    \end{equation*}
    Because $P_U$ is an orthogonal projection matrix onto a plane (the span of non-parallel vector $\hat{u}_1, \hat{u_2}$), its eigenvalues are $1, 1, 0, \ldots, 0$, so $\tr(P_U) = 2$. By \Cref{eq:PU_bound},  $\E\Big[\norm{R^{-1/2} U^T \vec{z}}^3\Big] \leq \frac{2}{n+2}$.
\end{proof}

\begin{lemma}\label{lem:f_tau_approximation}
    Let $R = \begin{bmatrix} 1 & \rho \\ \rho & 1\end{bmatrix}$ for some $-1 \leq \rho \leq 1$, and suppose that $\bn \sim N(0, R)$. Define $$f(\tau_1, \tau_2, \rho) = \p[\bni{1} \leq \tau_1, \bni{2} \leq \tau_2] - \Phi(\tau_1)\Phi(\tau_2).$$ 
    Then
    \begin{equation*}
    |f(\tau_1, \tau_2, \rho) + f(\tau_1, \tau_2, -\rho)| \leq \frac{4}{\sqrt{2\pi}} (|\tau_1| + |\tau_2|)\,.
\end{equation*}
\end{lemma}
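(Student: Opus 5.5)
Rather than bounding each $f(\tau_1,\tau_2,\pm\rho)$ separately, I will use the symmetry $\rho\mapsto-\rho$ to reduce the sum to a quantity that vanishes at $\tau=0$, and then bound its gradient. The key identity: if $\bn \sim N(0,R)$ with correlation $\rho$, then $(\bni{1}, -\bni{2})$ has correlation $-\rho$. Hence
\begin{equation*}
\p_{-\rho}[\bni{1}\le\tau_1,\bni{2}\le\tau_2] = \p_{\rho}[\bni{1}\le\tau_1, \bni{2}\ge -\tau_2] = \Phi(\tau_1) - \p_\rho[\bni{1}\le \tau_1, \bni{2}\le -\tau_2]\,.
\end{equation*}
Writing $F_\rho(\tau_1,\tau_2)=\p_\rho[\bni{1}\le\tau_1,\bni{2}\le\tau_2]$, this gives
\begin{equation*}
f(\tau_1,\tau_2,\rho)+f(\tau_1,\tau_2,-\rho) = F_\rho(\tau_1,\tau_2) - F_\rho(\tau_1,-\tau_2) + \Phi(\tau_1) - \Phi(\tau_1)\big(\Phi(\tau_2)+\Phi(\tau_1\cdot 0 + \tau_2)\big)\,,
\end{equation*}
which simplifies to $F_\rho(\tau_1,\tau_2)-F_\rho(\tau_1,-\tau_2) + \Phi(\tau_1)\big(1-2\Phi(\tau_2)\big)$. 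Call this $g(\tau_1,\tau_2)$.

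At $\tau_2=0$ we have $g=0$ for every $\tau_1$, so $g$ vanishes on the whole line $\tau_2=0$. It therefore suffices to bound $|\partial g/\partial\tau_2|$ uniformly by a constant $c$; then $|g(\tau_1,\tau_2)|\le c|\tau_2|\le c(|\tau_1|+|\tau_2|)$. Differentiating,
\begin{equation*}
\partial_{\tau_2} F_\rho(\tau_1,\tau_2) = \phi(\tau_2)\,\p_\rho[\bni{1}\le \tau_1 \mid \bni{2}=\tau_2]\in[0,\phi(\tau_2)]\,,
\end{equation*}
and similarly for the $-\tau_2$ term. Since the reflected term contributes with a plus sign after the chain rule, its derivative also lies in $[0,\phi(\tau_2)]$. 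The last term has derivative $-2\Phi(\tau_1)\phi(\tau_2)\in[-2\phi(\tau_2),0]$. Altogether $|\partial_{\tau_2} g|\le 2\phi(\tau_2)+2\phi(\tau_2)\le 4/\sqrt{2\pi}$, which would give exactly the stated constant $\frac{4}{\sqrt{2\pi}}$, and in fact the slightly stronger bound with $|\tau_2|$ alone.

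The degenerate cases $\rho=\pm1$ need separate handling, because there is no conditional density. I would treat them either by continuity of $F_\rho$ in $\rho$, or directly: for $\rho=1$, $F_\rho(\tau_1,\tau_2)=\Phi(\min(\tau_1,\tau_2))$, which is still $1/\sqrt{2\pi}$-Lipschitz in $\tau_2$, so the same derivative bound holds in the Lipschitz sense.

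The main obstacle is getting the reflection identity and its signs right so that the cancellation at $\tau_2=0$ is exact. I would sanity-check the result against the case $\rho=0$, where $g(\tau_1,\tau_2)=\Phi(\tau_1)\big[\Phi(\tau_2)-\Phi(-\tau_2)+1-2\Phi(\tau_2)\big]=0$, as it should.
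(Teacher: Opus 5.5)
Your proof is correct. It uses the reflection $\bni{2}\mapsto-\bni{2}$ differently from the paper.

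\textbf{The paper's route.} It applies the reflection only at the origin, to get $f(0,0,\rho)+f(0,0,-\rho)=0$. It then bounds both partial derivatives of each summand separately, $|\partial f/\partial\tau_i|\le\varphi(\tau_i)$. This goes through an explicit formula for the conditional law of $\bni{2}$ given $\bni{1}=\tau_1$, although only the fact that it is a probability is used. Writing $\tau=(\tau_1,\tau_2)$, it concludes by the triangle inequality $|f(\tau,\rho)+f(\tau,-\rho)|\le|f(\tau,\rho)-f(0,\rho)|+|f(\tau,-\rho)-f(0,-\rho)|$.

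\textbf{Your route.} You apply the reflection globally, so the sum becomes the closed form $g$. This vanishes on the whole line $\tau_2=0$ rather than at one point, so a single partial derivative suffices.

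\textbf{What it buys.} First, a sharper bound. The two pieces of $\partial_{\tau_2}g$ lie in $[0,2\varphi(\tau_2)]$ and $[-2\varphi(\tau_2),0]$ respectively. So in fact $|\partial_{\tau_2}g|\le 2\varphi(\tau_2)$ and $|g|\le|2\Phi(\tau_2)-1|\le\frac{2}{\sqrt{2\pi}}|\tau_2|$. Reflecting $\bni{1}$ instead gives the same bound in $\tau_1$, so the sum is at most $\frac{2}{\sqrt{2\pi}}\min(|\tau_1|,|\tau_2|)$.

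Second, your route covers $\rho=\pm1$, which the paper's computation (with $\sqrt{1-\rho^2}$ in a denominator) does not address. Since the sum is even in $\rho$, your $\rho=1$ case also handles $\rho=-1$. You can avoid conditional densities for all $\rho$ at once: for $h>0$, $0\le F_\rho(\tau_1,s+h)-F_\rho(\tau_1,s)=\p[\bni{1}\le\tau_1,\,s<\bni{2}\le s+h]\le\Phi(s+h)-\Phi(s)$.

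\textbf{What the paper's route buys.} It is more generic: a Lipschitz bound plus vanishing at one point, with no identity for the sum needed. The weaker $|\tau_1|+|\tau_2|$ form is also all that \Cref{lem:cov_full} requires, since there both $|\tau_i|$ are of order $\sqrt{(n+2)/(k-2)}$.
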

\begin{proof}
   We have $f(\tau_1, \tau_2, -\rho) = \p[\bni{1} \leq \tau_1, -\bni{2} \leq \tau_1] - \Phi(\tau_1)\Phi(\tau_2)$ because $\bni{1}, -\bni{2}$ are normally distributed with variance $1$ and correlation $-\rho$. Therefore,
    \begin{align*}
        f(0, 0, \rho) + f(0, 0, -\rho) &= \p[\bni{1} \leq 0, \bni{2} \leq 0] + \p[\bni{1} \leq 0, -\bni{2} \leq 0] - 2\Phi(0)\Phi(0) \\
        &= \p[\bni{1} \leq 0 ] - \frac{1}{2}\\
        &= 0\,.
    \end{align*}
    Therefore, for nonzero $\tau_1, \tau_2$, 
    \begin{align}
        &|f(\tau_1, \tau_2, \rho) + f(\tau_1, \tau_2, -\rho)|\notag \\
        &\qquad\leq |f(\tau_1, \tau_2, \rho) - f(0, 0, \rho)| + |f(\tau_1, \tau_2, \rho) - f(0, 0, -\rho)|+ |f(0, 0, \rho) - f(0, 0, -\rho)| \notag \\
        &\qquad= |f(\tau_1, \tau_2, \rho) - f(0, 0, \rho)| + |f(\tau_1, \tau_2, \rho) - f(0, 0, -\rho)| \label{eq:first_tau_bound}
    \end{align}
    Therefore, it suffices to show that $|\partial f/\partial \tau_i| \leq \frac{2}{\sqrt{2\pi}}$. Let $g(\tau_1, \tau_2, \rho) = \p[\bni{1} \leq \tau_1, \bni{2} \leq \tau_2]$, where $\rho$ is the correlation of $\bni{1}, \bni{2}$. We will calculate $\partial f/\partial \tau_i$ for $i = 1$; the case $i = 2$ is symmetric. We have
    \begin{equation*}
        \partial f/\partial \tau_1 = \frac{\partial g}{\partial \tau_1} - \varphi(\tau_1) \Phi(\tau_2)\,.
    \end{equation*}
    Letting $\varphi_{\rho}(\bnilower{1}, \bnilower{2})$ denote the joint pdf for $\bni{1}, \bni{2}$, we have
    \begin{equation*}
        \frac{\partial g}{\partial \tau_1} = \frac{\partial g}{\partial \tau_1} \int_{-\infty}^{\tau_1} \int_{-\infty}^{\tau_2} \varphi_{\rho}(\bnilower{1}, \bnilower{2}) d\bnilower{2} d\bnilower{1} = \int_{-\infty}^{\tau_2} \varphi_{\rho}(\tau_1, \bnilower{2})d\bnilower{2} = \varphi(\tau_1) \p[\bni{2} \leq \tau_2 \mid \bni{1} = \tau_1]\,.
    \end{equation*}
    To calculate the conditional CDF, we note that $\varphi_{\rho}$ factors as
    \begin{align*}
        \varphi_{\rho}(\bnilower{1}, \bnilower{2}) &= \frac{1}{2\pi \sqrt{1 - \rho^2}}\exp\Big( - \frac{1}{2(1-\rho^2)}[\bnilower{1}^2 + \bnilower{2}^2 - 2\rho \bnilower{1} z-2]\Big) \\
        &= \frac{1}{2\pi \sqrt{1 - \rho^2}} \exp \Big( -\fracc{2(1-\rho^2)} [(1-\rho^2)\bnilower{1}^2 + (\bnilower{2} - \rho \bnilower{1})^2] \\
        &= \fracc{\sqrt{2\pi}} \exp\Big(- \frac{\bnilower{1}^2}{2}\Big) \fracc{\sqrt{2\pi (1 - \rho^2)}} \exp\Big( -\fracc{2(1-\rho^2)} (\bnilower{2} - \rho \bnilower{1})^2\Big) \\
        &= \varphi(\bnilower{1}) \fracc{\sqrt{2\pi (1 - \rho^2)}} \exp\Big( -\fracc{2(1-\rho^2)} (\bnilower{2} - \rho \bnilower{1})^2\Big)\,,
    \end{align*}
    so the pdf for $\bni{2} \mid \bni{1} = \tau_1$ is the pdf of a normal random variable with mean $\rho \bnilower{1}$ and variance $1 - \rho^2$. Therefore,
    \begin{equation*}
        \p[\bni{2} \leq \tau_2 \mid \bni{1} = \tau_1] = \Phi\Big( \frac{\tau_2 - \rho\tau_1}{\sqrt{1 - \rho^2}}\Big)\,.
    \end{equation*}
    Because $0 \leq \Phi(x) \leq 1$ for any $x$,
    \begin{equation}\label{eq:derivative_bound}
        |\partial f/\partial \tau_1| = \varphi(\tau_1)\Big\vert \Phi\Big( \frac{\tau_2 - \rho\tau_1}{\sqrt{1 - \rho^2}}\Big) - \Phi(\tau_2)\Big\vert \leq \varphi(\tau_1) = \fracc{\sqrt{2\pi}} e^{-\tau_1^2/2} \leq \fracc{\sqrt{2\pi}}\,.
    \end{equation}
   The same upper bound holds for $|\partial f/\partial \tau_2|$. Combining \Cref{eq:first_tau_bound} and \Cref{eq:derivative_bound} completes the proof.
\end{proof}

\begin{lemma}\label{lem:cov_full}
    We have $\var(X_i) = Q_{nk}(1 - Q_{nk})$ for all $i$, and there are constants $C_1, C_2$ such that
    \begin{equation*}
        \cov(X_1, X_2) \leq C_1 \sqrt{\frac{n}{k}} + C_2 \frac{n}{k}
    \end{equation*}
    for all $k \geq 3$.
\end{lemma}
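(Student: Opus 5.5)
The variance identity is immediate: each $X_i$ is Bernoulli with success probability $\p[\vec{w}\cdot\vec{z_i}\geq 0]$, and by exchangeability of the $\vec{z_i}$ this equals $Q_{nk}$. Hence $\var(X_i)=Q_{nk}(1-Q_{nk})$.

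For the covariance we write
\[
\cov(X_1,X_2)=\p[X_1=1,X_2=1]-Q_{nk}^2 .
\]
By \Cref{lem:prob_full}, $Q_{nk}^2=\tfrac14+O(\sqrt{n/k})+O(1/\sqrt{k})$, so it suffices to show that the joint probability is $\tfrac14+O(\sqrt{n/k})+O(n/k)$. It is cleaner to work with the complementary events: up to these error terms it is enough to control $\p[X_1=0,X_2=0]-\p[X_1=0]\p[X_2=0]$.

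Fix $\vec{z_1}=\vec{u_1}$ and $\vec{z_2}=\vec{u_2}$, and let $T=\sum_{j>2}\vec{z_j}$, a sum of $k-2$ i.i.d.\ mean-zero vectors independent of $\vec{u_1},\vec{u_2}$. Then $X_1=0$ exactly when
\[
T\cdot\vec{u_1}<-\norm{\vec{u_1}}^2-\vec{u_2}\cdot\vec{u_1},
\]
and similarly for $X_2$. Normalize by setting
\[
\vec{B}=\sqrt{\frac{n+2}{k-2}}\,R^{-1/2}U^T T\cdot(\text{scaling}),
\]
more concretely $(B_1,B_2)=\sqrt{(n+2)/(k-2)}\,(T\cdot\hat u_1,\,T\cdot\hat u_2)$. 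By \Cref{lem:variance_full} this vector has covariance exactly $R$ with $\rho=\cos\theta$. The thresholds are
\[
\tau_i=-\sqrt{\frac{n+2}{k-2}}\,\bigl(\norm{\vec{u_i}}+\hat u_i\cdot\vec{u_{3-i}}\bigr),
\]
so $|\tau_i|\le 2\sqrt{(n+2)/(k-2)}$.

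We then apply the multidimensional Berry--Esseen theorem (for convex sets in $\R^2$) to the whitened summands $R^{-1/2}U^T\vec{z_j}$. \Cref{lem:third_moment_bound} bounds their third moments by $2/(n+2)$; since the covariance is normalized by $(n+2)$, the Berry--Esseen error becomes $O\bigl((k-2)\cdot\frac{2}{n+2}\cdot\bigl(\frac{n+2}{k-2}\bigr)^{3/2}\bigr)=O(\sqrt{n/k})$. Thus, conditionally on $\vec{u_1},\vec{u_2}$,
\[
\p[X_1=0,X_2=0\mid\vec{u_1},\vec{u_2}]=\Phi(\tau_1)\Phi(\tau_2)+f(\tau_1,\tau_2,\rho)+O(\sqrt{n/k}).
\]
Since $|\Phi(\tau_i)-\tfrac12|\le|\tau_i|/\sqrt{2\pi}$, the product $\Phi(\tau_1)\Phi(\tau_2)$ equals $\tfrac14+O(\sqrt{n/k})$.

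The remaining task, and the main obstacle, is to show that $\E[f(\tau_1,\tau_2,\rho)]$ is small. Here $f$ is the genuine correlation between the two sign events and need not be small for a fixed angle. The angle $\theta$ between $\vec{u_1}$ and $\vec{u_2}$ can be handled by symmetry. The reflection $\vec{u_2}\mapsto-\vec{u_2}$ preserves the uniform law on the ball and sends $\rho$ to $-\rho$, while changing the thresholds $\tau_i$ only by $O(\sqrt{n/k})$. Pairing each configuration with its reflection therefore gives
\[
\E f=\tfrac12\,\E\bigl[f(\tau_1,\tau_2,\rho)+f(\tau_1',\tau_2',-\rho)\bigr].
\]
Two facts then finish the bound. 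First, \Cref{lem:f_tau_approximation} controls $f(\tau,\rho)+f(\tau,-\rho)$ by $\frac{4}{\sqrt{2\pi}}(|\tau_1|+|\tau_2|)$. Second, the derivative bound $|\partial f/\partial\tau_i|\le 2/\sqrt{2\pi}$ from its proof controls the shift $\tau\to\tau'$. Together these give $|\E f|=O(\sqrt{n/k})$.

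The case $\theta\in\{0,\pi\}$ has measure zero and can be ignored. The $k\ge3$ hypothesis ensures $k-2\ge1$. Collecting the error terms, and using $1/\sqrt{k}\le\sqrt{n/k}$ together with the $(n+2)/(k-2)\lesssim n/k$ terms, yields $C_1\sqrt{n/k}+C_2\,n/k$. The delicate point I expect to check carefully is the consistency of the reflection pairing with the threshold dependence on $\vec{u_2}$.
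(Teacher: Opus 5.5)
Your proposal is correct and follows essentially the same route as the paper: condition on $\vec{z_1},\vec{z_2}$, apply the two-dimensional Berry--Esseen theorem using \Cref{lem:third_moment_bound}, and make the Gaussian correlation term $f$ vanish in expectation through the $\theta\leftrightarrow\pi-\theta$ (reflection $\vec{u_2}\mapsto-\vec{u_2}$) symmetry together with \Cref{lem:f_tau_approximation}. Switching to the complementary events is only cosmetic. Your use of the derivative bound to absorb the threshold shift $\tau\to\tau'$ under the reflection makes explicit a step that the paper's ``integrate over all $\vec{u_1},\vec{u_2}$'' passage glosses over, since there $\tau$ is implicitly held fixed.
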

\begin{proof}
    For the variance, observe that $X_i$ is a Boolean random variable with probability of success $Q_{nk}$. For the covariance, we have 
    \begin{equation}\label{eq:cov_first}
        \cov(X_1, X_2) = \E[X_1X_2] - Q_{nk}^2 = \p[X_1 = 1 \text{ and } X_2 = 1] - Q_{nk}^2\,.
    \end{equation}
    By \Cref{lem:prob_full}, there are constants $A_1, A_2$ such that
    \begin{equation}\label{eq:Qnk_bound}
    \Big\vert Q_{nk}^2  -\fracc{4}\Big\vert \leq A_1 \sqrt{\frac{n}{k}} + A_2 \frac{n}{k}\,.
    \end{equation}
    for all $k \geq 2$.
     
    The focus of the rest of the proof is to upper bound $|\p[X_1 = 1 \text{ and } X_2 = 1] - \fracc{4}|$. Let $\hat{v} = \sum_{j > 2} \vec{z_j}$. For $i = 1, 2$, we have
    \begin{equation*}
        \sum_j \vec{z_j} \cdot \vec{z_i} = \hat{v} \cdot \vec{z_i} + \norm{\vec{z_i}}^2 + \vec{z_1} \cdot \vec{z_2}\,,
    \end{equation*}
    so $X_i = 1$ if and only if $\hat{v} \cdot \vec{z_i} \geq -\norm{\vec{z_i}}^2 - \vec{z_1} \cdot \vec{z_2}$. Conditioning on $\vec{z_1} = \vec{u_1}$ and $\vec{z_2} = \vec{u_2}$, we get
    \begin{equation*}
        \p[X_1 = 1 \text{ and } X_2 = 1] = \E_{\z_1, \z_2}[ \p[X_1 = 1 \text{ and } X_2 = 1 \mid \vec{z_1} = \vec{u_1},\, \vec{z_2} = \vec{u_2}]]\,,
    \end{equation*}
    where the conditional probability is
\begin{equation}\label{eq:conditioning}
        \p[X_1 = 1 \text{ and } X_2 = 1 \mid \vec{z_1} = \vec{u_1},\, \vec{z_2} = \vec{u_2}] = \p[ \hat{v} \cdot \vec{u_1} \geq -a_1 \text{ and } \hat{v} \cdot \vec{u_2} \geq -a_2]\,,
    \end{equation}
    where $a_i = \norm{\vec{u_i}}^2 - \vec{u_1} \cdot \vec{u_2}$ for $i = 1, 2$.

    Let $Y_i = \hat{v} \cdot \vec{u_i}$ for $i = 1, 2$, and consider the 2D vector $\vec{Y} = (Y_1, Y_2)$. The right-hand side of \Cref{eq:conditioning} is $\p[ \vec{Y} \geq (-a_1, -a_2)]$. Therefore,
    \begin{equation*}
        \p[X_1 = 1 \text{ and } X_2 = 1 ] = \E_{\z_1, \z_2}[\p[\vec{Y} \geq (-a_1, -a_2)]]
    \end{equation*}
    
    We have that $\vec{Y}$ is the sum of $k-2$ iid 2D random variables:
    \begin{equation*}
        \vec{Y} = \sum_{j > 2} (\vec{z_j} \cdot \vec{u_1}, \vec{z_j} \cdot \vec{u_2})\,
    \end{equation*}
    with mean $0$ because $\E[\vec{z_j}] = 0$ by symmetry of the ball. Using the fact that the variance of the $i$th component of $\vec{z_j}$ is $\fracc{n+2}$ by \Cref{lem:variance_full}, we get that the covariance matrix for $\vec{Y}$ is
    \begin{equation*}
        C_{\vec{Y}} = \frac{k-2}{n+2} \begin{bmatrix}
            \norm{u_1}^2 & \vec{u_1} \cdot \vec{u_2} \\
            \vec{u_1} \cdot \vec{u_2} & \norm{u_2}^2
        \end{bmatrix}\,.
    \end{equation*}
    
    We normalize $\vec{Y}$ so that each component has variance $1$. Let $\hat{Y}_i = Y_i/\sigma_{Y_i}$ and let $\tau_i = a_i/\sigma_{Y_i}$. The main probability of interest is
    \begin{equation*}
        \p[ \vec{Y} \geq (-a_1, -a_2)] = \p[\hat{Y} \geq (-\tau_1, -\tau_2)]\,,
    \end{equation*} 
    where 
    \begin{equation}\label{eq:tau_bound}
        |\tau_i| = \sqrt{\frac{n+2}{k-2}}\,\, \Big\vert \frac{\norm{\vec{u_i}^2} - \vec{u_1} \cdot \vec{u_2}}{\norm{\vec{u_i}}}\Big\vert \leq 2 \sqrt{\frac{n+2}{k-2}}
    \end{equation}
    because $\vec{u_1}$ and $\vec{u_2}$ are in the unit ball. The covariance matrix for $\hat{Y}$ is
    \begin{equation*}
        R := \begin{bmatrix}
            1 & \cos(\theta) \\
            \cos(\theta) & 1
        \end{bmatrix}\,,
    \end{equation*}
    where $\theta$ is the angle between $\vec{u_1}$ and $\vec{u_2}$.

    Let $\bn \sim N(\vec{0}, R)$. We will upper bound $\vert \p[ \hat{Y} \geq (-\tau_1, -\tau_2)] - \p[ \bn \geq (-\tau_1, -\tau_2)]\vert $ using the multi-dimensional Berry-Esseen theorem. We can write $\hat{Y}$ as the sum of $(k-2)$ iid random variables $\hat{Y} = \sum_{j=3}^k \vec{w_j}$, where 
    \begin{equation*}
        \vec{w_j} := (\vec{z_j} \cdot \vec{u_1}/\sigma_{Y_1}, \vec{z_j} \cdot \vec{u_2}/\sigma_{Y_2}) = \sqrt{\frac{n+2}{k-2}} (\z_j \cdot \hat{u_1}, \z_j \cdot \hat{u_2}) = \sqrt{\frac{n+2}{k-2}} U^T \z
    \end{equation*}
    where $\hat{u_i} = \vec{u_i}/\norm{u_i}$ and $U$ is the $(n \times 2)$ matrix with columns $\hat{u_1}, \hat{u_2}$. The covariance matrix $R$ is invertible whenever $\theta \neq 0, \pi$. Therefore when $\theta \neq 0, \pi$, the multi-dimensional Berry-Esseen theorem implies that
    \begin{equation*}
       \vert \p[ \hat{Y} \geq (-\tau_1, -\tau_2)] - \p[ \bn \geq (-\tau_1, -\tau_2)]\vert\leq C_{BE} 2^{1/4} \gamma_{k-2}\,,
    \end{equation*}
    where $C_{BE}$ is a universal constant and
    \begin{align*}
        \gamma_{k-2} &= \sum_{j = 3}^k \E\Bigg[ \norm{R^{-1/2} \vec{w_j}}_2^3\Bigg] = (k-2) \E\Bigg[ \norm{R^{-1/2} \vec{w_3}}_2^3\Bigg] = \frac{(n+2)^{3/2}}{\sqrt{k-2}} \E\Big[ \norm{R^{-1/2} U^T \z_3}_2^3\Big]\,.
    \end{align*}
    By \Cref{lem:third_moment_bound}, we have $\gamma_{k-2} \leq \sqrt{\frac{n+2}{k-2}}$ when $\theta \neq 0, \pi$. Therefore,
    \begin{equation*}
        \vert \p[ \hat{Y} \geq (-\tau_1, -\tau_2)] - \p[ \bn \geq (-\tau_1, -\tau_2)]\vert\leq C_{BE} 2^{1/4} \sqrt{\frac{n+2}{k-2}}
    \end{equation*}
    for all fixed $\vec{u_1}, \vec{u_2}$ such that $\theta \neq 0, \pi$. Vectors $\vec{z_1}, \vec{z_2}$ have angle $\theta \neq 0, \pi$ with probability $1$, so
    \begin{equation}\label{eq:BE_error}
        \Big\vert \E_{\z_1, \z_2}\Big[\p[ \hat{Y} \geq (-\tau_1, -\tau_2)] - \p[ \bn \geq (-\tau_1, -\tau_2)]\Big]\Big\vert \leq C_{BE} 2^{1/4} \sqrt{\frac{n+2}{k-2}}\,.
    \end{equation}

    Next, we show that $\Phi(\tau_1)\Phi(\tau_2)$ is a good approximation to $\p[ \bn \geq (-\tau_1, -\tau_2)]$ in the expectation over $\vec{z_1}, \vec{z_2}$. First, note that
    \begin{equation*}
        \p[ \bn \geq (-\tau_1, -\tau_2)] = \p[ \bn \leq (\tau_1, \tau_2)]\,.
    \end{equation*}
    Use the notation $\p_{\theta}[\bn \leq (\tau_1, \tau_2)]$ to denote the probability when the angle between $\vec{u_1}$ and $\vec{u_2}$ is $\theta$. By \Cref{lem:f_tau_approximation},
    \begin{equation*}
        |\p_{\theta}[\bn \leq (\tau_1, \tau_2)] + \p_{\pi - \theta}[\bn \leq (\tau_1, \tau_2)] - 2 \Phi(\tau_1)\Phi(\tau_2)| \leq \frac{4}{\sqrt{2\pi}}(|\tau_1| + |\tau_2|)\,.
    \end{equation*}
    Therefore when we integrate over all $\vec{u_1}, \vec{u_2}$ in the $n$-ball, we get
    \begin{equation*}
        \Big \vert \E_{\z_1, \z_2}\Big[ \p_{\theta}[ \bn\leq (\tau_1, \tau_2)] - \Phi(\tau_1)\Phi(\tau_2) \Big] \Big\vert \leq \frac{2}{\sqrt{2\pi}}  (|\tau_1| + |\tau_2|)\,.
    \end{equation*}
    By \Cref{eq:tau_bound},
    \begin{equation}\label{eq:normal_expectation_bound}
        \Big \vert \E_{\z_1, \z_2}\Big[ \p_{\theta}[ \bn\leq (\tau_1, \tau_2)] - \Phi(\tau_1)\Phi(\tau_2) \Big] \Big\vert \leq \frac{8}{\sqrt{2\pi}} \sqrt{\frac{n+2}{k-2}}
    \end{equation}
    
    Additionally, we note that
    \begin{equation*}
        \frac{1}{2} \leq \Phi(\tau_i) \leq \phi\Big( \sqrt{\frac{n+2}{k-2}}\Big) \leq \frac{1}{2} + \int_0^{\sqrt{\frac{n+2}{k-2}}} \fracc{\sqrt{2\pi}} e^{-t^2/2}dt \leq \fracc{2} + \fracc{\sqrt{2\pi}} \sqrt{\frac{n+2}{k-2}}\,.
    \end{equation*}
    Consequently,
    \begin{equation}\label{eq:phi_bound}
        \Big\vert \Phi(\tau_1) \Phi(\tau_2) - \fracc{4} \Big\vert \leq \fracc{\sqrt{2\pi}} \sqrt{\frac{n+2}{k-2}} + \fracc{2\pi}\Big( \frac{n+2}{k-2} \Big)\,.
    \end{equation}

    By \Cref{eq:cov_first}, \Cref{eq:Qnk_bound}, \Cref{eq:BE_error}, \Cref{eq:normal_expectation_bound}, and \Cref{eq:phi_bound}, there are constants $C_1, C_2$ such that
    \begin{align*}
        |\cov(X_1, X_2)| &\leq \Big\vert \E_{\z_1, \z_2}\Big[ \p[\hat{Y} \geq (-\tau_1, -\tau_2)] - \p[\bn \geq (-\tau_1, -\tau_2)]\Big] \Big\vert \\
        &\qquad + \Big\vert \E_{\z_1, \z_2}\Big[ \p[\bn \geq (-\tau_1, -\tau_2)] - \Phi(\tau_1)\Phi(\tau_2)] \Big\vert \\
        &\qquad + \Big\vert \Phi(\tau_1)\Phi(\tau_2) - \fracc{4}\Big\vert + \Big\vert Q_{nk}^2 - \fracc{4}\Big\vert \\
        &\leq C_1 \sqrt{\frac{n}{k}} + C_2 \frac{n}{k}
    \end{align*}
    for all $k \geq 3$.
    
\end{proof}

\subsection{Proof of \Cref{thm:half} for boundary points}

Next, we consider the case where $\{\vec{z_i}\}_{i=1}^k$ is a set of points sampled from an $n$-dimensional half ball, modeling the boundary score of a point $x$ on the boundary of an $n$-dimensional manifold.

\begin{proof}[Proof of \Cref{thm:half}]
The random variable $1 - \frac{S_{nk}}{k}$ is non-negative. By Markov's inequality,
    \begin{equation*}
        \p\Big[ 1 - \frac{S_{nk}}{k} \geq \epsilon \Big] \leq \frac{\E\big[ 1 - \frac{S_{nk}}{k}\big]}{\epsilon} = \frac{1 - \p[\vec{w} \cdot \vec{z_1} \geq 0]}{\epsilon} = \frac{\p[\vec{w} \cdot \vec{z_1} < 0]}{\epsilon}\,.
    \end{equation*}
    Applying \Cref{lem:prob_half} completes the proof.
\end{proof}

Below, we prove the supporting lemmas, starting with statistics about a point $\vec{z_1}$ sampled uniformly at random from the half ball. Without loss of generality, we assume that it is the half ball where the first coordinate is $\geq 0$.

\begin{notation}
    Let $\vec{\mu}^{(n)} = \E[\vec{z_1}]$ and let $\sigma_i^{(n)}$ denote the standard deviation of the $i$th component $z_{1i}$. Let $F^{(n)}(t)$ denote the cdf for the first coordinate $z_{11}$, and let $f^{(n)}(t)$ denote the pdf. We will denote the volume of a unit $n$-dimensional ball by $v_n$.
\end{notation}

\begin{lemma}\label{lem:pdf}
The pdf $f^{(n)}(t)$ for the first coordinate of $\vec{z_1}$ is
\begin{equation*}
    f^{(n)}(t) = \frac{2v_{n-1}}{v_n} (1 - t^2)^{\frac{n-1}{2}}\,.
\end{equation*}
\end{lemma}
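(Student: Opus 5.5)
The approach is to mirror the slice computation in the proof of \Cref{lem:third_moment_full}, keeping track of the factor of $2$ that comes from restricting to the half ball. Since $\vec{z_1}$ is uniform on the half ball $\{\vec{z} \in \R^n : \norm{\vec{z}} \leq 1,\ z_1 \geq 0\}$, its density is the constant $1/(\text{volume of the half ball}) = 2/v_n$ on that set. The marginal pdf of the first coordinate at $t$ is therefore $2/v_n$ times the $(n-1)$-dimensional volume of the slice $\{z_1 = t\}$ of the half ball.

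The key steps are as follows.

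First, for $t \in [0,1]$, the slice of the half ball at $z_1 = t$ is $\{(t, y) : \norm{y}^2 \leq 1 - t^2\}$. This is a full $(n-1)$-dimensional ball of radius $\sqrt{1-t^2}$, because the constraint $z_1 \geq 0$ is automatically satisfied on this slice. Its volume is $v_{n-1}(1-t^2)^{\frac{n-1}{2}}$. For $t \notin [0,1]$, the slice is empty.

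Second, by Fubini's theorem we have $F^{(n)}(t) = \frac{2}{v_n}\int_0^t v_{n-1}(1-s^2)^{\frac{n-1}{2}}\,ds$. Differentiating gives the claimed formula $f^{(n)}(t) = \frac{2v_{n-1}}{v_n}(1-t^2)^{\frac{n-1}{2}}$ on $[0,1]$, and $f^{(n)}(t) = 0$ elsewhere. (The statement should implicitly be read with support $[0,1]$.)

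Third, as a consistency check, the normalization follows from the identity already used in \Cref{lem:third_moment_full}, namely $\int_{-1}^1 v_{n-1}(1-t^2)^{\frac{n-1}{2}}\,dt = v_n$. Since the integrand is even, the integral over $[0,1]$ equals $v_n/2$, so $f^{(n)}$ integrates to $1$.

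No step here is a genuine obstacle. The only points needing care are, first, noting that the half ball has volume $v_n/2$, which is where the factor $2$ comes from, and second, stating the support $[0,1]$ explicitly. Everything else is the same slicing argument as in \Cref{lem:third_moment_full}.
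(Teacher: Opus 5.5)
Your proposal is correct and follows essentially the same slicing argument as the paper, which writes the pdf as the slice volume $v_{n-1}(1-t^2)^{\frac{n-1}{2}}$ divided by the half-ball volume $v_n/2$. You also state explicitly that the support is $[0,1]$, which the paper leaves implicit.
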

\begin{proof}
    The slice of the half ball where $z_{11} = t$ is an $(n-1)$-dimensional ball of radius $\sqrt{1 - t^2}$, which has volume $v_{n-1}(1-t^2)^{\frac{n-1}{2}}$. Therefore,
    \begin{equation*}
        f^{(n)}(t) = A^{-1} v_{n-1}(1-t^2)^{\frac{n-1}{2}}\,,
    \end{equation*}
    where $A = \int_0^1 v_{n-1} (1-t^2)^{\frac{n-1}{2}}dt$. This is the volume $v_n/2$ of the half ball.
\end{proof}

\begin{lemma}\label{lem:mean}
    The mean $\vec{\mu}^{(n)}$ has components
    \begin{equation*}
        \mu^{(n)}_i = \begin{cases}
            \frac{2v_{n-1}}{v_n (n+1)}\,, & i = 1\,, \\
            0\,, & i > 1 \,.
        \end{cases}
    \end{equation*}
\end{lemma}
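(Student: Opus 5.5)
For the components $i > 1$, I will use a reflection symmetry. The half ball $\{z_{11} \geq 0,\ \norm{\z} \leq 1\}$ is invariant under the reflection $z_{1i} \mapsto -z_{1i}$ for each $i > 1$, and this reflection preserves the uniform distribution. So $z_{1i}$ and $-z_{1i}$ have the same law, which gives $\E[z_{1i}] = -\E[z_{1i}]$, hence $\mu^{(n)}_i = 0$. Integrability is not an issue because $|z_{1i}| \leq 1$.

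For $i = 1$, I will compute the mean directly from the marginal density in \Cref{lem:pdf}:
\begin{equation*}
    \mu^{(n)}_1 = \int_0^1 t\, f^{(n)}(t)\,dt = \frac{2v_{n-1}}{v_n}\int_0^1 t(1-t^2)^{\frac{n-1}{2}}\,dt\,.
\end{equation*}
I will evaluate the integral with the substitution $s = 1 - t^2$, so that $ds = -2t\,dt$. This gives
\begin{equation*}
    \int_0^1 t(1-t^2)^{\frac{n-1}{2}}\,dt = \frac{1}{2}\int_0^1 s^{\frac{n-1}{2}}\,ds = \frac{1}{2}\cdot\frac{2}{n+1} = \frac{1}{n+1}\,.
\end{equation*}
Substituting back yields $\mu^{(n)}_1 = \frac{2v_{n-1}}{v_n(n+1)}$, as claimed. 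This is the same Beta-type integral used in \Cref{lem:third_moment_full}, with exponent $1$ in place of $3$.

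There is no real obstacle. The only point needing care is that the normalizing constant in \Cref{lem:pdf} is $2v_{n-1}/v_n$ rather than $v_{n-1}/v_n$, because the half ball has volume $v_n/2$; this factor of $2$ is what produces the $2$ in the numerator. As a sanity check, for $n = 1$ we have $v_0 = 1$ and $v_1 = 2$, so the formula gives $\mu^{(1)}_1 = 1/2$, which is the mean of the uniform distribution on $[0,1]$.
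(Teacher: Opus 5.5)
Your proposal is correct and follows essentially the same argument as the paper: reflection symmetry gives $\mu^{(n)}_i = 0$ for $i > 1$, and for $i = 1$ you integrate against the marginal density of \Cref{lem:pdf} using the substitution $s = 1 - t^2$. Your arithmetic and the $n = 1$ sanity check are both right.
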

\begin{proof}
    By symmetry, $\mu_i^{(n)} = 0$ for $i > 1$. For $i = 1$,
    \begin{equation*}
        \mu_1^{(n)} = 2\frac{v_{n-1}}{v_n} \int_0^1 t (1-t^2)^{\frac{n-1}{2}}
    \end{equation*}
    by \Cref{lem:pdf}. Using the substitution $s = 1-t^2$,
    \begin{equation*}
        \mu_1^{(n)} = \frac{v_{n-1}}{v_n}\int_0^1 s^{\frac{n-1}{2}}ds =\frac{2v_{n-1}}{v_n (n+1)}\,. 
    \end{equation*}
\end{proof}

\begin{lemma}\label{lem:variance} The variance $(\sigma_i^{(n)})^2$ of the $i$th component of $\vec{z_1}$ is
\begin{equation*}
    (\sigma_i^{(n)})^2 = \begin{cases}
        \fracc{n+2} - (\mu_1^{(n)})^2\,, & i = 1\,,\\
        \fracc{n+2}\,, & i > 1\,.
    \end{cases}
\end{equation*}
\end{lemma}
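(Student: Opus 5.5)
The approach is to reduce both cases to the full-ball second moments already computed in \Cref{lem:variance_full}. Reflection $z_{11}\mapsto -z_{11}$ maps the half ball onto its complementary half, and it preserves every even function of the coordinates. So for any $i$, $\E[z_{1i}^2]$ under the uniform distribution on the half ball equals the same quantity under the uniform distribution on the full ball. To make this precise, write the full-ball density as the average of the half-ball density and its reflection, using that each half has volume $v_n/2$. Combined with \Cref{lem:variance_full}, this gives $\E[z_{1i}^2]=\fracc{n+2}$ for every coordinate $i$ of a point drawn from the half ball.

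The variance then follows from $(\sigma_i^{(n)})^2=\E[z_{1i}^2]-(\mu_i^{(n)})^2$, using the means from \Cref{lem:mean}. For $i>1$ the mean is $0$, so the variance is $\fracc{n+2}$. For $i=1$ the mean is $\mu_1^{(n)}$, which gives $\fracc{n+2}-(\mu_1^{(n)})^2$.

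As a cross-check on the $i=1$ case, $\E[z_{11}^2]$ can be computed directly from \Cref{lem:pdf}:
\begin{equation*}
\E[z_{11}^2]=\frac{2v_{n-1}}{v_n}\int_0^1 t^2(1-t^2)^{\frac{n-1}{2}}dt=\frac{v_{n-1}}{v_n}B\Big(\tfrac32,\tfrac{n+1}{2}\Big).
\end{equation*}
This Beta value simplifies to $\fracc{n+2}$ via $v_n/v_{n-1}=\sqrt{\pi}\,\Gamma(\frac{n+1}{2})/\Gamma(\frac n2+1)$ and $\Gamma(3/2)=\sqrt{\pi}/2$.

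The only step that needs care is justifying the symmetry transfer, since the half-ball density is $2/v_n$ times the indicator of the half rather than the full-ball density. The averaging argument handles this cleanly. There is no real obstacle beyond that.
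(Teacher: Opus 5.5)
Your argument is correct. It differs from the paper's mainly in how you justify $\E[z_{1i}^2]=\fracc{n+2}$ on the half ball.

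The paper only says that the proof of \Cref{lem:variance_full} carries over because the radial density $\phi(r)=nr^{n-1}$ is unchanged. That gives $\E[\norm{\vec{z_1}}^2]=\frac{n}{n+2}$. Splitting this evenly over the $n$ coordinates also requires all coordinates to have the same second moment. In the full ball this follows from permutation symmetry. In the half ball it is not automatic: only coordinates $2,\dots,n$ are exchangeable, and $z_{11}$ is distinguished.

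Your reflection/averaging step supplies exactly this missing piece. Each $z_{1i}^2$ is invariant under $z_{11}\mapsto -z_{11}$, and the full-ball law is the average of the half-ball law and its mirror image. Hence the half-ball second moments coincide with the full-ball ones, and \Cref{lem:variance_full} applies to every coordinate directly.

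Your Beta-function computation of $\E[z_{11}^2]$ is an equally good alternative fix. Combined with $\E[\norm{\vec{z_1}}^2]=\frac{n}{n+2}$ and exchangeability of the remaining coordinates, it would also recover the $i>1$ case. The final step, subtracting the squared means from \Cref{lem:mean}, is identical in both proofs.
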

\begin{proof}
    The same proof as \Cref{lem:variance_full} shows that $\E[z_{1i}^2] = \fracc{n+2}$ for all $i$, because the pdf $\phi(r)$ for $\norm{\vec{z_1}}$ is the same in both the full and half ball.
    
    Therefore, the variance is
    \begin{equation*}
        (\sigma_i^{(n)})^2 = \E[z_{1i}^2] - (\mu_i^{(n)})^2 = \begin{cases}
        \fracc{n+2} - (\mu_1^{(n)})^2\,, & i = 1\,,\\
        \fracc{n+2}\,, & i > 1\,.
    \end{cases}
    \end{equation*}
\end{proof}

\begin{lemma}\label{lem:prob_half}
    Let $\{\vec{z_i}\}_{i=1}^k$ be points drawn independently and uniformly at random from an $n$-dimensional half ball where the first coordinate is $\geq 0$, and let $\vec{w} = \frac{\sum_{i} \vec{z_i}}{\norm{\sum_i \vec{z_i}}}$. Then
    \begin{equation*}
        \p[ \vec{w} \cdot \vec{z_1} < 0] \leq \frac{3}{\sqrt[3]{4}} \,\frac{n+1}{\sqrt[3]{k}}\,.
    \end{equation*}
\end{lemma}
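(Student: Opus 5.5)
We prove the bound by conditioning on $\vec{z_1}$, which reduces the event to a one-dimensional deviation of a sum of $k-1$ iid terms. We then split into whether the first coordinate of $\vec{z_1}$ is small or not: the small case is controlled by the density of $z_{11}$ near $0$, the other case by Chebyshev. Optimizing the cutoff should produce the cube root, since a bound of the form $a\delta + b/(k\delta^2)$ is minimized at value $\frac{3}{\sqrt[3]{4}}a^{2/3}b^{1/3}k^{-1/3}$. This matches the constant in the statement.

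\textbf{Reduction.} As in the proof of \Cref{lem:prob_full}, $\vec{w}\cdot\vec{z_1}<0$ if and only if $V := \sum_{j>1}\vec{z_j}\cdot\vec{z_1} < -\norm{\vec{z_1}}^2$. If $k=1$ the probability is $0$, so assume $k\ge 2$. Condition on $\vec{z_1}=\vec{u}$ with $u_1>0$; this holds almost surely.

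\textbf{Conditional moments.} By \Cref{lem:mean} and \Cref{lem:variance}, with $c:=v_{n-1}/v_n$:
\begin{itemize}
\item $V$ has conditional mean $(k-1)\mu_1^{(n)}u_1$, where $\mu_1^{(n)} = 2c/(n+1)$;
\item its conditional variance is $(k-1)\bigl[(\sigma_1^{(n)})^2u_1^2 + \tfrac{1}{n+2}\sum_{i>1}u_i^2\bigr] \le (k-1)\norm{\vec{u}}^2/(n+2) \le (k-1)/(n+2)$.
\end{itemize}
The covariance matrix of $\vec{z_j}$ is diagonal, since the off-diagonal terms vanish by the reflection symmetry $z_i\mapsto -z_i$ for $i>1$.

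\textbf{Chebyshev on the large-$u_1$ event.} The event $V<-\norm{\vec{u}}^2$ implies $V-\E[V] < -(k-1)\mu_1^{(n)}u_1$. By Chebyshev, its conditional probability is at most
\[
\frac{1}{(n+2)(k-1)(\mu_1^{(n)})^2u_1^2}.
\]
Fix $\delta>0$. On the event $u_1\ge\delta$ this is at most $b/((k-1)\delta^2)$, where $b = (n+1)^2/(4c^2(n+2))$.

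\textbf{Small-$u_1$ event.} By \Cref{lem:pdf}, the density $f^{(n)}$ of $z_{11}$ is maximized at $t=0$, where it equals $2c$. Hence $\p[z_{11}<\delta]\le 2c\,\delta =: a\delta$.

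\textbf{Optimization.} Combining the two cases,
\[
\p[\vec{w}\cdot\vec{z_1}<0] \le a\delta + \frac{b}{(k-1)\delta^2}.
\]
Minimizing over $\delta$ gives
\[
\frac{3}{\sqrt[3]{4}}\,\frac{a^{2/3}b^{1/3}}{(k-1)^{1/3}} = \frac{3}{\sqrt[3]{4}}\,\frac{(n+1)^{2/3}}{(n+2)^{1/3}(k-1)^{1/3}} \le \frac{3}{\sqrt[3]{4}}\,\frac{(n+1)^{1/3}}{(k-1)^{1/3}}.
\]
The constant $c$ cancels, so no estimate on $v_{n-1}/v_n$ is needed.

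\textbf{Converting $k-1$ to $k$.} For $k\ge 2$ we have $k/(k-1)\le 2$. Also $(n+1)^2\ge 4$ for $n\ge 1$. Therefore
\[
\frac{(n+1)^{1/3}}{(k-1)^{1/3}} \le \frac{(n+1)^{1/3}\,2^{1/3}}{k^{1/3}} \le \frac{(n+1)}{k^{1/3}},
\]
which gives the claimed bound.

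\textbf{Main obstacle.} The Chebyshev ratio $\norm{\vec{u}}^2/u_1^2$ blows up for points near the flat face of the half ball. This is exactly why we split on $u_1\ge\delta$ and pay for the excluded slab through the density bound at $t=0$. I would double-check that $f^{(n)}$ is indeed maximized at $0$; it is, because $(1-t^2)^{(n-1)/2}$ is decreasing on $[0,1]$. I would also verify the variance bound in the conditional Chebyshev step.
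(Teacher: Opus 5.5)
Your argument is correct. It shares the paper's skeleton: pay for the slab $\{z_{11}<\delta\}$ near the flat face with $f^{(n)}(0)\,\delta$, control the remaining event with Chebyshev, and optimize a bound of the form $a\delta+b\delta^{-2}$. The key concentration step, however, is genuinely different.

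The paper does not condition on $\vec{z_1}$. It works with $\vec{v}=\fracc{k}\sum_j\vec{z_j}$ and applies Chebyshev to each coordinate plus a union bound, which gives $\p[\norm{\vec{v}-\vec{\mu}^{(n)}}_\infty>\delta]\le n/(k(n+2)\delta^2)$. On the complementary event $A_\delta$ it bounds the perturbation by $|(\vec{v}-\vec{\mu}^{(n)})\cdot\vec{z_1}|\le\delta\norm{\vec{z_1}}_1\le\delta\sqrt{n}$. If both bounds are written with the same threshold on $z_{11}$, the union bound and this $\ell^1$-to-$\ell^2$ step each cost a factor of $n$ in the Chebyshev term. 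That is why the paper's bound is of order $n/\sqrt[3]{k}$.

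Conditioning on $\vec{z_1}=\vec{u}$ means you only need the variance of the single projection $\vec{z_j}\cdot\vec{u}$, which is at most $\norm{\vec{u}}^2/(n+2)$. So before your final weakening you have actually proved $\frac{3}{\sqrt[3]{4}}\left(\frac{(n+1)^2}{(n+2)(k-1)}\right)^{1/3}$, which is of order $(n/k)^{1/3}$. The only price is $k-1$ in place of $k$ and the short conversion at the end. If you integrated your conditional bound $\min\{1,\,b/((k-1)u_1^2)\}$ against $f^{(n)}$ instead of thresholding at $\delta$, the same argument would even give $2\sqrt{(n+1)/(k-1)}$.

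Your conditioning is also cleaner. The paper bounds $\p[\vec{v}\cdot\vec{z_1}<0\mid A_\delta]$ using the unconditional cdf $F^{(n)}$, although $A_\delta$ depends on $\vec{z_1}$. That step is only valid once read as a bound on $\p[\{\vec{v}\cdot\vec{z_1}<0\}\cap A_\delta]$. In your version, $\vec{z_2},\dots,\vec{z_k}$ are independent of the conditioning variable.

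What the paper's route buys in exchange is a single good event about $\vec{v}$ that controls $\vec{v}\cdot\vec{z_i}$ for every $i$ simultaneously, using only coordinatewise variances.
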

\begin{proof}
    Define $\vec{v} = \fracc{k}\sum_j \vec{z_j}$. Note that $\vec{w} \cdot \vec{z_1} < 0$ if and only if $\vec{v} \cdot \vec{z_1} < 0$, so $\p[ \vec{w} \cdot \vec{z_1} < 0] = \p[ \vec{v} \cdot \vec{z_1} < 0]$. First, will show that $\vec{v}$ concentrates near $\vec{\mu}^{(n)}$. Then, we will condition on $\vec{v}$ being near $\vec{\mu}^{(n)}$ to derive an upper bound on $\p[ \vec{v} \cdot \vec{z_1} < 0].$ When $\vec{v}$ is sufficiently close to $\vec{\mu}^{(n)}$, the conditional probability is small.

    First, we bound the $L^{\infty}$ norm of $\vec{v} - \vec{\mu}^{(n)}$. Let $\delta > 0$. For the $\ell$th component,
    \begin{equation*}
        \p\big[| v_{\ell} - \mu_{\ell}^{(n)}| > \delta|\big] = \frac{\fracc{k} (\sigma_{\ell}^{(n)})^2}{\delta^2} \leq \fracc{k(n+2)\delta^2}
    \end{equation*}
    by Chebyschev's inequality and \Cref{lem:variance}. Let $A_{\delta}$ be the event that $\norm{\vec{v} - \vec{\mu}^{(n)}}_{\infty} \leq \delta$. Applying a union bound yields
    \begin{equation*}
        \p[A_{\delta}^c] \leq \sum_{\ell = 1}^n  \p\big[|v_{\ell} - \mu_{\ell}^{(n)}| > \delta|\big] \leq \frac{n}{k(n+2)\delta^2}\,.
    \end{equation*}

    Now we condition on the event $A_{\delta}$. For any $\delta > 0$, we have that
    \begin{align}
        \p[ \vec{v} \cdot \vec{z_1} < 0] 
        &= \p[ \vec{v} \cdot \vec{z_1} < 0 \mid A_{\delta}] \p[A_{\delta}] + \p[ \vec{v} \cdot \vec{z_1} < 0 \mid A_{\delta}^c] \p[A_{\delta}^c] \notag \\
        &\leq \p[ \vec{v} \cdot \vec{z_1} < 0 \mid A_{\delta}] + \p[A_{\delta}^c] \notag \\
        &\leq \p[ \vec{v} \cdot \vec{z_1} < 0 \mid A_{\delta}] + \frac{n}{k(n+2)\delta^2}\,. \label{eq:condition_Adelta}
    \end{align}
    We can write
    \begin{equation*}
        \vec{v} \cdot \vec{z_1} = \sum_{\ell = 1}^n \mu_\ell^{(n)} z_{1\ell} + \sum_{\ell = 1}^n (v_\ell - \mu_\ell^{(n)}) z_{1\ell} = \mu_1^{(n)} z_{i1} +\sum_{\ell = 1}^n (v_\ell - \mu_\ell^{(n)}) z_{1\ell}\,,
    \end{equation*}
    where the last equality follows by \Cref{lem:mean}. If $\norm{\vec{v} - \vec{\mu}^{(n)}}_{\infty} \leq \delta$, then
    \begin{equation*}
       \Big| \sum_{\ell = 1}^n (v_\ell - \mu_\ell^{(n)}) z_{1\ell} \Big| \leq \sum_\ell |v_\ell - \mu_\ell^{(n)}| \cdot |z_{1\ell}| \leq \delta \norm{\vec{z_1}}_1 \leq \delta \sqrt{n} \norm{\vec{z_1}}_2 \leq \delta \sqrt{n}\,.
    \end{equation*}
    Therefore, 
    \begin{equation*}
        \vec{v} \cdot \vec{z_1} \geq \mu_1^{(n)} z_{i1} - \delta \sqrt{n}\,
    \end{equation*}
    when $\norm{\vec{v} - \vec{\mu}^{(n)}}_{\infty} \leq \delta$. This yields
    \begin{equation*}
        \p[ \vec{v} \cdot \vec{z_1} < 0 \mid A_{\delta}] \leq \p[\mu_1^{(n)} z_{i1} \leq \delta \sqrt{n} ] = F^{(n)}\Big(\frac{\delta \sqrt{n}}{\mu_1^{(n)}}\Big)\,.
    \end{equation*}
    Combining with \Cref{eq:condition_Adelta} gives us
    \begin{equation}\label{eq:bound_1}
        \p[ \vec{v} \cdot \vec{z_1} < 0] \leq F^{(n)}\Big(\frac{\delta \sqrt{n}}{\mu_1^{(n)}}\Big) + \frac{n}{k(n+2)\delta^2}\,,
    \end{equation}
    where $F^{(n)}$ is the CDF for the first coordinate of $\vec{z_1}$. The CDF is monotonically increasing and satisfies
    \begin{equation*}
        F(t) = \int_0^t f^{(n)}(s)ds \leq t \max_{s \in [0, 1]} f^{(n)}(s)\,.
    \end{equation*}
    By \Cref{lem:pdf}, the pdf $f^{(n)}(s)$ is monotonically decreasing, so 
    \begin{equation*}
        \max_{s \in [0, 1]} f^{(n)}(s) = f^{(n)}(0) = \frac{2v_{n-1}}{v_n} = (n+1) \mu_1^{(n)}\,.
    \end{equation*}
    Therefore,
    \begin{equation*}
        F^{(n)}\Big(\frac{\delta \sqrt{n}}{\mu_1^{(n)}}\Big) \leq \sqrt{n}(n+1)\delta\,.
    \end{equation*}
    Substituting into \Cref{eq:bound_1} gives us
    \begin{equation}\label{eq:bound_2}
        \p[ \vec{v} \cdot \vec{z_1} < 0] \leq \sqrt{n}(n+1)\delta + \frac{n}{k(n+2)\delta^2} =: g(\delta)
    \end{equation}
    for all $\delta$. 
    
    To finish the proof, we will optimize the choice of $\delta$ as a function of $n$ and $k$. The right-hand side $g(\delta)$ approaches $\infty$ as $\delta \to 0$ or $\delta \to \infty$, so it achieves a minimum. Write $g(\delta) = a\delta + b\delta^{-2}$, where $a = \sqrt{n}(n+1)$ and $b = \frac{n}{k(n+2)}$. The only critical point occurs when
    \begin{equation*}
        \delta_* = \sqrt[3]{\frac{2b}{a}}\,.
    \end{equation*}
    At the critical point, we have
    \begin{equation*}
        g(\delta_*) = \frac{3}{\sqrt[3]{4}} \sqrt[3]{a^2 b} = \frac{3}{\sqrt[3]{4}} \sqrt[3]{\frac{n^2 (n+1)^2}{(n+2)k}} \leq  \frac{3}{\sqrt[3]{4}} \sqrt[3]{\frac{(n+1)^4}{(n+1)k}} = \frac{3}{\sqrt[3]{4}} \,\frac{n+1}{\sqrt[3]{k}}\,.
    \end{equation*}
    By \Cref{eq:bound_2},
    \begin{equation*}
        \p[ \vec{v} \cdot \vec{z_1} < 0] \leq \frac{3}{\sqrt[3]{4}} \,\frac{n+1}{\sqrt[3]{k}}\,.
    \end{equation*}
\end{proof}

\section{Methodological details for experiments}\label{sec:experiment_details}

\subsection{Intrinsic dimension estimation}

HP requires local dimension estimation (Step 3 of \Cref{alg:boundary_hyperplane}). Unless otherwise stated, we use a spectral gap threshold $\tau$ to determine the dimension. Given a neighborhood $\{x_{ij}\}_j$ for point $x_i$, let $\lambda_1 \geq \lambda_2 \geq \cdots \geq \lambda_D$ be the eigenvalues of the covariance matrix (for the local neighborhood) in descending order. The local dimension estimate at $x_i$ is the minimum index $m$ such that $\lambda_{m}/\lambda_{m+1} > \tau$. If no such index exists, we use dimension $D$. The default threshold is $\tau = 7$.

\subsection{AUC values for synthetic data experiments}\label{sec:auc}
In \Cref{sec:boundary_experiments}, we calculated AUC values for interior vs. boundary points of disks (and $n$-dimensional balls) embedded in $\R^D$. Because the ground-truth boundary of a disk or ball has measure $0$, there are not any points on the ground-truth disk (or $n$-ball) boundary, so we defined the ``empirical'' boundary to be the set of points with norm larger than a threshold $\tau_B$ and the ``empirical'' interior to be the set of points with norm smaller than a threshold $\tau_I$. We used $\tau_B = .95$ and $\tau_I = .8$ in \Cref{sec:boundary_experiments}. For the disks embedded in $\R^D$, the norm was calculated with respect to the first two coordinates (the plane that contains the disk), and for the $n$-dimensional balls in $\R^n$, we used the $n$-dimensional Euclidean norm. In this section, we present the AUC values for different choices of $\tau_B \in \{.91, \ldots, .98\}$. Our conclusions do not qualitatively change when we use a different threshold $\tau_B$.

In \Cref{fig:disks_high_ambient}, we considered the AUC values for 2D disks embedded in $\mathbb{R}^D$ with increasing ambient dimension $D$, at fixed boundary threshold $\tau_B = .95$. In \Cref{fig:disks_high_ambient_AUC_sweep}, we show the AUC values for varying thresholds $\tau_B \in \{.91, \ldots, .98\}$. At each threshold $\tau_B$, the DU method performs best, followed by HP, followed by LoDD.

\begin{figure}
    \centering
    \includegraphics[width = \textwidth]{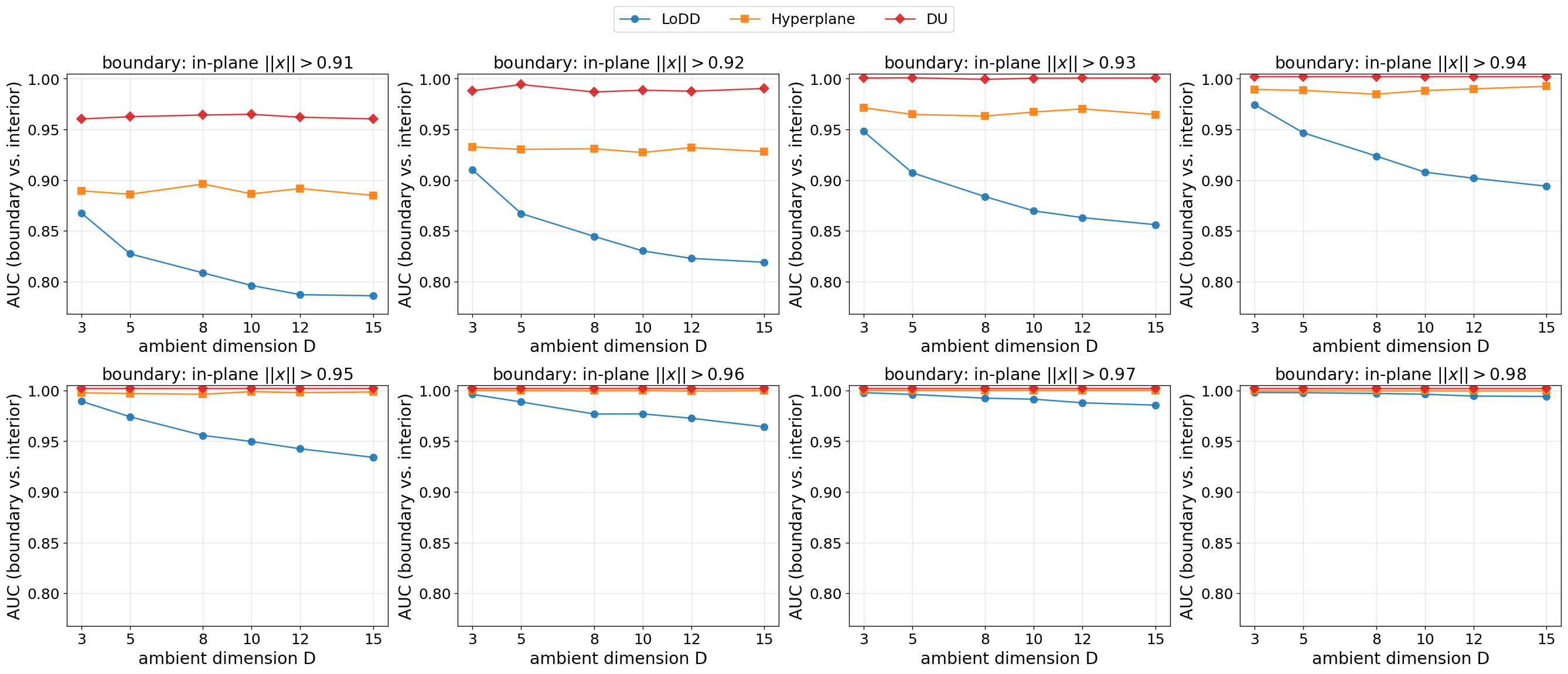}
    \caption{Point clouds are sampled from 2D disks embedded in $\mathbb{R}^{D}$ for $D = 3, 5, 8, 10, 12, 15$ with Gaussian noise level $\sigma = .01$. Each point cloud has $N = 10,000$ points, and we use neighborhood size $k = 100$. Interior points are defined to be points with norm less than $.8$, and boundary points are defined to be points with norm larger than $\tau_B$, which increases from $\tau_B = .91$ to $\tau_B = .98$ across the panels. In each panel, we plot the AUC values as a function of ambient dimension $D$ for each of the three methods.}
\label{fig:disks_high_ambient_AUC_sweep}
\end{figure}

In \Cref{fig:disks_increasing_sigma}(B), we calculated AUC values for 2D disks embedded in $\R^3$ with increasing noise $\sigma \in [.01, .05]$, at fixed boundary threshold $\tau_B = .95$. In \Cref{fig:disks_increasing_sigma_AUC_sweep}, we show the AUC values for varying thresholds $\tau_B \in \{.91, \ldots, .98\}$. For $\tau_B \geq .95$, the relative performance of each method is the same as when $\tau = .95$, where the hyperplane method with fixed dimension does best, closely followed by DU. For $.91 \leq \tau_B \leq .94$, DU outperforms every method at the lowest noise levels $\sigma = .01$ and $\sigma = .02$.

\begin{figure}
    \centering
    \includegraphics[width = \textwidth]{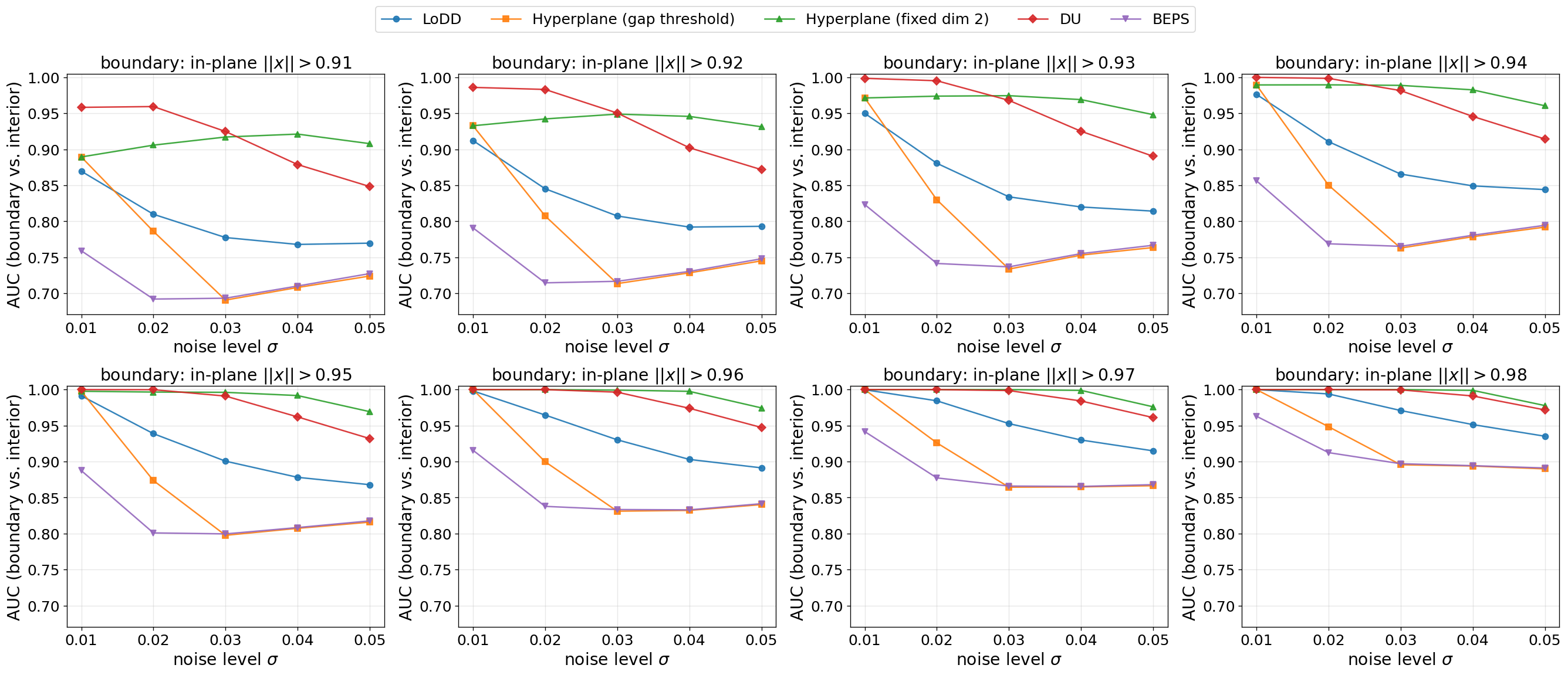}
    \caption{At each noise level $\sigma$, we sample a point cloud from a 2D disk in $\R^3$ with isotropic Gaussian noise. Each point cloud has $N = 10,000$ points, and we use neighborhood size $k = 100$. Interior points are defined to be points with norm less than $.8$, and boundary points are defined to be points with norm larger than $\tau_B$, which increases from $\tau_B = .91$ to $\tau_B = .98$ across the panels. In each panel, we plot the AUC values as a function of noise level $\sigma$ for each of the three methods.}
    \label{fig:disks_increasing_sigma_AUC_sweep}
\end{figure}

In \Cref{fig:disks_increasing_sigma}(C), we calculated AUC values for 2D disks embedded in $\R^D$ with increasing dimension $D$ and fixed noise level $\sigma = .05$, at fixed boundary threshold $\tau_B = .95$. In \Cref{fig:disks_increasing_D_sig.05_AUC_sweep}, we show the AUC values for varying thresholds $\tau_B \in \{.91, \ldots, .98\}$. At all threshold $\tau_B$, the hyperplane method with fixed dimension performs best, followed by DU, followed for LoDD, followed by BEPS and the hyperplane method with a spectral gap threshold condition.

\begin{figure}
    \centering
    \includegraphics[width = \textwidth]{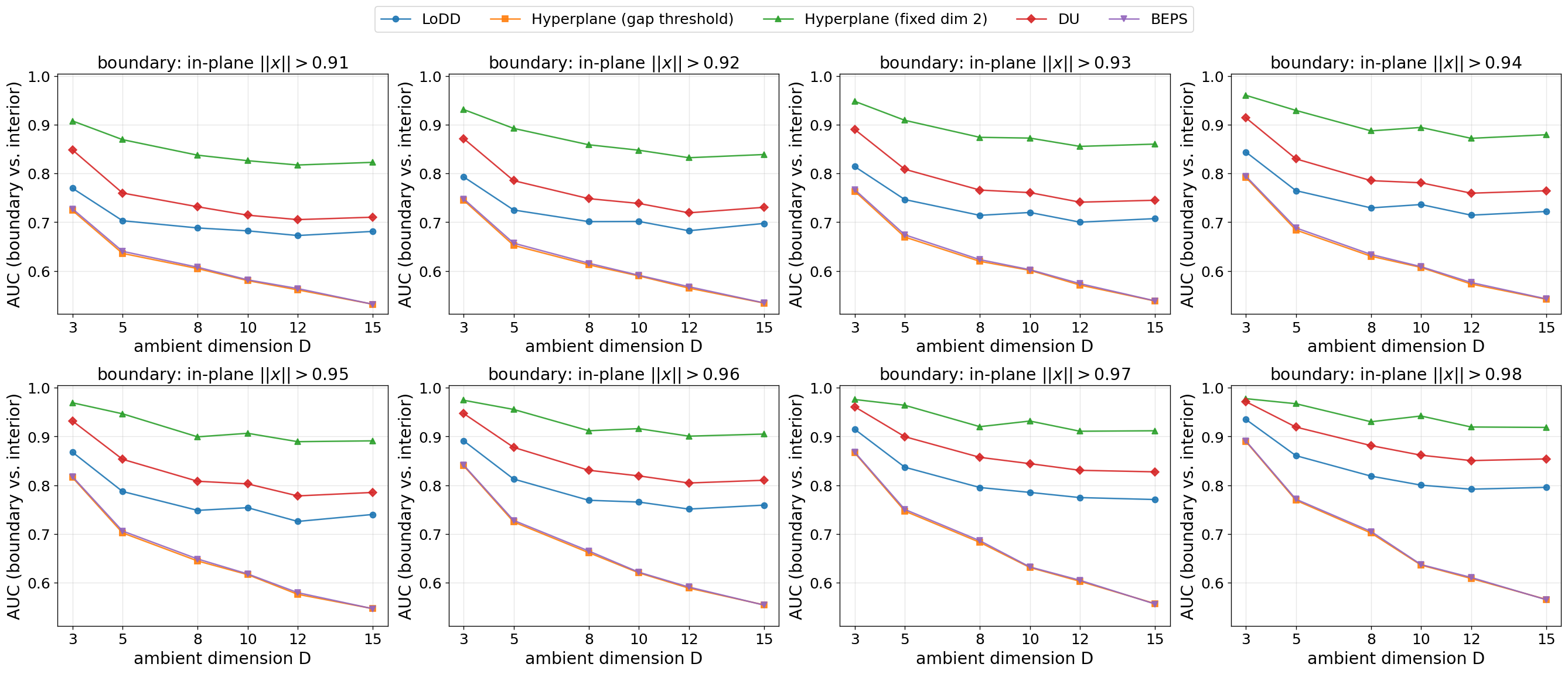}
    \caption{At each ambient dimension $D$, we sample a point cloud from a 2D disk in $\R^D$ with isotropic Gaussian noise $\sigma = .05$. Each point cloud has $N = 10,000$ points, and we use neighborhood size $k = 100$. Interior points are defined to be points with norm less than $.8$, and boundary points are defined to be points with norm larger than $\tau_B$, which increases from $\tau_B = .91$ to $\tau_B = .98$ across the panels. In each panel, we plot the AUC values as a function of ambient dimension $D$ for each of the four methods.}
    \label{fig:disks_increasing_D_sig.05_AUC_sweep}
\end{figure}

\FloatBarrier
\subsection{Spatial Transcriptomics}\label{sec:ST_details}

\subsubsection{Data preprocessing}

The raw data has $741$ spatial spots and $27,984$ genes. 

\paragraph{Quality control.} 
We do gene quality control by discarding genes that are detected at fewer than $3$ spots. This results in the removal of $12,372$ genes.

\paragraph{Library-size normalization.} We normalize each spot to a common total count of $10,000$. More precisely, if $x_{ig}$ is the number of transcripts of gene $g$ at spot $i$, the library size at spot $i$ is $L_i = \sum_g x_{ig}$. We rescale the counts $x_{ig} := (x_{ig}/L_i)\times 10,000$ so that each spot has library size $10,000$.

\paragraph{Log transform.} For each spot $i$ and gene $g$, we apply a log transform $x_{ig} := \log(1 + x_{ig})$, where $x_{ig}$ is the output from the library-size normalization step.

\paragraph{Highly variable genes.} 
We select the $3,000$ most highly-variable genes, discarding the rest.

\paragraph{Scaling.} We normalize each gene so that it has mean $0$ and variance $1$ by mapping each $x_{ig}$ to its $z$-score. More precisely, if $\mu_g$ and $\sigma_g$ are the mean and standard deviation for gene $g$ expression across spots, we calculate $x_{ig} := (x_{ig} - \mu_g)/\sigma_g$. We clip values at max $z$-score $10$, i.e., $x_{ig} := \min(x_{ig}, 10)$, so that no single gene dominates the PCA step below.

\paragraph{PCA dimension reduction.} Let $x_i = (x_{ig})_g$ be the gene expression values at spot $i$ after the preprocessing steps above. We use PCA to reduce $\{x_i\}_i$ to $30$ dimensions.

\subsubsection{Neighborhood calculation for the boundary detection algorithms}

Both HP and DU require a local neighborhood at each point (see lines 2 of \Cref{alg:boundary_hyperplane} and \Cref{alg:boundary_neighborhood_uniformity}, respectively). We define the neighborhoods as follows.

For each spot, we concatenate the $30$-dimensional PCA gene expression coordinates with the $2$D spatial coordinates, yielding a $32$-dimensional vector at each spot. The gene and spatial coordinates have very different scales, so to determine the nearest neighbors of each spot, it does not make sense to naively calculate Euclidean distance in $\R^{32}$. 

Instead, we consider spatial and gene distance independently. For integer $k'$, let $N^g(k', i)$ denote the nearest $k'$ neighbors of the $i$th spot with respect to gene coordinates (Euclidean distance in $\R^{30}$), and let $N^s(k', i)$ denote the nearest $k'$ neighbors of the $i$th spot with respect to spatial coordinates (Euclidean distance in $\R^2$). At neighborhood size parameter $k$, we define the neighborhood of spot $i$ to be the intersection
\begin{equation*}
    N^g(k', i) \cap N^s(k', i)
\end{equation*}
for the minimum integer $k'$ such that the intersection has size $k$.

\subsection{LLM Activation Geometry}\label{sec:llm_details}

In \Cref{sec:llm}, we applied our boundary detection algorithms to study LLM activation geometry. The results for Pythia-2.8B were shown in \Cref{fig:pythia}, and the results for Llama-3.1-8B were shown in \Cref{fig:llama}. Here, we detail our methodology.

\subsubsection{Data set}

We stream documents from the C4 (English) data set \citep{c4}, keeping those that have at least $400$ characters excluding whitespace. For each document and each $k \in \{15, 20, \ldots, 150\}$, we remove all newlines and then re-insert them every $\leq k$ characters, breaking at the nearest word boundary $\leq k$ so that each line has at most $k$ characters.\footnote{Occasionally, we encounter a token with $\geq k$ characters itself. These tokens are put onto their own line, thus creating a line that is over $k$ characters. These lines are rare.} We continue extracting documents from C4 until we reach at least $250,000$ non-newline tokens in total. This is satisfied by the first $28$ documents. (See \Cref{tab:tokens} for total token counts for Pythia-2.8B and Llama-3.1-8B.) This process yields $28$ documents $\times 28$ widths $=784$ total sequences. We cap each sequence at $512$ tokens, including newlines.

Each non-newline token is labeled by its character position $i$ and line width $w \in \{15, 20, \ldots, 150\}$. The character position of a token is defined to be the number of characters used from the beginning of the line through the end of the token.

\begin{table}[h]
  \centering
  \begin{tabular}{|l|r|r|}
    \hline
     & Pythia-2.8B & Llama-3.1-8B \\
    \hline
    Total tokens         & 271{,}100 & 269{,}651 \\
    Non-newline tokens   & 250{,}437 & 250{,}641 \\
    \hline
  \end{tabular}
  \caption{Number of tokens extracted for each model.}
  \label{tab:tokens}
\end{table}

\subsubsection{Models and layer selection}

We test two models, Pythia-2.8B and Llama-3.1-8B, which have dimensions $D = 2560$ and $D = 4096$, respectively. Both models have $32$ layers. For each token in our data set, we extract the residual-stream activations at layer $4$ in both models. We chose layer $4$ after visual inspection of the averaged character-counting manifold for candidate layers $4$, $11$, $16$, $24$, and $29$.

\subsubsection{Averaged manifold}

For each $(i, w)$, we average the residual-stream vectors at layer $4$ for tokens with character position $i$ and line width $w$. We discard $(i, w)$ if there are fewer than $5$ tokens with position $i$ and line width $w$. This gives us $N = 2,311$ points for the Pythia-2.8B manifold and $N = 2,335$ points for the Llama-3.1-8B, manifold.\footnote{If every token had position $1 \leq i \leq w$, there would be at most $2,310$ possible labels $(i, w)$. For a line of width $w$, every token has position $1 \leq i \leq w$ with the exception of tokens that have length $\geq w$. A token with more than $w$ characters forms its own line. This gives us occasional tokens with label $(i, w)$ for $i > w$, which is why the averaged manifolds for both Pythia-2.8B and Llama-3.1-8B have more than $2,310$ points.}

\subsubsection{Boundary detection on the averaged manifold}

First, we use PCA to reduce the averaged manifold ($N \approx 2300$ for each model) to $30$ dimensions. Then, we run the DU method and the hyperplane method with fixed dimension $2$, both with neighborhood size $k = 100$. (The manifold is intrinsically $2$-dimensional because it is parameterized by character position $i$ and line width $2$.)

The hyperplane method yields a vector $\vec{w}$, in local PCA coordinates, at each point, which we lift to a vector $\vec{w}_{\text{amb}}$ in the ambient $30$-dimensional PCA basis. The DU method gives us a vector $\vec{v_i} \in \R^{30}$ at each point. In \Cref{fig:pythia} and \Cref{fig:llama}, we show the vectors for points whose boundary scores are in the top decile.

\subsubsection{Boundary detection on the un-averaged, per-token manifold}

Instead of averaging the residual-stream activations for each $(i, w)$, we consider the entire point cloud of size size $N \approx 250,000$ (one point per token). Again, we use PCA to reduce to $30$ dimensions. Then, we run the DU method and the hyperplane method with neighborhood size $k = 100$. We use a spectral gap threshold condition (with default threshold $\tau = 7$) for the hyperplane method instead of fixing dimension $2$ because the intrinsic dimension is no longer $2$. At each $(i, w)$, we average the boundary scores for all tokens at position $i$ within a line of width $w$.

\subsubsection{3D visualization}

For both models, the first principal component of the averaged manifold is dominated by a few outliers, so in \Cref{fig:pythia} and \Cref{fig:llama}, we plot in PC2-PC3-PC4 coordinates.

\section{Other boundary detection variants}

Both CO \citep{gravitational} and BEPS \citep{beps} use the vector $\vec{v_i} := \fracc{k} \sum_j \frac{x_{ij} - x_i}{\norm{x_{ij} - x_i}}$ at each point $x_i$, where $\{x_{ij}\}$ are its $k$ nearest neighbors. The CO score at point $x_i$ is 
\begin{equation*}
    k^2  \sum_{j=1}^k m_i^{-1} m_j^{-1}\vec{v_i} \cdot \vec{v_j}\,,
\end{equation*}
where $m_i^{-1} = \sum_j \norm{x_{ij} - x_i}$. The CO score is high on low-density points, regardless of whether they are interior or boundary points, so the CO method struggles with non-uniform density. The difference between our DU method and the CO method is that we removed the term $m_i^{-1} m_j^{-1}$ and normalized the score so that it is always less than or equal to $1$.

We tested several other boundary detection variants (see \Cref{tab:variants}), with results shown in \Cref{fig:other_variants}. Each variant uses either the vector $\vec{v_i}$ or a slightly different vector $\vec{u_i} := \sum_j (x_{ij} - x_i)$ where the neighbor-displacement vectors are not normalized, and then each variant has a score that is constructed from the chosen vector. The cosine variant uses $\vec{v_i}$ but measures the cosine between $\vec{v_i}$ and $\sum_j \vec{v_j}$ instead of taking the dot product; this variant isolates the effect of neighbor alignment between vectors, which was the rationale in \citet{gravitational}. We find that the cosine variant works poorly because the vectors of \emph{all} neighbors tend to be similar due to the fact that their neighborhoods are similar, regardless of whether they are on a boundary or not. The magnitude variant $\norm{\vec{v_i}}$ works almost as well as DU on disks and $n$-balls, but as discussed in \Cref{sec:neighborhood_uniformity}, the DU variant is more robust to noise. The two variants that use $\vec{u_i}$ do not work well with non-uniform density because $\norm{\vec{u_i}}$ is large when the density is low. The DU variant was the best overall performer across varying noise levels, varying dimension, and non-uniform density.

\begin{table}[htbp]
\centering
\begin{tabular}{lll}
\hline
Variant & Vector & Score \\
\hline
 CO \citep{gravitational} &  $\vec{v_i}$ &  $k^2  \sum_{j=1}^k m_i^{-1} m_j^{-1}\vec{v_i} \cdot \vec{v_j}$ \\
 Directional uniformity &  $\vec{v_i}$  &  $\sum_{j=1}^k \vec{v_i} \cdot \vec{v_j} $\\
 CO magnitude &  $\vec{v_i}$  &  $\norm{\vec{v_i}}$\\
 Cosine &  $\vec{v_i}$ &  $\cos(\vec{v_i}, \sum_j \vec{v_j})$\\
Sum displacement &  $\vec{u_i}$ &  $\sum_j \vec{u_i} \cdot \vec{u_j} $\\
 Mean direction &  $\vec{u_i}$ &  $\sum_j \cos(\vec{u_i}, \vec{u_j})$ \\
\hline
\end{tabular}
\caption{Other boundary-detection methods that we tested.}
\label{tab:variants}
\end{table}

\begin{figure}
    \centering
    \subfloat[]{\includegraphics[width = \textwidth]{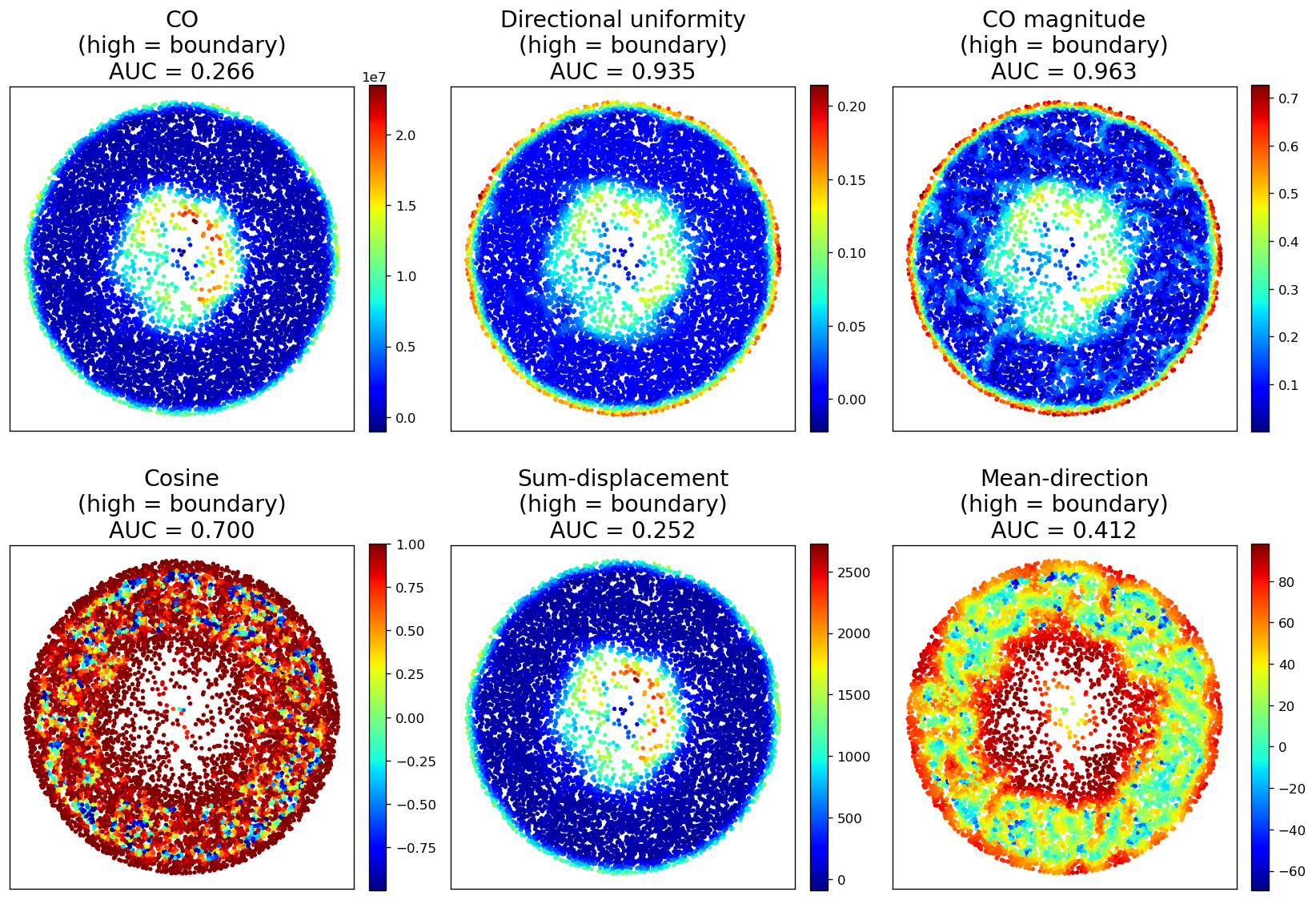}}\\
    \subfloat[]{\includegraphics[width = \textwidth]{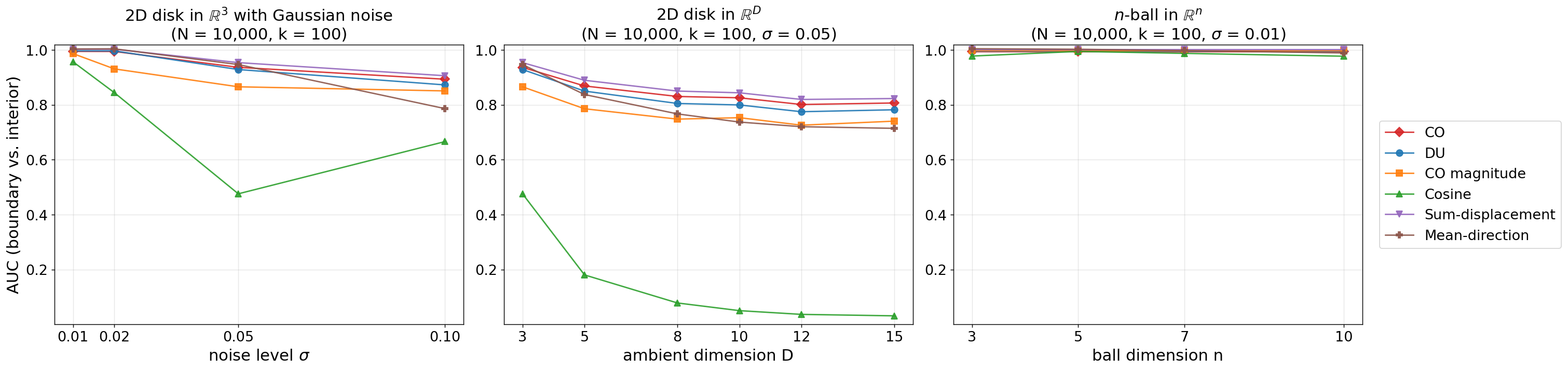}}
    \caption{We compare the variants from \Cref{tab:variants}. (A) Scores on the non-uniformly sampled disk from \Cref{fig:density} with $N = 10,000$ points and neighborhood size $k = 100$. AUROC scores are for boundary points (norm $> .98$) vs deep interior points (norm $< .23$), like in \Cref{fig:density}. AUROC scores for boundary vs. interior points on 2D disks and $n$-balls, where interior and boundary points are defined as having norms $< .8$ and $> .95$, respectively.}
    \label{fig:other_variants}
\end{figure}

\end{document}